%%%%%%%% ICML 2019 EXAMPLE LATEX SUBMISSION FILE %%%%%%%%%%%%%%%%%

\documentclass{article}

% Recommended, but optional, packages for figures and better typesetting:
\usepackage{microtype}
\usepackage{graphicx}
\usepackage{subfigure}
\usepackage{booktabs} % for professional tables
\usepackage{fancyhdr}
\usepackage{lastpage}
\usepackage{extramarks}
\usepackage{chngpage}
\usepackage{soul,color}
\usepackage{graphicx,float,wrapfig}
\usepackage{ifpdf}
\usepackage{indentfirst}
\usepackage{galois}
\usepackage{listings}
\usepackage{enumitem}
\usepackage{amsmath}
\usepackage{amsfonts}
\usepackage{amsthm}
\usepackage{amssymb}
\usepackage{bm}
\usepackage{hyperref}
\usepackage{cleveref}
\usepackage{multicol}

% Attempt to make hyperref and algorithmic work together better:

% Use the following line for the initial blind version submitted for review:
\usepackage[accepted]{icml2020}

\makeatletter
\newif\if@restonecol
\makeatother

\usepackage[algo2e,ruled,vlined,linesnumbered]{algorithm2e}

\newcommand*{\defeq}{\stackrel{\text{def}}{=}}

\newcommand{\poly}{\mathrm{poly}}
\newcommand{\E}{\mathop{\mathbb{E}}}

\newcommand{\Z}{\mathbb{Z}}
\newcommand{\hv}{\hat{v}}
\newcommand{\htv}{\hat{\bm{v}}}
\newcommand{\htheta}{\hat{\theta}}
\newcommand{\bv}{\bar{v}}

\newcommand{\calQ}{\mathcal{Q}}
\newcommand{\Exp}{\textsc{Exploration}}
\newcommand{\optS}{S^{\star}}
\newcommand{\optt}{\theta^{\star}}
\newcommand{\ind}{\mathbb{I}} % indicator function
\newcommand{\Talgo}{\text{ESUCB}\xspace} % trisection algorithm

\newcommand{\chk}{{\textsc{Check}}\xspace}
\newcommand{\tmax}{t_{\rm max}}
\newcommand{\true}{{\sf true}}
\newcommand{\false}{{\sf false}}

\newcommand{\abs}[1]{\left| #1 \right|}

\newcommand{\regret}{{\rm Reg}}
\newcommand{\expect}[1]{\mathbb{E}\left[#1\right]}

\newcommand{\given}{\;\middle|\;}
\newcommand{\Update}{\textsc{Update}}
\newcommand{\cT}{\mathcal{T}}

\newcommand{\event}{{\cal E}}

\newcommand{\AS}{\Psi^{(\mathrm{asst})}}
\newcommand{\IS}{\Psi^{(\mathrm{item})}}

\newtheorem{theorem}{Theorem}
\newtheorem{lemma}[theorem]{Lemma}

\newtheorem{fact}[theorem]{Fact}
\newtheorem{observation}[theorem]{Observation}

\usepackage{natbib}
\begin{document}

\twocolumn[
\icmltitle{Multinomial Logit Bandit with Low Switching Cost}

% It is OKAY to include author information, even for blind
% submissions: the style file will automatically remove it for you
% unless you've provided the [accepted] option to the icml2019
% package.

% List of affiliations: The first argument should be a (short)
% identifier you will use later to specify author affiliations
% Academic affiliations should list Department, University, City, Region, Country
% Industry affiliations should list Company, City, Region, Country

% You can specify symbols, otherwise they are numbered in order.
% Ideally, you should not use this facility. Affiliations will be numbered
% in order of appearance and this is the preferred way.
\icmlsetsymbol{equal}{*}

\begin{icmlauthorlist}
\icmlauthor{Kefan Dong}{kd}
\icmlauthor{Yingkai Li}{yl}
\icmlauthor{Qin Zhang}{qz}
\icmlauthor{Yuan Zhou}{yz}
\end{icmlauthorlist}

\icmlaffiliation{kd}{Institute for Interdisciplinary Information Sciences, Tsinghua University, Beijing, China.}
\icmlaffiliation{yl}{Department of Computer Science, Northwestern University, Evanston, Illinois, USA.}
\icmlaffiliation{qz}{Computer Science Department,
Indiana University, Bloomington, Indiana, USA.}
\icmlaffiliation{yz}{Department of ISE, University of Illinois at Urbana-Champaign, Urbana, Illinois, USA}

\icmlcorrespondingauthor{Yuan Zhou}{yuanz@illinois.edu}

% You may provide any keywords that you
% find helpful for describing your paper; these are used to populate
% the "keywords" metadata in the PDF but will not be shown in the document
\icmlkeywords{Machine Learning, ICML}

\vskip 0.3in
]

% this must go after the closing bracket ] following \twocolumn[ ...

% This command actually creates the footnote in the first column
% listing the affiliations and the copyright notice.
% The command takes one argument, which is text to display at the start of the footnote.
% The \icmlEqualContribution command is standard text for equal contribution.
% Remove it (just {}) if you do not need this facility.

\printAffiliationsAndNotice{Author names are listed in alphabetical order. }  % leave blank if no need to mention equal contribution
%\printAffiliationsAndNotice{\icmlEqualContribution} % otherwise use the standard text.

\begin{abstract}
We study multinomial logit bandit with limited adaptivity, where the algorithms change their exploration actions as infrequently as possible when achieving almost optimal minimax regret. We propose two measures of adaptivity: the assortment switching cost and the more fine-grained item switching cost. We present an anytime algorithm (AT-DUCB) with $O(N \log T)$ assortment switches, almost matching the lower bound $\Omega(\frac{N \log T}{ \log \log T})$. In the fixed-horizon setting, our algorithm FH-DUCB incurs $O(N \log \log T)$ assortment switches, matching the asymptotic lower bound. We also present the ESUCB algorithm with item switching cost $O(N \log^2 T)$.
\end{abstract}

\allowdisplaybreaks

\section{Introduction}
\label{sec:intro}

The dynamic assortment selection problem with the multinomial logic (MNL) choice model, also called MNL-bandit, is a fundamental problem in online learning and operations research.  In this problem we have $N$ distinct items, each of which is associated with a known reward $r_i$ and an {\em unknown} preference parameter $v_i$.  In the MNL choice model, given a subset $S \subseteq [N] \defeq \{1, 2, 3, \dots, N\}$, the probability that a user chooses $i \in S$ is given by  
\begin{equation}
\label{eq:MNL}
p_i(S) = 
\left\{
\begin{array}{lr}
    \dfrac{v_i}{v_0 + \sum_{j \in S} v_j} & \text{if}~i \in S \cup \{0\} \\
    0 & \text{otherwise}
\end{array}
\right. ,
\end{equation}
where ``$0$'' stands for the case that the user does not choose any item, and $v_0$ is the associated preference parameter. As a convention (see, e.g.\ \citealp{AAGZ19}), we assume that no-purchase is the most frequent choice, which is very natural in retailing.  W.l.o.g., we assume $v_0 = 1$, and $v_i \le 1$ for all $i \in [N]$.  The {\em expected reward} of the set $S$ under the preference vector $\bm{v} = \{v_0, v_1, \ldots, v_N\}$ is defined to be 
\begin{equation}
R(S, \bm{v})  = \sum_{i \in S} r_i p_i(S) = \sum_{i \in S} \dfrac{r_i v_i}{1 + \sum_{j \in S} v_j}.
\end{equation}
For any online policy that selects a subset $S_t \subseteq [N]$ \ ($\abs{S_t} \le K$, \text{where $K$ is a predefined capacity parameter}) at each time step $t$, observes the user's choice $a_t$ to gradually learn the preference parameters $\{v_i\}$, and runs for a horizon of $T$ time steps, we define the \emph{regret} of the policy to be
\begin{equation}
\label{eq:regret}
\regret_T \defeq \sum_{t=1}^{T}\left(R(S^\star,\bm{v})- R(S_t,\bm{v})\right),
\end{equation}
where $S^\star = \arg \max_{S \subseteq [N], \abs{S} \le K} R(S, \bm{v})$ is the optimal assortment in hindsight. The goal is to find a policy to minimize the expected regret $\E[\regret_T]$ for all MNL-bandit instances.

To motivate the definition of the MNL-bandit problem, let us consider a fast fashion retailer such as Zara or Mango.  Each of its product corresponds to an item in $[N]$, and by selling the $i$-th item the retailer takes a profit of $r_i$.  At each specific time in each of its shops, the retailer can only present a certain number of items (say, at most $K$) on the shelf due to the space constraints. As a consequence, customers who visit the store can only pick items from the presented assortment (or, just buy nothing which corresponds to item $0$), following a choice model.  There has been a number of choice models being proposed in the literature (see, e.g., \cite{Train09,Luce12} for overviews), and the MNL model is arguably the most popular one. The retailer certainly wants to maximize its profit by identifying the best assortment $S^\star$ to present. However, it does not know in advance customers' preferences to items in $[N]$ (i.e., the preference vector $\bm{v}$), to get which it has to learn from customers' actual choices.  More precisely, the retailer needs to develop a policy to choose at each time step $t$ an assortment $S_t \subseteq [N]\ (\abs{S_t} \le K)$ based on the previous presented assortments $S_1, \ldots, S_{t-1}$ and customers' choices in the past $(t-1)$ time steps.  The retailer's  expected reward in a time horizon $T$ can be expressed by $\sum_{t=1}^T R(S_t, \bm{v})$, which is typically reformulated as the regret compared with the best policy in the form of (\ref{eq:regret}). 

The MNL-bandit problem has attracted quite some attention in the past  decade~\cite{RSS10,SZ13,AAGZ16,AAGZ17,CW18}.  However, all these works do not consider an important practical issue for regret minimization: in reality it is often impossible to {\em frequently} change the assortment display.  For example, in retail stores it may not be possible to change the display in the middle of the day, not mentioning doing it after each purchase.  We thus hope to minimize the number of assortment switches in the selling time horizon {\em without} increasing the regret by much.  Another advantage of achieving a small number of assortment switches is that such algorithms are easier to parallelize, which enables us to learn users' preferences much faster.  This feature is particularly useful in applications such as online advertising where it is easy to show the same assortment (i.e., a set of ads) in a large amount of end users' displays simultaneously.

We are interested in two kinds of switching costs under a time horizon $T$.  The first is the {\em assortment switching cost}, defined as
\begin{equation*}
%\label{eq:asst}
\AS_T \defeq \sum_{t=1}^T \ind[S_t\neq S_{t+1}].
\end{equation*}
The second is the {\em item switching cost}, defined as 
\begin{equation*}
%\label{eq:item}
\IS_T \defeq \sum_{t=1}^T \left| S_t \oplus S_{t+1} \right|,
\end{equation*}
where binary operator $\oplus$ computes the symmetric difference of the two sets. In comparison, the item switching cost is more fine-grained and put less penalty if two neighboring assortments are ``almost the same''. As a straightforward observation, we always have that
\begin{align}
\AS_T \le \IS_T \le \min\{2K, N\} \cdot \AS_T. \label{eq:AS-IS-relation}
\end{align}

\paragraph{Our results.}  
In this paper we obtain the following results for MNL-bandit with low switching cost.  By default all $\log$'s are of base $2$.

We first introduce an algorithm, AT-DUCB, that achieves almost optimal regret (up to a logarithmic factor) and incurs an assortment switching cost of $O(N \log T)$; this algorithm is \emph{anytime}, i.e., it does {\em not} need to know the time horizon $T$ in advance.  We then show that the AT-DUCB algorithm achieves almost optimal assortment switching cost. In particular, we prove that every anytime algorithm that achieves almost optimal regret must incur an assortment switching cost of at least $\Omega(N \log T / \log \log (NT))$.  These results are presented in Section~\ref{sec:AS-logT}.

When the time horizon is known beforehand, we obtain an algorithm, FH-DUCB, that achieves almost optimal regret (up to a logarithmic factor) and incurs an assortment switching cost of $O(N \log\log T)$. We also prove the optimality of this switching cost by establishing a matching lower bound.  See Section~\ref{sec:AS-loglogT}.

For item switches, while the trivial application of \eqref{eq:AS-IS-relation} leads to $O(N^2 \log T)$ and $O(N^2 \log \log T)$ item switching cost bounds for AT-DUCB and FH-DUCB respectively, in Section~\ref{sec:IS}, we design a new algorithm, ESUCB, to achieve an item switching cost of $O(N \log^2 T)$.  In Appendix~\ref{app:ESUCB-further-improve}, we show that a more careful modification to the algorithm further improves the item switching cost to $O(N \log T)$.
%we prove that AT-DUCB incurs $O((N \log T)^{1.5})$ item switching cost, an improvement when $T \ll e^{N}$. We also obtain an algorithm that achieves almost optimal regret and incurs an item switching cost of $O(N \log^2 T)$.  These results are presented in Section~\ref{sec:IS}.

We make two interesting observations from the results above: (1) there is a {\em separation} between the assortment switching complexities when knowing the time horizon $T$ and when not; in other words, the time horizon $T$ is useful for achieving a smaller assortment switching cost; (2) the item switching cost is only at most a logarithmic factor higher than the assortment switching cost.

\paragraph{Technical contributions.}
We combine the epoch-based offering algorithm for MNL-bandits \cite{AAGZ19} and a natural delayed update policy in the design of AT-DUCB. Although a similar delayed update rule has been recently analyzed for multi-armed bandits and Q-learning~\cite{BXJW19}, and such a result does not seem surprising, we present it in the paper as a warm-up to help the readers get familiar with a few algorithmic techniques commonly used for the MNL-bandit problem.

Our first main technical contribution comes from the design of FH-DUCB algorithm, where we invent a novel delayed update policy that uses the horizon information to improve the switching cost from $O(N \log T)$ to $O(N \log \log T)$.  We note that for the ordinary multi-armed bandit problem, recent works \cite{GHRZ19} and \cite{simchi2019phase} managed to show a similar $O(N \log \log T)$ switching cost with known horizon. However, their update rules do not have to utilize the learned parameters for the arms, and a straightforward conversion of such update rules to the MNL-bandit problem does not produce the desired guarantees. In contrast, our update rule, formally described in \eqref{eq:calP}, carefully exploits the structure of the MNL-bandits and uses the information of the partially learned preference parameters (more specifically, $\hat{v}_{i, \tau_i}$ in \eqref{eq:calP}) to adaptively decide when to switch to a different assortment.

Our second main technical contribution is the ESUCB algorithm for the low item switching cost. The technical challenge here stems from the fact that the low item switching cost is a much stronger requirement than the low assortment switching cost, and simple lazy updates with the doubling trick and the straightforward analysis will show that the item switching cost is at most $N$ times the assortment switching cost (see \eqref{eq:AS-IS-relation}), leading to a total item switching cost of $O(N^2 \log T)$. To reducing the extra factor $N$, we propose the idea of decoupling the learning for the optimal revenue and the assortment, so that the offering of the assortment is decided via optimizing a new objective function based on the (usually) fixed revenue estimate.  Since the revenue estimates are fixed, the offered assortments enjoy improved stability, and the item switching cost can be upper bounded by careful analysis.

 %In the outer learning loop, the algorithm learns the optimal revenue with an exponential sequence of learning strides. In the inner learning loop, for the fixed revenue estimate, the algorithm learns the best assortment. Since the revenue estimates are fixed, the offered assortments enjoys improved stability, and the number of item switches is therefore upper bounded.
 
We remark that the item switching cost is a particularly interesting goal that arises in online learning problems when the actions are sets of elements, which is very different from traditional MAB and linear bandits. Thanks to our novel technical ingredients, we are able to bring the item switching cost down to almost the same order as the assortment switching cost. We  hope our results will inspire future study of the switching costs in both settings for other online learning problems with set actions.

\paragraph{Related work.}
MNL-bandit was first studied in \cite{RSS10} and \cite{SZ13}, where the authors took the ``explore-then-commit'' approach, and proposed algorithms with regret $O(N^2 \log^2 T)$ and $O(N \log T)$ respectively under the assumption that the gap between the best and second-to-the-best assortments is known. \cite{AAGZ16} removed this assumption using a UCB-type algorithm, which achieves a regret of $O(\sqrt{NT \log T})$.  An almost tight regret lower bound of $\Omega(\sqrt{NT})$ was later given by \cite{CW18}. \cite{AAGZ17} proposed an algorithm using Thompson Sampling, which achieves comparable regret bound to the UCB-type algorithms while demonstrates a better numerical performance.

Learning with low policy switches (also called learning in the {\em batched model} or {\em limited adaptivity}) has recently been studied in reinforcement learning for several other problems, including stochastic multi-armed bandits~\cite{PRCS15,JJNZ16,AAAK17,GHRZ19,EKMM19,simchi2019phase}, Q-learning~\cite{BXJW19}, and online-learning~\cite{CDS13}.  This research direction is motivated by the fact that in many practical settings, the change of learning policy is very costly.  For example, in clinical trials, every treatment policy switch would trigger a separate approval process. In crowdsourcing, it takes time for the crowd to answer questions, and thus a small number of rounds of interactions with the crowd is desirable. 
% In personalized recommendation systems, it is computationally infeasible to change the ranking policy every time a new data item is received.  
The performance of the learning would be much better if the data is processed in batches and during each batch the learning policy is fixed.

%\paragraph{Asymptotic notations.} For two sequences $\{a_n\}$ and $\{b_n\}$, we write $a_n=O(b_n)$ or $a_n\lesssim b_n$ if there exists a \emph{universal} constant $C<\infty$
%such that $\limsup_{n\to\infty} |a_n|/|b_n|\leq C$.
%Similarly, we write $a_n=\Omega(b_n)$ or $a_n\gtrsim b_n$ if there exists a \emph{universal} constant $c>0$ such that $\liminf_{n\to\infty} |a_n|/|b_n| \geq c$.
%We write $a_n=\Theta(b_n)$ or $a_n\asymp b_n$ if both $a_n\lesssim b_n$ and $a_n\gtrsim b_n$ hold.
%In asymptotic notations, we will drop base notations of logarithms and use instead $\log x$ for both $\ln x,\log_2 x$ as well as logarithms with other constant base numbers.
%In non-asymptotic scenarios, however, base notations will not be dropped and $\ln x$ refers specifically to $\log_e x$.

\section{Warm-up: An anytime algorithm with $O(N \log T)$ assortment switches} \label{sec:AS-logT}

As a warm-up, we begin with a simple anytime algorithm using at most $O(N \log T)$ assortment switches. Our algorithm combines the epoch-based offering framework introduce in \cite{AAGZ16} and a deferred update policy. We will first briefly explain the epoch-based offering procedure, and then present and analyze our algorithm.

\paragraph{The epoch-based offering.} In the epoch-based offering framework, whenever we are to offer an assortment $S$, instead of offering it for only one time period, we keep offering $S$ until a no-purchase decision (item $0$) is observed, and refer to all the consecutive time periods involved in this procedure as an \emph{epoch}. The detailed offering procedure is described in Algorithm~\ref{alg:exploration}, where $t$ is the global counter for the time period, and $\{\Delta_i\}$ records the number of purchases made for each item $i$ in the epoch.

\begin{algorithm2e}[h]
	\caption{$\Exp(S)$}
	\label{alg:exploration}
		Initialize: $\Delta_i\gets 0$ for all $i\in [N]$\;
		\While{\textsc{true}} {
			$t\gets t+1$\;
			Offer assortment $S$, and observe purchase decision $a_t$\;
			\textbf{If} $a_t = 0$ \textbf{then} \textbf{return}  $\{\Delta_i\}$\;
%			\If {$c=0$}{
%				\Return
%			}
			$\Delta_{a_t}\gets \Delta_{a_t}+1$;
		}
\end{algorithm2e}

The following key observation for $\Exp(S)$ states that $\{\Delta_i\}$ forms an unbiased estimate for the utility parameters of all items in $S$.
\begin{observation}\label{ob:unbiased-est}
Let $\{\Delta_i\}$ be returned by $\Exp(S)$. For each $i \in S$, $\Delta_i$ is an independent geometric random variable  with mean $v_i$. Moreover, one can verify that  $\E[\Delta_i] = v_i$ and
\[
\Pr[\Delta_i = k] = \left(\frac{v_i}{1 + v_i}\right)^k \left(\frac{1}{1 +v_i}\right), \forall k \in \mathbb{N}.
\]
\end{observation}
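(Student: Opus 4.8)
The plan is to reduce the joint description of $\{\Delta_i\}$ to a one-dimensional computation for a single fixed item, using two facts about one call to $\Exp(S)$: the observed choices $a_1, a_2, \dots$ are i.i.d.\ draws from the MNL law \eqref{eq:MNL} on $S \cup \{0\}$, and the epoch terminates precisely at the first index $\tau$ with $a_\tau = 0$. Fix $i \in S$ and abbreviate $V \defeq \sum_{j \in S} v_j$, so that a single draw equals $i$ with probability $v_i/(1+V)$ and equals $0$ with probability $1/(1+V)$.

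First I would pin down the marginal law of $\Delta_i$ by a thinning argument: look only at the (random) set of time steps $t$ with $a_t \in \{i,0\}$. Conditioned on which steps these are, the i.i.d.\ property of $(a_t)$ makes the values at those steps an i.i.d.\ sequence that equals $i$ with probability $\frac{v_i/(1+V)}{v_i/(1+V)+1/(1+V)} = \frac{v_i}{1+v_i}$ and equals $0$ otherwise. Since the first $0$ of the whole run is the first $0$ of this thinned sequence, $\Delta_i$ is exactly the number of $i$'s preceding that first $0$, hence geometric:
\[
\Pr[\Delta_i = k] = \left(\frac{v_i}{1+v_i}\right)^{k}\frac{1}{1+v_i}, \qquad k \in \mathbb{N}.
\]
I expect the one genuinely delicate step to be making this thinning/stopping-time reduction rigorous; a fully elementary alternative that avoids it is to sum over the epoch length $m$ directly: there are $\binom{m}{k}$ ways to place the $k$ purchases of $i$ among the $m$ total purchases, so $\Pr[\Delta_i = k] = \sum_{m\ge k}\binom{m}{k}\bigl(\tfrac{v_i}{1+V}\bigr)^{k}\bigl(\tfrac{V-v_i}{1+V}\bigr)^{m-k}\tfrac{1}{1+V}$, and evaluating this with the negative-binomial identity $\sum_{j\ge0}\binom{k+j}{k}x^{j} = (1-x)^{-(k+1)}$ at $x = \frac{V-v_i}{1+V}$ collapses it to the displayed formula.

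Given the marginal law, $\E[\Delta_i] = \frac{v_i/(1+v_i)}{1-v_i/(1+v_i)} = v_i$ by the usual geometric-series identity; a cleaner route that avoids the PMF is Wald's identity: since $a_\tau = 0$ we may write $\Delta_i = \sum_{t=1}^{\tau}\ind[a_t = i]$ with $\tau$ a stopping time for the i.i.d.\ sequence $(a_t)$ and $\E[\tau] = 1+V < \infty$, so $\E[\Delta_i] = \E[\tau]\,\Pr[a_1 = i] = (1+V)\cdot\frac{v_i}{1+V} = v_i$. Finally, the independence in the statement should be read across the disjoint epochs in which item $i$ is offered: each such epoch runs on fresh, independent randomness, so the corresponding geometric counts are mutually independent, which is what is needed when averaging them into the estimates $\hat v_i$. (Within a single epoch the counts $\{\Delta_i\}_{i\in S}$ are coupled through the shared epoch length and hence not strictly independent, but that coupling is never used.)
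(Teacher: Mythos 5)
Your proof is correct, and there is nothing in the paper to compare it against: Observation~\ref{ob:unbiased-est} is stated without proof, essentially imported from the epoch-based framework of \cite{AAGZ16,AAGZ19}. Your derivation is a complete justification of it: the thinning reduction to the two-outcome subsequence $\{i,0\}$ gives the geometric law with success probability $1/(1+v_i)$, the elementary negative-binomial summation over the number of purchases in the epoch confirms the same pmf without any stopping-time machinery, and Wald's identity (with $\tau$ the first no-purchase time, $\E[\tau]=1+V$) gives $\E[\Delta_i]=v_i$ directly. One point in your write-up deserves emphasis because it sharpens the statement itself: the word ``independent'' cannot be read within a single epoch. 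For $S=\{1,2\}$ with $v_1=v_2=v_0=1$ one has $\Pr[\Delta_1=0,\Delta_2=0]=\tfrac13$ while $\Pr[\Delta_1=0]\Pr[\Delta_2=0]=\tfrac14$, so the counts $\{\Delta_i\}_{i\in S}$ are coupled through the epoch length, exactly as you note. The independence that the paper actually relies on---when averaging the per-epoch counts into $\bar v_i$ and applying geometric concentration, e.g.\ in the proof of Lemma~\ref{lem:event prob}, where $n_{i,\tau}$ is treated as a sum of $|\cT(i,\tau)|$ i.i.d.\ geometric variables---is independence across the distinct epochs in which item $i$ is offered, which holds because each epoch uses fresh randomness; your reading of the statement is the right one.
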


At any time of the algorithm when an epoch has ended, for each item $i \in [N]$, we let $\bar{v}_i = n_i  /T_i$ where $T_i$ is the number of the past epochs in which $i$ is included in the offered assortment, and $n_i$ is the total number of purchases for item $i$ during all past epochs. By Observation~\ref{ob:unbiased-est}, we know that $\bar{v}_i$ is also an unbiased estimate of $v_i$. In \cite{AAGZ16}, the following upper confidence bound (UCB) is constructed for each $i \in [N]$,
\begin{align}\label{eq:vi-ucb}
 \hv_i = \bv_i+\sqrt{\frac{48\bv_i\ln(\sqrt{N}\ell+1)}{T_i}}+\frac{48\ln(\sqrt{N}\ell+1)}{T_i}.
\end{align}
We will compute the assortment for the next epoch based on the vector of UCB values $\htv = (\hv_1, \hv_2, \dots, \hv_n)$.

%\paragraph{The anytime algorithm with deferred UCB updates.} 

\begin{algorithm2e}[h]
	\caption{Anytime Deferred Update UCB (AT-DUCB)}
	\label{alg:doubling-ucb}
		Initialize: $\hv_i\gets1, T_i\gets0$ for all $i\in [N]$, $t\gets 0$\;
		\For{$\ell\gets 1,2,3,\dots,$} {
			Compute $S_\ell=\mathop{\arg\max}_{S\subseteq [N] : |S| \leq K} R(S,\htv)$\; \label{line:optimization}
			$\{\Delta_i\}\gets \Exp(S)$\;
			\For{$i\in S$} {
				$n_i\gets n_i+\Delta_i$ and $T_i\gets T_i+1$\;
				\If{$T_i=2^{k}$ for some $k\in \Z$\label{line:doubling-if}}{
					$\bv_i\gets n_i/T_i$;
					$\hv_i\gets \min\big\{\hv_i,\bv_i+\sqrt{\frac{48\bv_i\ln(\sqrt{N}\ell+1)}{T_i}}+\frac{48\ln(\sqrt{N}\ell+1)}{T_i}\big\}$\;\label{line:doubling}
				}
			}
		}
\end{algorithm2e}

We describe our algorithm in Algorithm~\ref{alg:doubling-ucb}, which can be seen as an adaptation of the one in \cite{AAGZ16}. The main difference from \cite{AAGZ16} is that the UCB values (and hence the assortment) is updated only when $T_i$ reaches an integer power of $2$ for any item $i \in [N]$. This deferred update strategy is implemented in Line~\ref{line:doubling-if}. Also note that instead of directly evaluating \eqref{eq:vi-ucb}, the update in Line~\ref{line:doubling} makes sure that $\hv_i$ is non-increasing as the algorithm proceeds. We comment that the optimization task in Line~\ref{line:optimization} can be done efficiently, as studied in, for example, \cite{RSS10}.

\begin{theorem}\label{thm:ucb-doubling-regret}
For any time horizon $T$, the expect regret incurred by Algorithm~\ref{alg:doubling-ucb} is 
\[
\E\left[\regret_T\right]\lesssim \sqrt{NT \log T},
\]
and the expected number of assortment switches $\E[\AS_T]$ is  $O(N \log T)$. \footnote{For two sequences $\{a_n\}$ and $\{b_n\}$, we write $a_n=O(b_n)$ or $a_n\lesssim b_n$ if there exists a \emph{universal} constant $C<\infty$ such that $\limsup_{n\to\infty} |a_n|/|b_n|\leq C$. Similarly, we write $a_n=\Omega(b_n)$ or $a_n\gtrsim b_n$ if there exists a \emph{universal} constant $c>0$ such that $\liminf_{n\to\infty} |a_n|/|b_n| \geq c$.}
\end{theorem}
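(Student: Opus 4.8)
The two claims are proved separately: the switching bound by a direct counting argument, and the regret bound by importing the epoch-based UCB analysis of \cite{AAGZ16} while checking that the deferred updates cost only constant factors.

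\textbf{Assortment switches.} The offered assortment is constant throughout an epoch, so a switch can occur only at an epoch boundary, and $\AS_T$ is at most one plus the number of epochs $\ell$ with $S_\ell\neq S_{\ell-1}$. Fixing a deterministic tie-breaking rule in Line~\ref{line:optimization}, $S_\ell$ can differ from $S_{\ell-1}$ only if the UCB vector $\htv$ was modified in between. By Line~\ref{line:doubling-if}, $\hv_i$ changes only at the epochs where $T_i$ hits a power of $2$; since each epoch consumes at least one time period, $T_i$ never exceeds the total number of epochs, which is at most $T$, so $\hv_i$ is modified at most $\lfloor\log_2 T\rfloor+1$ times over the whole run. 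Summing over $i\in[N]$, the vector $\htv$ changes at most $N(\lfloor\log_2 T\rfloor+1)$ times, giving $\AS_T=O(N\log T)$ deterministically (hence also in expectation). The extra ``$\min$'' in Line~\ref{line:doubling} and the dependence of the confidence radius on $\ell$ only make $\hv_i$ change \emph{less} often, so being anytime is for free here.

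\textbf{Regret, the good event.} For epoch $\ell$ and item $i$, let $\tau_{i,\ell}$ be the value of $T_i$ plugged into the confidence radius currently defining $\hv_i$; by construction $\tau_{i,\ell}$ is the largest power of $2$ not exceeding the number of past epochs that included $i$, hence within a factor $2$ of that exact count. Define the event $\event$ that for every epoch $\ell$ and every $i\in[N]$,
\[
v_i\le\hv_i\le v_i+c\Big(\sqrt{\tfrac{v_i\ln(\sqrt N\ell+1)}{\tau_{i,\ell}}}+\tfrac{\ln(\sqrt N\ell+1)}{\tau_{i,\ell}}\Big)
\]
for a suitable universal constant $c$. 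By Observation~\ref{ob:unbiased-est}, the estimate $\bv_i$ at a doubling step is the average of $\tau_{i,\ell}$ i.i.d.\ geometric samples of mean $v_i$, so $\event$ follows from the same tail bound used in \cite{AAGZ16}, union bounded over the $N$ items, the at most $\log_2 T+1$ doubling levels, and the epoch index $\ell$ (the $\ln(\sqrt N\ell+1)$ term makes the resulting series summable); the frozen value of $\hv_i$ stays valid because Line~\ref{line:doubling} only ever decreases it. This gives $\Pr[\bar\event]=O(1/T)$, contributing $O(1)$ to $\E[\regret_T]$.

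\textbf{Regret, the main estimate, and the obstacle.} On $\event$, optimality of $S_\ell$ in Line~\ref{line:optimization} together with the optimism lemma of \cite{AAGZ16} (monotonicity of the optimal expected revenue in the preference vector) gives $R(S_\ell,\htv)\ge R(S^\star,\htv)\ge R(S^\star,\bm v)$, so the per-step regret of epoch $\ell$ is at most the optimism gap $R(S_\ell,\htv)-R(S_\ell,\bm v)$. Using that, conditioned on $S_\ell$, the length of epoch $\ell$ has expectation $1/p_0(S_\ell)=1+\sum_{j\in S_\ell}v_j$, together with the Lipschitz estimate for the MNL revenue from \cite{AAGZ16} (which bounds the optimism gap, up to constants, by $p_0(S_\ell)\sum_{i\in S_\ell}v_i(\hv_i-v_i)$), the expected regret contributed by epoch $\ell$ is $\lesssim\sum_{i\in S_\ell}v_i(\hv_i-v_i)$, which on $\event$ is at most $\sum_{i\in S_\ell}v_i\big(\sqrt{v_i\ln(\sqrt N\ell+1)/\tau_{i,\ell}}+\ln(\sqrt N\ell+1)/\tau_{i,\ell}\big)$. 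Reorganizing by item, using $\tau_{i,\ell}\ge\tfrac12(\text{exact count})$ so that $\sum 1/\sqrt{\tau_{i,\ell}}=O(\sqrt{L_i})$ over the $L_i$ epochs containing $i$, and then Cauchy--Schwarz with $\sum_i v_i\le N$ and $\E[\sum_\ell\sum_{i\in S_\ell}v_i]\le\E[\sum_\ell|\text{epoch }\ell|]=T$, the first term sums to $O(\sqrt{NT\log T})$ and the remaining term to $O(N\log^2 T)$; adding the $O(1)$ from $\bar\event$ finishes the regret bound. The only non-routine point is precisely the middle step: verifying that freezing each $\hv_i$ between doubling steps preserves optimism (true because $\hv_i$ is only decreased and the radius uses the matching sample count $\tau_{i,\ell}$), and that replacing each exact epoch count by the largest power of $2$ below it inflates every term by at most a constant. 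Granting this, the argument is a faithful transcription of the analysis in \cite{AAGZ16}.
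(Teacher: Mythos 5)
Your assortment-switching argument is the same doubling-update counting as the paper's and is fine (you make explicit the deterministic recomputation of $S_\ell$ that the paper uses implicitly). The regret half, however, has a genuine gap in the bad-event accounting. You posit one global good event $\event$ and claim $\Pr[\overline{\event}]=O(1/T)$, charging $T\cdot\Pr[\overline{\event}]=O(1)$ for its complement. The concentration that matches the anytime radius $\sqrt{48\bv_i\ln(\sqrt{N}\ell+1)/T_i}+48\ln(\sqrt{N}\ell+1)/T_i$ (Lemma~4.1 of Agrawal et al., restated as Lemma~\ref{lem:ucb-value-cite}) is only guaranteed except with probability of order $1/(N\ell)$ at epoch $\ell$, so the union bound over epochs gives $O(\log T/N)$, not $O(1/T)$; indeed no $O(1/T)$ bound is possible here, because an anytime algorithm's behaviour on an initial segment does not depend on $T$ while your event only gets harder as $T$ grows, so a $C/T$ bound for all $T$ would force the failure probability of every finite prefix to be zero. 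With the bounds actually available, your charge for $\overline{\event}$ is $O(T\log T/N)$, which swamps $\sqrt{NT\log T}$. The paper avoids a global event and accounts for failures epoch by epoch: an epoch whose confidence bounds fail contributes at most its length, whose conditional expectation is $1+\sum_{i\in S_\ell}v_i\le 1+K$, so the total bad contribution is $\lesssim\sum_{\ell}\frac{1}{\ell}\bigl(1+\sum_{i\in S_\ell}v_i\bigr)\lesssim N\log T$ (this is the role of the $6/\ell$ terms in Appendix~\ref{app:proof-of-theorem-AT-DUCB}); your proof needs this, or an equivalent per-epoch, correction.

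A second, more local error: the ``Lipschitz estimate'' you invoke is misquoted. The correct inequality (Lemma~A.4 of Agrawal et al., Lemma~\ref{lem:bound-regret-by-ucb-value}) is $\bigl(1+\sum_{i\in S_\ell}v_i\bigr)\bigl(R(S_\ell,\htv)-R(S_\ell,\bm v)\bigr)\le\sum_{i\in S_\ell}(\hv_i-v_i)$, with no $v_i$ factor inside the sum. Your version, bounding the gap by $\frac{1}{1+\sum_{j\in S_\ell}v_j}\sum_{i\in S_\ell}v_i(\hv_i-v_i)$, is false even up to constants: take $S_\ell=\{1\}$, $r_1=1$, $v_1=0$, $\hv_1=1$, where the gap is $1/2$ but your bound is $0$. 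Since your version is the stronger one, this slip is repairable: drop the spurious factor $v_i\le 1$, and your per-item reorganization plus Cauchy--Schwarz using $\sum_i v_iT_i^{(L)}=\sum_\ell\sum_{i\in S_\ell}v_i\le T$ in expectation is exactly the paper's computation, yielding $O(\sqrt{NT\log T})$ plus an additive $O(N\log^2 T)$ term. Your handling of the deferred updates themselves (the frozen $\hv_i$ remains a valid, only-decreasing UCB, and the stored sample count is within a factor $2$ of the current one) is correct and matches the paper's Lemma~\ref{lem:ucb-value}.
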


The proof of the regret upper bound in Theorem~\ref{thm:ucb-doubling-regret} is similar to that of \cite{AAGZ16}, except for a more careful analysis about the deferred update rule. For completeness, we prove  this part in Appendix~\ref{app:proof-of-theorem-AT-DUCB}.
\begin{proof}[Proof of the assortment switch upper bound in Theorem~\ref{thm:ucb-doubling-regret}]
Let $\mathcal{D}_i^{(\ell)}$ be the event that Line~\ref{line:doubling} is executed in Algorithm~\ref{alg:doubling-ucb} for item $i$ at the $\ell$-th epoch. Recall that the assortment $S_\ell$ is computed by $S_\ell = \arg\max_{S \subseteq [N], |S| \leq K}R(S,\htv)$, and $\htv$ is updated after epoch $\ell$ only when $\mathcal{D}_i^{(\ell)}$ happens for some $i \in [N]$. Let $L$ be the total number of epochs at or before time $T$; we thus have 
$\sum_{\ell=1}^{L} \ind[\mathcal{D}_{i}^{\ell}]  \le \log T$.
 We then have that
 \begin{align*}
 \E[\AS_T] & = \E \sum_{t=1}^{T-1}\ind[S_t\neq S_{t+1}] \\
&  \le \sum_{\ell=1}^{L}\sum_{i=1}^{N}\ind[\mathcal{D}_{i}^{(\ell)}]   = \sum_{i=1}^{N}\sum_{\ell=1}^{L}\ind[\mathcal{D}_{i}^{(\ell)}]\lesssim N\log T.
 \end{align*} 
\end{proof}

\paragraph{The lower bound.} We complement our algorithmic result with the following almost matching lower bound. The theorem states that the number of assortment switches has to be $\Omega(N \log T / \log \log (NT))$, if the algorithm is anytime and incurs only $\sqrt{NT} \times \poly \log (NT)$ regret. The proof of Theorem~\ref{thm:lb-AS-logT} can be found in Appendix~\ref{app:lb-AS-logT}.

\begin{theorem}\label{thm:lb-AS-logT}
There exist universal constants $d_0, d_1 > 0$ such that the following holds. For any constant $C \geq 1$, if an anytime algorithm $\mathcal{A}$ achieves expected regret at most $d_0 \sqrt{NT} (\ln (NT))^{C}$ for all $T$ and all instances with $N$ items, then for any $N \geq 2$, $T_0 \geq N$ and $T_0$ greater than a sufficiently large constant that only depends on $C$, there exists an instance with $N$ items and a time horizon $T \in [T_0, T_0^2]$, such that the expected number of assortment switches before time $T$ is at least $d_1 N \log T / (C \log \log (NT))$.
\end{theorem}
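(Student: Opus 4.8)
The plan is to argue by Yao's minimax principle: one designs a distribution $\mu$ over MNL‑bandit instances on $N$ items (with a small, $O(1)$, capacity $K$) and shows that any \emph{deterministic} anytime algorithm whose expected regret stays below $d_0\sqrt{NT}(\ln NT)^C$ for \emph{every} horizon must, for some $T\in[T_0,T_0^2]$, incur an expected (over $\mu$ and the random choices) assortment switching cost of $\Omega(N\log T/(C\log\log NT))$; the desired single hard instance then lies in $\operatorname{supp}(\mu)$. The backbone is a geometric sequence of \emph{scales} $t_0=T_0<t_1<\cdots<t_m\le T_0^2$ with a \emph{polylogarithmic} ratio $\rho\defeq t_{j+1}/t_j=(\ln NT_0)^{\Theta(C)}$, so that $m=\Theta(\log T_0/(C\log\log NT_0))$; the horizon we finally exhibit is $T\defeq t_m\in[T_0,T_0^2]$, for which $\log T=\Theta(\log T_0)$ and $\log\log NT=\Theta(\log\log NT_0)$. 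Each instance in $\operatorname{supp}(\mu)$ is built from $N$ per‑item gadgets: the parameter $v_i$ carries, for every scale $j$, an independent $\pm\delta_{i,j}$ perturbation with $\delta_{i,j}\asymp\sqrt{N/t_j}$, calibrated so that it is statistically indistinguishable using $o(t_j/N)$ offerings of item $i$, yet large enough that offering a suboptimal assortment because of item $i$ throughout the window $[t_j,t_{j+1}]$ costs $\asymp\delta_{i,j}(t_{j+1}-t_j)\asymp\sqrt{Nt_{j+1}}\cdot\sqrt{\rho}$ in regret. Since $\sqrt{\rho}\gg(\ln NT_0)^{C}$ by the choice of $\rho$, this already exceeds the allowed regret $d_0\sqrt{Nt_{j+1}}(\ln Nt_{j+1})^{C}$ at horizon $t_{j+1}$; this is exactly why $\rho$ is taken polylogarithmic rather than constant, and it is the mechanism producing the $\log T/\log\log NT$ factor.

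The core of the proof is a scale‑by‑scale accounting together with a change‑of‑measure estimate. Applying the anytime regret bound at horizon $t_{j+1}$ and comparing each instance with the sibling in which the scale‑$j$ perturbation of item $i$ is flipped (a KL/Pinsker computation in the spirit of the $\Omega(\sqrt{NT})$ regret lower bound of \cite{CW18}), one shows that for each item $i$, with constant probability over $\mu$, the algorithm is forced by time $t_{j+1}$ to have offered item $i$ for $\gtrsim t_j/N$ additional periods within a bounded number of windows around scale $j$ — otherwise it cannot tell the two siblings apart and incurs the excess regret above. A counting step (at most $t_j$ offerings have been made in total by time $t_j$, so on average each item has been offered at most $t_j/N$ times, whereas scale $j+1$ requires $\Theta(\rho\,t_j/N)$ offerings of each item) then forces $\Omega(N)$ items to be \emph{freshly} offered near scale $j$; since $K=O(1)$, an offered assortment contains only $O(1)$ of them, so serving them all costs $\Omega(N)$ assortment switches there. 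Summing over a constant fraction of the scales — whose surrounding window clusters can be taken pairwise disjoint — yields $\E[\AS_T]\gtrsim Nm\gtrsim N\log T/(C\log\log NT)$; the $1/C$ factor and the requirement that $T_0$ exceed a constant depending on $C$ both stem from the slack $(\ln NT)^C$ in the hypothesis, which is what limits how large $\rho$ (hence how small $m$) may be.

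The hard part, I expect, is the design of $\mu$ and the uniformity of the change‑of‑measure step. One must realize \emph{within the MNL model}, and with $K=O(1)$, a family in which at \emph{every} scale $\Omega(N)$ of the items genuinely require fresh exploration — steering between the two trivializing regimes, namely instances so flat that committing to a single fixed assortment already meets the regret target (no switches forced) and instances so spread out that after one sweep the candidates can be narrowed to $o(N)$ items (only $o(N)$ switches at later scales); this is why a black‑box reduction to the $N$‑armed bandit, whose optimal arm can be pinned down once and for all with $O(N)$ switches, is too lossy and the joint structure of the MNL revenue (and the fact that concentrated, small assortments are the only sample‑efficient way to learn many parameters) must be exploited. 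The change‑of‑measure bound must then hold simultaneously over all $Nm$ pairs (item, scale) and survive conditioning on the rare‑looking event that a given item is under‑explored in a given window; balancing the exponent $\Theta(C)$ in $\rho$, the perturbation sizes $\delta_{i,j}$, and the constant‑probability ``pivotal'' events is the delicate quantitative heart. Finally, it is instructive to contrast this with the fixed‑horizon $\Theta(N\log\log T)$ picture: with a known horizon one may use the ``telescoping‑square'' grid $t_{j+1}\asymp\sqrt{t_j\,T}$, which the anytime setting forbids, and it is precisely that forbiddance that turns $\log\log T$ into $\log T/\log\log T$.
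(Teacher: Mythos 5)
There is a genuine gap at the center of your plan: the per-scale claim that, on an instance drawn from your prior $\mu$, $\Omega(N)$ items ``genuinely require fresh exploration'' at every scale is neither proved nor plausible for the prior you describe. Since $\delta_{i,j}\asymp\sqrt{N/t_j}$ decays geometrically with ratio $\sqrt{\rho}=(\ln NT_0)^{\Theta(C)}\gg 1$, the coarsest realized perturbation of an item dominates the \emph{sum} of all finer ones; once the algorithm learns that an item's sign at some scale $j$ is negative, that item is permanently separated from the optimum by more than all later scales can compensate, and learning the scale-$j$ signs of the surviving items fits inside the sampling budget available by time $t_j$. So on a typical draw from $\mu$ the set of items still statistically indistinguishable from the best at scale $j$ has size about $N/2^{j}$, and a successive-elimination-style policy pays only $O(N)$ switches in total for all the ``fresh offerings'' your argument tries to force --- your construction steers itself into exactly the ``spread-out'' trivializing regime you flag. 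In addition, the simultaneity of the change-of-measure step over all $Nm$ (item, scale) pairs, which you correctly identify as the delicate heart, is left unresolved, and the front-loading issue (the $\gtrsim t_j/N$ samples certifying scale $j$ can be taken long before window $j$ in one contiguous block, costing no switches inside the window) is only partially neutralized by your counting step.

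The paper's proof avoids all of this with a contrapositive, two-instance structure (Theorem~\ref{thm:lower}): the instance on which switches are counted is the completely flat one ($v_i=\frac12$, $r_i=1$, $K=1$), where no item can ever be eliminated. If its expected switching cost were below $\frac{N}{8\log_2(1+g)}$ with $g(T)=\frac{3C\ln\ln(NT)}{\ln T}$, a pigeonhole over the $\Theta(\log T/(C\log\log (NT)))$ disjoint windows of the form $[T_1,T_1^{1+g}]$ yields one window with at most $N/8$ expected switches, hence an item $k$ that with constant probability is both unoffered throughout that window and offered at most $48T_1/N$ times before it; boosting only $v_k$ by $\Theta(\sqrt{N/T_1})$ and applying Pinsker (Lemma~\ref{lem:prob difference}) shows the boosted sibling incurs regret $\gtrsim\sqrt{N}\,T^{1/2+g(T)/3}>d_0\sqrt{NT}(\ln NT)^C$ at the single horizon $T=T_1^{1+g}$, contradicting the anytime regret hypothesis. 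No prior, no Yao-type reduction, and no per-scale simultaneity are needed; and since $K=1$ the hard instances are just an embedded $N$-armed bandit, so no MNL-specific structure is exploited (contrary to your remark that a bandit-style construction is too lossy). Your quantitative skeleton --- polylogarithmic window ratio $(\ln NT)^{\Theta(C)}$, perturbation size $\sqrt{N/t_j}$, the KL/Pinsker comparison, and the count of $\Theta(\log T/(C\log\log(NT)))$ scales --- matches the paper, but the logical architecture (forcing $\Omega(N)$ switches per scale on instances in $\mathrm{supp}(\mu)$) would have to be replaced by, or reduced to, this pigeonhole-plus-single-flip argument on the flat instance for the proof to go through.
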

\section{Achieving $O(N \log \log T)$ assortment switch with known horizons} \label{sec:AS-loglogT}

When the time horizon is known to the algorithm, we can exploit this advantage via more carefully designed update policy to achieve only $O(N \log \log T)$ assortment switches. For the convenience of presentation, we first introduce a few notations.

\begin{algorithm2e}[h]
\caption{$\Update(i)$}
\label{alg:update}
$\tau_i \gets \tau_i + 1$;
$T_i^{(\tau_i)}\gets T_i^{(\tau_i-1)}+|\cT(i, \tau_i-1)|$\; \label{step:update-Ti}
$n_i^{(\tau_i)} \gets n_i^{(\tau_i-1)} +n_{i,\tau_i - 1} $; 
% $\hat{n}_{i, \tau_i}\gets 0$; 
% $\cT(i, \tau_i) \gets \emptyset$\;
$\bv_{i, \tau_i} \gets n_i^{(\tau_i)}/T_i^{(\tau_i)}$\;
$\hv_{i, \tau_i}\gets \min\Big\{\hv_{i, \tau_i-1}, 
\bv_{i, \tau_i}+\sqrt{\frac{48\bv_{i, \tau_i}\ln(\sqrt{N}T^2+1)}{T_i^{(\tau_i)}}}
+\frac{48\ln(\sqrt{N}T^2+1)}{T_i^{(\tau_i)}}\Big\}$\;
\end{algorithm2e}

For each item $i \in [N]$, we divide the time periods into consecutive \emph{stages} where the boundaries between any two neighboring stages are marked by the UCB updates for item $i$. Note that the division for the stages may be different for different items. For any $\tau \in \{1, 2, 3, \dots\}$, let $\mathcal{T}(i, \tau)$ be the set of epochs to offer item $i$, in stage $\tau$ for the item. Let $T^{(\tau)}_i = \sum_{\tau' = 1}^{\tau - 1} |\mathcal{T}(i, \tau')|$ be the total number of epochs to offer item $i$, \emph{before} stage $\tau$ for the item, and let $n^{(\tau)}_i$ be the total number of purchases for item $i$ in the epochs counted by $T^{(\tau)}_i$. We can therefore define $\bv_{i, \tau} \defeq n^{(\tau)}_i / T^{(\tau)}_i$ as an unbiased estimate of $v_i$ based on the observations before stage $\tau$. Similarly to \eqref{eq:vi-ucb}, we can define $\hv_{i, \tau}$ as a UCB for $v_i$. The $\Update(i)$ procedure  (formally described in Algorithm~\ref{alg:update}) is invoked whenever the main algorithm decides to conclude the current stage for item $i$ and update the UCB for $v_i$ together with the quantities defined above, where $\tau_i$ is the counter for the number of stages for item $i$, and $n_{i, \tau}$ is the number of purchases observed in stage $\tau$ for item $i$.

The key to the design of our main algorithm for the fixed time horizon setting is a new trigger for updating the UCB values. Let $\tau_0 = \lceil \log \log (T/N) + 1 \rceil$, for each item $i \in [N]$, we will conclude the current stage $\tau_i$ and invoke $\Update(i)$ whenever the following condition $\mathcal{P}(i, \tau_i)$ is satisfied. 
Note that $\mathcal{P}(i, \tau_i)$ is adaptive to the estimated parameters $\hat{v}_{i, \tau_i}$ to customize the number of epochs between assortment switches for each item. More specifically, the smaller $\hat{v}_{i, \tau_i}$ is, the less regret may be incurred by offering item $i$, and therefore the longer we can offer item $i$ without switching and incurring too large regret, and this is reflected in the design of $\mathcal{P}$.

%Note that the time horizon $T$ is involved in the definition of $\mathcal{P}(i, \tau_i)$, and has to be given to the algorithm as input.
%\begin{align}
%\mathcal{P}(i, \tau_i) = \left\{
%\begin{array}{ll}
%|\mathcal{T}(i, \tau_i)| \geq 1 + \sqrt{\frac{T \cdot T_i^{(\tau_i)}}{N }}  &  \text{if}~ \tau_i < \tau_0 \\
%|\mathcal{T}(i, \tau_i)| \geq 1 + \sqrt{\frac{T \cdot T_i^{(\tau_i)}}{N \cdot \hv_{i, \tau_i}}} \text{~and~} \hv_{i, \tau_0} > 1/\sqrt{NT} &  \text{if}~ \tau_i \geq \tau_0 
%\end{array}
%\right. . \label{eq:calP}
%\end{align}
\begin{align}
\mathcal{P}(i, \tau_i) \defeq \left\{
\begin{array}{ll}
|\mathcal{T}(i, \tau_i)| \geq 1 + \sqrt{\frac{T \cdot T_i^{(\tau_i)}}{N }}  &  \text{if}~ \tau_i < \tau_0 \\
|\mathcal{T}(i, \tau_i)| \geq 1 + \sqrt{\frac{T \cdot T_i^{(\tau_i)}}{N \cdot \hv_{i, \tau_i}}}  &\\
\text{~~~~~~~~and~} \hv_{i, \tau_0} > 1/\sqrt{NT} &  \text{if}~ \tau_i \geq \tau_0 
\end{array}
\right. .  \label{eq:calP}
\end{align}
For each epoch $\ell$, we use $\tau_i(\ell)$ to denote the stage (in terms of item $i$) where epoch $\ell$ belongs to. We present the details of our main algorithm in Algorithm~\ref{alg:defer-ucb-loglogT}. The algorithm is terminated whenever the time step $t$ reaches the horizon $T$.

\begin{theorem}\label{thm:AS-UB-loglogT} 
For any given time horizon~$T\geq N^4$, we have the following upper bound for the expected regret:
\[
\expect{\regret_T} \lesssim \sqrt{NT\ln(\sqrt{N}T^2+1)} \cdot \log \log T,
\]
and the following upper bound for the expected number of assortment switches: 
\[
\expect{\AS_T} \lesssim N\log \log T.
\]
\end{theorem}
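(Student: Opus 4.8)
Both bounds rest on a single estimate: every item $i$ passes through only $O(\log\log T)$ stages. The offered assortment can change between consecutive epochs only when $\Update(i)$ is invoked for some $i$, i.e.\ only when $\mathcal{P}(i,\tau_i)$ fires, so $\AS_T\le\sum_{i=1}^N(\text{number of stages of item }i)$; and, as below, the MNL regret decomposition is likewise driven by a sum over (item, stage) pairs. There are at most $\tau_0-1=O(\log\log(T/N))$ stages with $\tau_i<\tau_0$. Moreover, if item $i$ ever reaches stage $\tau_0$, then $\mathcal{P}(i,\tau)$ fired for $\tau=1,\dots,\tau_0-1$, so $|\mathcal{T}(i,\tau)|\ge\sqrt{T T_i^{(\tau)}/N}$ each time; unrolling $T_i^{(\tau+1)}\ge T_i^{(\tau)}+\sqrt{T T_i^{(\tau)}/N}$ from $T_i^{(1)}=0$ gives $T_i^{(\tau+1)}\ge(T/N)^{1-2^{-(\tau-1)}}$, and hence $T_i^{(\tau_0)}\ge\tfrac14\,T/N$ for the choice $\tau_0=\lceil\log\log(T/N)+1\rceil$. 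This lower bound is what the next step uses.

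\textbf{The stages at or after $\tau_0$: the crux.} Work on the clean event $\mathcal{G}$ that every UCB is valid, i.e.\ $v_i\le\hv_{i,\tau}\le v_i+O\!\big(\sqrt{v_i\ln(\sqrt N T^2+1)/T_i^{(\tau)}}+\ln(\sqrt N T^2+1)/T_i^{(\tau)}\big)$ for all $i$ and $\tau$; by Observation~\ref{ob:unbiased-est} and a union bound over the $\le T$ epochs and $N$ items, $\Pr[\neg\mathcal{G}]\le 1/(NT)$, and off $\mathcal{G}$ the switch count is at most $NT$, so it contributes $O(1)$ in expectation. On $\mathcal{G}$, since $T_i^{(\tau_0)}=\Theta(T/N)$, an item with a stage-$\tau\ge\tau_0$ update satisfies $u_i:=\hv_{i,\tau_0}>1/\sqrt{NT}$ and $\hv_{i,\tau}\le u_i$ for $\tau\ge\tau_0$; also $u_i$ is either $\Theta(v_i)$ or at most $O(N\ln(\sqrt N T^2+1)/T)$. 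Each firing with $\tau\ge\tau_0$ forces $|\mathcal{T}(i,\tau)|\ge\sqrt{b_i\,T_i^{(\tau)}}$ with $b_i:=T/(N u_i)$, hence $T_i^{(\tau)}$ runs the iteration $x\mapsto\sqrt{b_i x}$ toward the fixed point $b_i$. Writing $M_i$ for the total number of epochs offering $i$ (so $T_i^{(\tau)}\le M_i$ always), the log-gap $\log\!\big(\min\{b_i,M_i\}/T_i^{(\tau)}\big)$ at least halves at each firing and starts at $O(\log(1/u_i))=O(\log(NT))$ --- precisely here the cutoff $u_i>1/\sqrt{NT}$ is essential --- so after $O(\log\log(NT))=O(\log\log T)$ firings $T_i^{(\tau)}$ reaches $\tfrac12\min\{b_i,M_i\}$. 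If $b_i\ge M_i$, a further firing would require $|\mathcal{T}(i,\tau)|\ge\sqrt{b_i M_i/2}\ge M_i/\sqrt 2$ fresh epochs offering $i$, more than the $\le M_i/2$ remaining, so there are no more; if $b_i<M_i$, each further firing consumes $\ge\sqrt{b_i\cdot b_i/2}=b_i/\sqrt 2$ such epochs, giving $O(M_i/b_i)=O(N u_i M_i/T)$ of them, and in this case $u_i>T/(N M_i)\ge 1/N$ forces $u_i=\Theta(v_i)$. Summing over items, the total number of stages at or after $\tau_0$ is $O(N\log\log T)+O\!\big(\tfrac NT\sum_i v_i M_i\big)$, and $\sum_i v_i M_i$ is, on $\mathcal{G}$, within a constant factor of the total purchase count $\sum_i n_i$, which is at most $T$ deterministically; so the second term is $O(N)$. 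Altogether $\E[\AS_T]\lesssim N\,\tau_0+N\log\log T+N\lesssim N\log\log T$.

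\textbf{The regret bound.} On $\mathcal{G}$, optimism --- via monotonicity of the \emph{optimal} MNL revenue $\max_{|S|\le K}R(S,\cdot)$ in the preference vector (cf.\ \cite{AAGZ16}) --- gives $R(S^\star,\bm{v})\le R(S_\ell,\htv)$, so the per-epoch regret is at most $R(S_\ell,\htv)-R(S_\ell,\bm{v})$. Running the epoch analysis of \cite{AAGZ16} (as adapted in Appendix~\ref{app:proof-of-theorem-AT-DUCB}), and using that $\E[\,|\mathcal{E}_\ell|\mid\mathcal{F}_{\ell-1}]=1+\sum_{j\in S_\ell}v_j$ cancels the $\big(1+\sum_j v_j\big)^{-1}$ factor from that decomposition, one gets
\[
\E[\regret_T]\ \lesssim\ \sqrt{\ln(\sqrt N T^2+1)}\cdot\E\Big[\sum_i\sum_\tau |\mathcal{T}(i,\tau)|\sqrt{v_i/T_i^{(\tau)}}\Big]\ +\ (\text{lower-order terms}).
\]
The trigger guarantees $|\mathcal{T}(i,\tau)|\le 1+\sqrt{T T_i^{(\tau)}/(N\hv_{i,\tau})}$ (and $\le 1+\sqrt{T T_i^{(\tau)}/N}$ when $\tau<\tau_0$), so, using $v_i\le\hv_{i,\tau}$ and $v_i\le 1$,
\[
|\mathcal{T}(i,\tau)|\sqrt{v_i/T_i^{(\tau)}}\ \le\ \sqrt{v_i/T_i^{(\tau)}}+\sqrt{T v_i/(N\hv_{i,\tau})}\ \le\ 1+\sqrt{T/N}.
\]
Multiplying $1+\sqrt{T/N}$ by the $O(N\log\log T)$ bound on the number of (item, stage) pairs from the two preceding steps produces the leading term $\sqrt{NT\ln(\sqrt N T^2+1)}\cdot\log\log T$. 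The remaining contributions --- the $\ln(\sqrt N T^2+1)/T_i^{(\tau)}$ UCB tail, the event $\neg\mathcal{G}$, and the items with $\hv_{i,\tau_0}\le 1/\sqrt{NT}$ that never leave stage $\tau_0$ (whose per-epoch regret is then $\le\hv_{i,\tau_0}\le 1/\sqrt{NT}$, hence $\le\sqrt{T/N}$ over the horizon and $\le\sqrt{NT}$ in all) --- are each $O(\sqrt{NT}\cdot\mathrm{polylog})$ and are absorbed using $T\ge N^4$.

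\textbf{Where the difficulty lies.} Everything outside the second step is a routine adaptation of the epoch-based MNL-bandit UCB analysis of \cite{AAGZ16}. The real work is the second step: showing that the \emph{hybrid} growth of $T_i^{(\tau)}$ --- doubly-exponential convergence toward $b_i=T/(N\hv_{i,\tau_0})$, possibly truncated at the total-epoch budget $M_i$, followed (only when $b_i<M_i$) by a plateau whose length is charged against the global purchase budget $\sum_i n_i\le T$ --- genuinely caps the post-$\tau_0$ stage count, and that the cutoff $\hv_{i,\tau_0}>1/\sqrt{NT}$ in \eqref{eq:calP} screens out exactly the items that would otherwise break this accounting. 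Pinning down the constants there, together with the case analysis on $\hv_{i,\tau_0}$ needed to control the lower-order terms in the regret bound, is where the hypothesis $T\ge N^4$ gets used.
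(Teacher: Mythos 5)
Your proposal is correct and, at its core, follows the same architecture as the paper's proof: reduce $\AS_T$ to counting (item, stage) pairs; use the trigger \eqref{eq:calP} to show the pre-$\tau_0$ stage lengths grow doubly exponentially to $\Theta(T/N)$ (the paper's Lemma~\ref{lem:stage-bound-1} and Fact~\ref{fact:seq}); show that after $\tau_0$ the stage lengths converge doubly exponentially toward $\Theta(T/(N v_i))$ within $O(\log\log T)$ firings, with the cutoff $\hv_{i,\tau_0}>1/\sqrt{NT}$ bounding the initial log-gap (the paper's Lemma~\ref{lem:stage-bound-2}); cap the remaining ``plateau'' stages by a budget argument; and obtain the regret from the epoch/UCB decomposition \eqref{eq:regret-loglogT} together with the trigger's upper bound on stage lengths (the paper's Lemma~\ref{lem:regret in stage}). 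The one place where you genuinely diverge is the budget argument: the paper introduces a second concentration event $\event^{(2)}$ asserting that every long stage produces at least $T/(8N)$ purchases and charges against the deterministic purchase budget $T$, whereas you charge $v_i$-weighted epochs, bounding the number of extra firings by $O(N v_i M_i/T)$ and then needing $\sum_i v_i M_i \lesssim T$. Your claim that this sum is ``within a constant factor of the purchase count'' on the UCB-validity event is stated loosely, but it is salvageable: validity gives $\bv_{i,\tau}+\sqrt{48\bv_{i,\tau}\ln(\sqrt{N}T^2+1)/T_i^{(\tau)}}+48\ln(\sqrt{N}T^2+1)/T_i^{(\tau)}\ge v_i$, hence $v_i T_i^{(\tau)}\lesssim n_i^{(\tau)}+\ln(\sqrt{N}T^2+1)$, and summing at the last stage boundary (which is all the charging uses, so you should charge against $T_i^{(\tau_i(L))}$ rather than $M_i$) yields $\sum_i v_i T_i^{(\tau_i(L))}\lesssim T+N\log T\lesssim T$; alternatively one can simply import the paper's $\event^{(2)}$. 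Two cosmetic points: your displayed regret bound divides by $T_i^{(1)}=0$, so stage $\tau=1$ must be treated separately (it contributes a lower-order $O(N)$ term, as in the paper), and the off-event probability is $O(1/T)$ rather than $1/(NT)$ with these union bounds, which still suffices because $\AS_T\le T$ deterministically.
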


To prove Theorem~\ref{thm:AS-UB-loglogT}, we first define the desired events. Let 
\begin{align*}
&\event^{(1)}_{i, \tau} \defeq \Big\{\hv_{i, \tau} \geq v_i \text{ and } 
\hv_{i,\tau} \leq v_i +\\
&\qquad \sqrt{\frac{144 v_i\ln(\sqrt{N}T^2+1)}{T_i^{(\tau)}}}
+ \frac{144\ln(\sqrt{N}T^2+1)}{T_i^{(\tau)}}\Big\},
\end{align*}
 and 
 \[
 \event^{(1)} \defeq \cap_{i, \tau} \event^{(1)}_{i, \tau}.\]
 We also let 
\begin{align*}
&\event^{(2)}_{i, \tau} \defeq \Big\{{n}_{i, \tau} \geq \frac{1}{2} v_i|\cT(i, \tau)|, \\
& \qquad  \text{ if } v_i \geq \frac{1}{2}\sqrt{\frac{1}{NT}} \text{ and } |\cT(i, \tau)| \geq \frac{T}{4N\cdot v_i}\Big\},
 \end{align*}
and   \[
\event^{(2)} \defeq \cap_{i, \tau} \event^{(2)}_{i, \tau}.
\]
 Finally, let
 $\event = \event^{(1)} \cap \event^{(2)}$. In Appendix~\ref{app:missing proof of concentration}, we prove the following lemma.

\begin{algorithm2e}[h]
\caption{Deferred Update UCB for Fixed Time Horizon (FH-DUCB)}
\label{alg:defer-ucb-loglogT}
\SetKwInOut{Input}{Input}
\Input{The time horizon $T$.}
Initialize: $\tau_i \gets 1, \hv_{i, \tau_i}\gets 1, n_{i, \tau_i} \gets 0, 
\cT(i, \tau_i) \gets \emptyset,  T_i^{(1)}\gets 0, n_i^{(1)} \gets 0$ 
for all $i\in [N]$\;
$t \gets 0$, $S_0 \gets [N]$\;
% \For{$\ell=1, \dots, \sqrt{\frac{T}{N}}$} {
% 	$\{\Delta_i\}\gets \Exp([N])$\;
% 	\For{$i\in [N]$} {
% 		$\hat{n}_{i, \tau_i}\gets \hat{n}_{i, \tau_i}+\Delta_i$\;
% 		Add $\ell$ to $\cT(i, \tau_i)$.
% 	}
% }
% \For{$i\in [N]$}{\Update($i$).}
% Compute $S_\ell=\mathop{\arg\max}_{S}R(S,\htv_{\ell})$, 
% where $\htv_{\ell} = (v_{i, \tau_i(\ell)})_{i\in[N]}$ and $\tau_i(\ell)$ is the stage such that $\ell \in \cT(i, \tau_i(\ell))$\;
\For{$\ell \gets 1,2,3,\dots,$} {
	$S_{\ell} \gets S_{\ell-1}$\;
	\If{\label{step:inductive}$\exists i: \mathcal{P}(i, \tau_{i})~\text{holds}$}{
	    \Update($i$) for all $i$ such that $\mathcal{P}(i, \tau_{i})$ holds\;
% 		$T_{i^*}^{(\tau_{i^*})}\gets T_{i^*}^{(\tau_{i^*}-1)}+|\cT({i^*}, \tau_{i^*})|$; 
%         $n_{i^*}^{(\tau_{i^*})} \gets n_{i^*}^{(\tau_{i^*}-1)} +\hat{n}_{{i^*}, \tau_{i^*}} $; 
%         $\tau_{i^*} \gets \tau_{i^*} + 1$\;
%         $\hat{n}_{{i^*}, \tau_{i^*}}\gets 0$; 
%         $\cT({i^*}, \tau_{i^*}) \gets \emptyset$\;
%         $\bv_{{i^*}, \tau_{i^*}} \gets n_{i^*}^{(\tau_{i^*})}/T_{i^*}^{(\tau_{i^*})}$\;
%         $\hv_{{i^*}, \tau_{i^*}}\gets \min\left\{\hv_{{i^*}, \tau_{i^*}-1}, 
%         \bv_{{i^*}, \tau_{i^*}}+\sqrt{\frac{48\bv_{i^*}\ln(\sqrt{N}T^2+1)}{T_{i^*}^{(\tau_{i^*})}}}
%         +\frac{48\ln(\sqrt{N}T^2+1)}{T_{i^*}^{(\tau_{i^*})}}\right\}$\;
		Compute $S_\ell\gets \mathop{\arg\max}_{S \subseteq [N] : |S| \leq K}R(S,\htv_{\ell})$
		where $\htv_{\ell} = (\hv_{i, \tau_i(\ell)})_{i\in[N]}$\;
	}
% 	Add $\ell$ to $\cT(i, \tau_i)$ for $i \in [N]$
	$\{\Delta_i\}\gets \Exp(S_{\ell})$\;
	\For{$i\in S$} {
		${n}_{i, \tau_i}\gets {n}_{i, \tau_i}+\Delta_i$;
		Add $\ell$ to $\cT(i, \tau_i)$\;
	}
}
\end{algorithm2e}

\begin{lemma}\label{lem:event prob}
If $T \geq N^4$ and $T$ is greater than a  large enough universal constant, then $\Pr[\event] \geq 1-\frac{14}{T}$.
\end{lemma}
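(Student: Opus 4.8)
The plan is to reduce the whole lemma to standard tail bounds for partial sums of i.i.d.\ geometric random variables; the one conceptual point is a ``uniformize, then union-bound'' argument that sidesteps the adaptivity of the stage boundaries. \textbf{Setup.} Fix an item $i$ and list, in chronological order, the epochs in which $i$ is offered, writing $\Delta_i^{(1)},\Delta_i^{(2)},\dots$ for the corresponding purchase counts. By Observation~\ref{ob:unbiased-est} (and since every call to $\Exp$ draws fresh, independent user interactions regardless of the rest of the offered assortment), these are i.i.d.\ Geometric with mean $v_i$ and variance $v_i(1+v_i)\le 2v_i$. Every quantity in $\event^{(1)}$ and $\event^{(2)}$ is a deterministic function of prefixes / contiguous blocks of these sequences: $\bv_{i,\tau}$ is the average of the first $T_i^{(\tau)}$ terms; writing $L=\ln(\sqrt N T^2+1)$ and $g(m,\bar v)=\bar v+\sqrt{48\bar v L/m}+48L/m$ (with $g(0,\cdot):=1$), the running minimum in Algorithm~\ref{alg:update} gives $\hv_{i,\tau}=\min_{\tau'\le\tau}g(T_i^{(\tau')},\bv_{i,\tau'})$; and $n_{i,\tau}=\sum_{j\in\cT(i,\tau)}\Delta_i^{(j)}$ is a block sum. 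Although $T_i^{(\tau)}$, $\cT(i,\tau)$, the number of stages, and which items get updated at any epoch are all random and adaptive, they always take values among, respectively, $\{0,1,\dots,T\}$ and contiguous sub-intervals of $\{1,\dots,T\}$ (the total number of epochs is at most $T$). So it suffices to establish the required inequalities uniformly over all such deterministic choices and then union-bound.

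\textbf{The event $\event^{(1)}$.} For fixed $i\in[N]$, $m\in\{1,\dots,T\}$, and $\bar v_m$ the average of the first $m$ terms, I would invoke a Bernstein/Chernoff tail bound for sums of geometrics --- exactly the concentration step behind the UCB of \cite{AAGZ16} --- to show that the radius $\sqrt{48\bar v_m L/m}+48L/m$ is wide enough that both ``$g(m,\bar v_m)<v_i$'' and ``$g(m,\bar v_m)>v_i+\sqrt{144 v_i L/m}+144L/m$'' have probability at most $(\sqrt N T^2+1)^{-2}=e^{-2L}$: the constant $48$ is calibrated so the Chernoff exponent ($\approx 24L$, with room for the sub-exponential correction term $48L/m$) beats $2L$, and the passage from ``$\bar v_m$ is within its radius of $v_i$'' to those two inequalities (where $48$ inflates to $144$) is elementary algebra. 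Summing over the $\le NT$ pairs $(i,m)$ gives $\Pr[\,\overline{\event^{(1)}}\,]\le 2NT(\sqrt N T^2+1)^{-2}\le T^{-3}$. On the good event, $\hv_{i,\tau}\ge v_i$ since every term of the running minimum is $\ge v_i$ (including the initializer $g(0,\cdot)=1\ge v_i$), and $\hv_{i,\tau}\le g(T_i^{(\tau)},\bv_{i,\tau})\le v_i+\sqrt{144 v_i L/T_i^{(\tau)}}+144L/T_i^{(\tau)}$ by retaining the $\tau'=\tau$ term (the case $T_i^{(\tau)}=0$, i.e.\ $\tau=1$, being vacuous). Hence $\event^{(1)}$ holds.

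\textbf{The event $\event^{(2)}$.} Fix $i$ and a contiguous block of $n$ epochs: its purchase count is a sum of $n$ i.i.d.\ Geometric($v_i$) with mean $n v_i$, and a lower-tail Chernoff bound for such sums (e.g.\ via the negative-binomial representation, rewriting the event as a binomial upper tail) gives $\Pr[\text{sum}<\tfrac12 n v_i]\le\exp(-c\, n v_i)$ for a universal constant $c>0$. The bad event $\overline{\event^{(2)}_{i,\tau}}$ can occur only when its hypothesis holds, which forces $|\cT(i,\tau)|\ge T/(4N v_i)$, i.e.\ $v_i|\cT(i,\tau)|\ge T/(4N)\ge T^{3/4}/4$ using $T\ge N^4$; so each such failure probability is at most $\exp(-cT^{3/4}/4)$. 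Union-bounding over $i\in[N]$ and the $\le T^2$ choices of (start, length) of the block gives $\Pr[\,\overline{\event^{(2)}}\,]\le NT^2\exp(-cT^{3/4}/4)\le 1/T$ once $T$ exceeds a universal constant. Combining, $\Pr[\event]\ge 1-T^{-3}-T^{-1}\ge 1-14/T$ with room to spare (the constant $14$ merely absorbs a cleaner accounting of the per-event probabilities).

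\textbf{Where the difficulty lies.} The only genuine obstacle is adaptivity: $T_i^{(\tau)}$, $\cT(i,\tau)$, the stage count, and the set of updated items all depend on past data, so one cannot condition on them before applying concentration. The resolution --- standard in the MNL-bandit literature --- is precisely the ``uniformize then union-bound'' step above: prove the inequalities simultaneously over every possible prefix length / block, using that there are only $\poly(T)$ of them and that the confidence radii (the constant $48$) and the regime $T\ge N^4$ are chosen so that $\poly(T)$ times the per-event failure probability is still $o(1/T)$. Everything else --- Chernoff bounds for geometric sums and deterministic manipulations of the square-root confidence terms --- is routine.
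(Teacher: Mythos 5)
Your proof is correct and follows essentially the same route as the paper: the same split into $\event^{(1)}$ and $\event^{(2)}$, a Chernoff/Bernstein-type bound for sums of geometric purchase counts in each case (the paper simply cites Lemma 4.1 of \cite{AAGZ17} for $\event^{(1)}$ and the geometric lower-tail bound of \cite{jin2019asymptotically} for $\event^{(2)}$), the observation that the hypothesis of $\event^{(2)}_{i,\tau}$ forces $v_i|\cT(i,\tau)|\ge T/(4N)\ge T^{3/4}/4$ under $T\ge N^4$, and union bounds over items and stages/prefixes. The only differences are cosmetic: you re-derive the cited concentration lemmas and make the handling of adaptive stage boundaries explicit via the union bound over all prefix lengths and contiguous blocks, which yields slightly different but equally sufficient constants.
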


\paragraph{Bounds for the stage lengths.} When $\event$ happens, we can infer the following useful lower bound for the lengths of the stages after  $\tau_0$. The lemma is proved in Appendix~\ref{app:proof-lem-stage-bound-2}.

\begin{lemma}\label{lem:stage-bound-2}
Assume that $T \geq N^4$ and $T$ is greater than a sufficiently large universal constant. Conditioned on $\event^{(1)}$, for each $i \in [N]$, if $\tau_0$ is not the last stage for item $i$, we have that $v_i \geq \frac{1}{2 }\sqrt{\frac{1}{NT}}$. Additionally, if $\hv_{i, \tau_0} > 1/\sqrt{NT}$, then for all $\tau > \tau_0$ such that $\tau$ is not the last stage for $i$, we have that $|\mathcal{T}(i, \tau)|  \geq (T/(2Nv_i))^{1-2^{-\tau+\tau_0+1}}$.
\end{lemma}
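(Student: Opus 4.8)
The plan is to analyze the two conclusions of Lemma~\ref{lem:stage-bound-2} separately, both conditioned on $\event^{(1)}$. For the first claim, suppose toward a contradiction that $v_i < \frac{1}{2}\sqrt{1/(NT)}$ but $\tau_0$ is not the last stage for item $i$, i.e. the trigger $\mathcal{P}(i, \tau_0)$ eventually fires. Since $\event^{(1)}$ holds, $\hv_{i, \tau_0} \ge v_i$ is not directly enough, so instead I would use the upper-bound half of $\event^{(1)}_{i, \tau_0}$ together with the fact that when $\mathcal{P}(i, \tau_0)$ fires we must have $\hv_{i, \tau_0} > 1/\sqrt{NT}$ (the second conjunct in the $\tau_i \ge \tau_0$ branch of \eqref{eq:calP}); I then need to rule this out when $v_i$ is tiny. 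The point is that $\hv_{i,\tau_0}$ is a UCB whose radius scales like $\sqrt{v_i \ln(\cdot)/T_i^{(\tau_0)}} + \ln(\cdot)/T_i^{(\tau_0)}$, and since $T_i^{(\tau_0)}$ is by construction already large after $\tau_0 - 1 = \lceil \log\log(T/N)\rceil$ doublings governed by the first branch of $\mathcal{P}$, the radius is forced below $1/\sqrt{NT}$, contradicting $\hv_{i,\tau_0} > 1/\sqrt{NT}$. So I would first establish a lower bound on $T_i^{(\tau_0)}$ from the first branch of $\mathcal{P}$ (each stage $\tau < \tau_0$ contributes $|\mathcal{T}(i,\tau)| \ge 1 + \sqrt{T \cdot T_i^{(\tau)}/N}$, which is a quadratic-type recursion in $T_i^{(\tau)}$ yielding $T_i^{(\tau_0)} \gtrsim T/N$ after $\log\log(T/N)$ steps), and then plug this into the UCB radius.

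For the second claim, I would set up an induction on $\tau$ from $\tau_0$ upward. Write $g(\tau) = 1 - 2^{-\tau + \tau_0 + 1}$, so the target is $|\mathcal{T}(i,\tau)| \ge (T/(2Nv_i))^{g(\tau)}$, noting $g(\tau_0) = -1$ (trivial, since $|\mathcal{T}(i,\tau_0)| \ge 1 \ge (T/(2Nv_i))^{-1}$ as $T/(2Nv_i) \ge 1$ given $v_i \le 1$) and $g(\tau) \to 1$. The inductive step: since $\tau$ is not the last stage, $\mathcal{P}(i,\tau)$ fires, hence $|\mathcal{T}(i,\tau)| \ge 1 + \sqrt{T \cdot T_i^{(\tau)}/(N \hv_{i,\tau})}$. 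I would lower bound $T_i^{(\tau)} = \sum_{\tau' < \tau}|\mathcal{T}(i,\tau')| \ge |\mathcal{T}(i,\tau-1)| \ge (T/(2Nv_i))^{g(\tau-1)}$ by the inductive hypothesis (valid since $\tau - 1 \ge \tau_0$ is not the last stage either), and I would upper bound $\hv_{i,\tau}$: using $\event^{(1)}$ and the first claim ($v_i \ge \frac12\sqrt{1/(NT)}$, so the additive $\ln(\cdot)/T_i^{(\tau)}$ term is dominated by the $\sqrt{v_i \ln(\cdot)/T_i^{(\tau)}}$ term once $T_i^{(\tau)}$ is moderately large, which it is), we get $\hv_{i,\tau} \le v_i + O(\sqrt{v_i \ln(\cdot)/T_i^{(\tau)}}) \lesssim v_i$ up to constants — more carefully I want $\hv_{i,\tau} \le 2 v_i$ or similar, which requires $T_i^{(\tau)} \gtrsim \ln(\cdot)/v_i$, again supplied by the $\tau_0$ bound. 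Then
\[
|\mathcal{T}(i,\tau)| \gtrsim \sqrt{\frac{T}{N v_i} \cdot (T/(2Nv_i))^{g(\tau-1)}} = \left(\frac{T}{2Nv_i}\right)^{(1 + g(\tau-1))/2} \cdot (\text{const}),
\]
and a direct check gives $(1 + g(\tau-1))/2 = 1 - 2^{-\tau+\tau_0+1} = g(\tau)$, closing the induction modulo tracking the constant factors (which is why the exact statement uses $2Nv_i$ rather than $Nv_i$ in the base, giving slack to absorb constants).

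The main obstacle I anticipate is bookkeeping the UCB upper bound $\hv_{i,\tau} \lesssim v_i$ cleanly enough that the constants fold into the $2N$ in the base of the exponent rather than degrading the exponent $g(\tau)$ itself; this is delicate because the bound must hold uniformly for all $\tau \in (\tau_0, \text{last stage})$ and the slack available shrinks as $g(\tau) \to 1$. The key enabler is the a~priori lower bound $T_i^{(\tau_0)} \gtrsim T/N$ coming from the first branch of $\mathcal{P}$, which makes every subsequent $T_i^{(\tau)}$ large enough that the UCB radius is a small multiple of $v_i$ — so I would isolate that $T_i^{(\tau_0)} \gtrsim T/N$ estimate as a standalone sub-claim (proved by the quadratic recursion mentioned above) and reuse it for both halves of the lemma.
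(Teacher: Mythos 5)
Your proposal is correct and follows essentially the same route as the paper: you use the pre-$\tau_0$ growth from the first branch of $\mathcal{P}$ (the paper's Lemma~\ref{lem:stage-bound-1}, via Fact~\ref{fact:seq}) to get $T_i^{(\tau_0)} \gtrsim T/N$, prove the first claim by contradiction against the requirement $\hv_{i,\tau_0} > 1/\sqrt{NT}$ in the trigger using the upper-bound half of $\event^{(1)}$, and prove the second claim by showing $\hv_{i,\tau} \leq 2 v_i$ and running the recursion $|\mathcal{T}(i,\tau)| \geq 1 + \sqrt{T\,|\mathcal{T}(i,\tau-1)|/(2Nv_i)}$. Your explicit induction with $g(\tau)=1-2^{-\tau+\tau_0+1}$ is exactly the paper's Fact~\ref{fact:seq}, and your observation that the factor $2$ in the base absorbs the $\hv_{i,\tau}\leq 2v_i$ constant is precisely how the paper handles it.
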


\paragraph{Upper bounding the number of assortment switches.} Suppose that there are $L$ epochs before the algorithm terminates. We only need to upper bound $\E \sum_{i=1}^{N} \tau_i(L)$ which upper bounds the number of assortment switches $\E[\AS_T]$. For each $i \in [N]$, if $\tau_i(L) \geq \tau_0$ and $\hv_{i, \tau_0} \leq 1/\sqrt{NT}$, we easily deduce that $\tau_i(L) \leq \tau_0 + 1$ because of the condition $\mathcal{P}(i, \tau_0)$.  Otherwise, assuming that $\hv_{i, \tau_0} > 1/\sqrt{NT}$, by Lemma~\ref{lem:stage-bound-2}, conditioned on $\event^{(1)}$,  we have that $v_i \geq \frac{1}{2 }\sqrt{\frac{1}{NT}}$ and $|\mathcal T(i, \tau)| \geq \frac{T}{4 N v_i}$ for all $\tau \in [\tau_0 + \log \log \frac{T}{2 N v_i} + 1, \tau_i(L) - 1]$. Because of $\event^{(2)}$, we have $n_{i, \tau} \geq \frac{v_i}{2} \cdot |\mathcal T(i, \tau)| \geq \frac{T}{8 N}$ for all $\tau \in [\tau_0 + \log \log \frac{T}{2 N v_i} + 1, \tau_i(L) - 1]$. Therefore, we know that there are no more than $8N$ pairs of $(i, \tau)$ satisfying $\tau \in [\tau_0 + \log \log \frac{T}{2 N v_i} + 1, \tau_i(L) - 1]$. In total, conditioned on $\event$, we have that
\begin{align}
&\quad  \E \sum_{i=1}^{N} \tau_i(L) \nonumber\\
& \lesssim N \tau_0 + \sum_{i = 1}^{N} \ind\Big[\hv_{i, \tau_0} > 1/\sqrt{NT}\Big] \log \log \frac{T}{2 N v_i}  \nonumber\\
&\qquad   + \E \sum_{i=1}^{N} \max\{\tau_i(L) - \tau_0 - \log \log \frac{T}{2 N v_i}, 0\} \nonumber \\
&\lesssim N \log \log T + \sum_{i = 1}^{N} \log \log \frac{T^{3/2}}{N^{1/2} } \lesssim N \log \log T, \label{eq:switch-loglogT}
\end{align}
where the second inequality is because of Lemma~\ref{lem:stage-bound-2}. Finally, since the contribution to the expected number of assortment switches when $\event$ fails  is at most $\Pr[\overline{\event}] \cdot T \leq O(1)$ (because of Lemma~\ref{lem:event prob}), we prove the upper bound for the number of assortment switches in Theorem~\ref{thm:AS-UB-loglogT}.

\paragraph{Upper bounding the expected regret.} Let $E^{(\ell)}$ be the length of epoch $\ell$, i.e., the number of time steps taken in epoch $\ell$. Note that $E^{(\ell)}$ is a geometric random variable with mean value $(1+\sum_{i\in S_\ell}v_i)$. Also recall that there are $L$ epochs in total.
Letting $S^*$ be the optimal assortment, conditioned on event $\event^{(1)}$, we have that
\begin{align}
 \expect{\regret_T} 
&=\E\sum_{\ell=1}^{L}E^{(\ell)}(R(\optS,\bm{v})-R(S_\ell,\bm{v})) \nonumber \\
&=\E\sum_{\ell=1}^{L}\left(1+\sum_{i\in S_\ell}v_i\right)(R(\optS,\bm{v})-R(S_\ell,\bm{v})) \nonumber \\
&\leq \E \sum_{\ell=1}^{L} \sum_{i\in S_\ell}(\hv_{i, \tau_i(\ell)}-v_i)\nonumber\\
&= \E \sum_{i=1}^N \sum_{\ell: i\in S_{\ell}} (\hv_{i, \tau_i(\ell)}-v_i) \nonumber\\
&= \E \sum_{i=1}^N \sum_{\tau=1}^{\tau_i(L)} \sum_{\ell \in \cT(i, \tau)} (\hv_{i, \tau}-v_i) ,\label{eq:regret-loglogT}
\end{align}
where the inequality is due to Lemma~\ref{lem:bound-regret-by-ucb-value}. In the next lemma, we upper bound the contribution from each item $i$ and stage $\tau$ to the upper bound in \eqref{eq:regret-loglogT}. The lemma is proved in Appendix~\ref{app:proof-lem-regret-in-stage}.

\begin{lemma}\label{lem:regret in stage}
Conditioned on event $\event^{(1)}$, for any item $i$ and any stage $\tau \leq \tau_i(L)$, we have that
\[
\sum_{\ell \in \cT(i, \tau)} (\hv_{i, \tau}-v_i) 
\lesssim \sqrt{{T\ln(\sqrt{N}T^2+1)}/{N}}.
\]
\end{lemma}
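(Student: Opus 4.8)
The plan is to exploit that the UCB value $\hv_{i,\tau}$ is frozen throughout stage $\tau$, so the quantity to bound collapses to a single product,
\[
\sum_{\ell\in\cT(i,\tau)}(\hv_{i,\tau}-v_i)=|\cT(i,\tau)|\cdot(\hv_{i,\tau}-v_i),
\]
and then to control the stage length $|\cT(i,\tau)|$ and the UCB gap $\hv_{i,\tau}-v_i$ separately. Write $\Lambda:=\ln(\sqrt NT^2+1)$, and note $\Lambda>1$ and $T_i^{(\tau)}\ge1$ for every $\tau\ge2$ with $\cT(i,\tau)\neq\emptyset$ (when $\cT(i,\tau)=\emptyset$ the sum is $0$). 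Conditioning on $\event^{(1)}$ gives $0\le\hv_{i,\tau}-v_i$, and, for $\tau\ge2$, the double bound $\hv_{i,\tau}-v_i\le\min\{1,\ \sqrt{144v_i\Lambda/T_i^{(\tau)}}+144\Lambda/T_i^{(\tau)}\}$, the clause ``$\le1$'' coming from the fact that the UCB is non-increasing and initialized at $1$; the stage $\tau=1$ is trivial since $|\cT(i,1)|\le2$ and the gap is at most $1$. For the stage length I would read it off the trigger $\mathcal P$ of \eqref{eq:calP}: since $|\cT(i,\tau)|$ increases by at most one per epoch and stage $\tau$ ends the first time $\mathcal P(i,\tau)$ holds (or at the horizon $T$), we get $|\cT(i,\tau)|\le 2+\sqrt{T\,T_i^{(\tau)}/N}$ when $\tau<\tau_0$, and $|\cT(i,\tau)|\le 2+\sqrt{T\,T_i^{(\tau)}/(N\hv_{i,\tau})}$ when $\tau\ge\tau_0$ and $\hv_{i,\tau_0}>1/\sqrt{NT}$. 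The only configuration not covered by these is $\tau\ge\tau_0$ with $\hv_{i,\tau_0}\le1/\sqrt{NT}$; there $\mathcal P(i,\cdot)$ can never fire again, so $\tau=\tau_i(L)=\tau_0$, the gap is at most $\hv_{i,\tau_0}\le1/\sqrt{NT}$, and even the crude bound $|\cT(i,\tau_0)|\le L\le T$ already yields $\sqrt{T/N}$.

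The one structural input that makes everything close is the elementary (deterministic) fact that for every $\tau\ge2$, either $\hv_{i,\tau}=1$ or $\hv_{i,\tau}\,T_i^{(\tau)}\ge48\Lambda$. Indeed, unrolling Algorithm~\ref{alg:update} gives $\hv_{i,\tau}=\min\{1,\,g_i(2),\dots,g_i(\tau)\}$ where $g_i(\tau')=\bv_{i,\tau'}+\sqrt{48\bv_{i,\tau'}\Lambda/T_i^{(\tau')}}+48\Lambda/T_i^{(\tau')}\ge 48\Lambda/T_i^{(\tau')}$; since $T_i^{(\cdot)}$ is non-decreasing, any minimizing index $\tau'\le\tau$ other than the constant $1$ yields $\hv_{i,\tau}=g_i(\tau')\ge 48\Lambda/T_i^{(\tau')}\ge 48\Lambda/T_i^{(\tau)}$.

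In the remaining ``main'' case I would multiply the two factor bounds and split on the size of $T_i^{(\tau)}$. The leading term is always harmless: $\sqrt{T\,T_i^{(\tau)}/(N\hv_{i,\tau})}\cdot\sqrt{144v_i\Lambda/T_i^{(\tau)}}=12\sqrt{v_iT\Lambda/(N\hv_{i,\tau})}\le12\sqrt{T\Lambda/N}$ using $v_i\le\hv_{i,\tau}$ (from $\event^{(1)}$), and with $\hv_{i,\tau}$ replaced by $1$ (the $\tau<\tau_0$ or $\hv_{i,\tau}=1$ case) it is only easier. The dangerous term is the lower-order one, $144\Lambda\sqrt{T/(N\,\hv_{i,\tau}\,T_i^{(\tau)})}$, which is $O(\sqrt{T\Lambda/N})$ exactly when $\hv_{i,\tau}T_i^{(\tau)}=\Omega(\Lambda)$: this is guaranteed by the structural fact whenever the $1/\hv_{i,\tau}$ normalization is active (i.e.\ $\hv_{i,\tau}<1$ and $\tau\ge\tau_0$), and when $\hv_{i,\tau}=1$ the trigger reduces to the clean form $|\cT(i,\tau)|\le2+\sqrt{T\,T_i^{(\tau)}/N}$. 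In the complementary regime where $T_i^{(\tau)}$ is below a fixed constant multiple of $\Lambda$, I would discard the concentration bound entirely and use $\hv_{i,\tau}-v_i\le1$; then the length itself is $O(\sqrt{T\Lambda/N})$, using $\hv_{i,\tau}\gtrsim\Lambda/T_i^{(\tau)}$ (again the structural fact) so that the denominator $\hv_{i,\tau}$ cannot blow up the bound. Summing the $O(1)$, $O(\sqrt{T\Lambda/N})$, and $O(\sqrt{T/N})$ contributions across the finitely many regimes gives the claim.

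I expect the main obstacle to be precisely this interaction between the additive $O(\Lambda/T_i^{(\tau)})$ term in the confidence width and the $1/\hv_{i,\tau}$ normalization baked into the adaptive trigger $\mathcal P$: a naive application of the two bounds loses a $\sqrt{\Lambda}$ factor here, and it is only the presence of the $48\Lambda/T_i^{(\tau)}$ summand in the UCB definition — which forces $\hv_{i,\tau}T_i^{(\tau)}\ge48\Lambda$ unless $\hv_{i,\tau}$ is pinned at $1$ — that absorbs it. Getting this piece right, and cleanly sorting the execution into the regimes ``$T_i^{(\tau)}$ small'', ``$\hv_{i,\tau}=1$'', ``$\hv_{i,\tau_0}\le1/\sqrt{NT}$'', and ``$\tau=1$'', is where essentially all the work lies; the rest is routine manipulation using $\sqrt{ab}\le\tfrac12(a+b)$ and $v_i\le\hv_{i,\tau}\le1$.
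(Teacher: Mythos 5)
Your proposal is correct, and it reaches the bound by a genuinely different route from the paper's at the one step where the argument could break. Both proofs use the same skeleton: freeze $\hv_{i,\tau}$ within a stage, bound the stage length by the trigger $\mathcal{P}$ (length $\le O(1)+\sqrt{T\,T_i^{(\tau)}/N}$ or $\le O(1)+\sqrt{T\,T_i^{(\tau)}/(N\hv_{i,\tau})}$), bound the gap by the $\event^{(1)}$ confidence width, and kill the leading cross term via $v_i\le\hv_{i,\tau}$; the degenerate cases ($\tau=1$, and $\hv_{i,\tau_0}\le1/\sqrt{NT}$ where the gap is already $\le1/\sqrt{NT}$) are handled the same way in both. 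The difference is how the lower-order term $\sqrt{T\,T_i^{(\tau)}/(N\hv_{i,\tau})}\cdot\Lambda/T_i^{(\tau)}$ is absorbed: the paper imports Lemmas~\ref{lem:stage-bound-1} and~\ref{lem:stage-bound-2} to get $T_i^{(\tau)}\ge\sqrt{T/N}$ (resp.\ $T_i^{(\tau)}\ge T/(2N)$ and $v_i,\hv_{i,\tau}\gtrsim1/\sqrt{NT}$) and then invokes $T\ge N^4$ to swallow the logarithms, whereas you use the deterministic dichotomy that the update rule forces $\hv_{i,\tau}T_i^{(\tau)}\ge48\ln(\sqrt{N}T^2+1)$ unless $\hv_{i,\tau}=1$, together with a split on whether $T_i^{(\tau)}\lesssim\ln(\sqrt{N}T^2+1)$ (in which regime you drop the concentration bound and note the trigger already caps the stage length at $O(\sqrt{T\Lambda/N})$). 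Your version is more self-contained — it does not need the stage-growth lemmas or the $T\ge N^4$ assumption for this particular lemma — and it also treats stage $\tau_0$ under the correct ($\tau\ge\tau_0$) trigger clause, which the paper's ``$\tau\in[2,\tau_0]$'' bucket glosses over; the paper's route, in exchange, recycles lemmas it needs anyway for the switching-cost bound and so adds no new machinery. The minor constants (e.g.\ $|\cT(i,1)|\le2$ versus the paper's $\sqrt{T/N}$, and $2+\sqrt{\cdot}$ versus $1+\sqrt{\cdot}$) are immaterial.
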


Combining Lemma~\ref{lem:event prob}, Lemma~\ref{lem:regret in stage}, inequalities \eqref{eq:switch-loglogT} and \eqref{eq:regret-loglogT}, we have that
\begin{align*}
& \expect{\regret_T}  \leq T \cdot \Pr[\overline{\event^{(1)}}] +  \expect{\regret_T \given \event^{(1)}}  \\
  \lesssim & 1 + \E \sum_{i=1}^{N} \tau_i(L) \times   \sqrt{\frac{T\ln(\sqrt{N}T^2+1)}{N}} \\
 \lesssim & \sqrt{NT\ln(\sqrt{N}T^2+1)} \cdot \log \log T,
\end{align*}
proving  the expected regret upper bound in Theorem~\ref{thm:AS-UB-loglogT}.

\paragraph{The lower bound.} We prove the following matching lower bound in Appendix~\ref{app:lb-AS-loglogT}.

\begin{theorem}\label{thm:lb-AS-loglogT}
For any constant $C \geq 0$ and time horizon $T$, if an algorithm $\mathcal{A}$ achieves expected regret $\E[\regret_T]$ at most $\frac{1}{7525} \cdot \sqrt{NT} (\ln (NT))^C$ for all $N$-item instances, then there exists an $N$-item instance  such that the expected number of assortment switches is 
\[
\E[\AS_T] = \Omega(N \log \log T).
\]
%, where  the constant  hidden in the $\Omega(\cdot)$ notation only depends on C.
\end{theorem}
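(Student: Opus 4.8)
The plan is to prove this lower bound by adapting the multi-scale (``layered'') technique used for batched / switching-constrained multi-armed bandits (e.g.\ \cite{PRCS15,GHRZ19,simchi2019phase}) to the MNL-bandit, and amplifying it by a factor $N$ via the capacity constraint. It is convenient and sufficient to build the hard family with capacity $K=1$ and rewards $r_i\equiv 1$: then each epoch offers a single item $i$ and returns one geometric sample of $v_i$ (Observation~\ref{ob:unbiased-est}), an assortment switch is simply a change of the offered item, and---since we take $v_i=\Theta(\sqrt{N/T})$---the instance behaves, up to constant factors in the relevant KL divergences, like an $N$-armed Bernoulli bandit.

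Fix a doubly-exponential grid of time scales $1=t_0<t_1<\dots<t_q\approx T$ with $q=\Theta(\log\log T)$, and to each ``layer'' $j$ attach a gap $\Delta_j$, the $t_j$'s and $\Delta_j$'s being tuned (as in the cited batched-bandit constructions, rescaled by the base value $v=\Theta(\sqrt{N/T})$) so that: (i) detecting that a single item is elevated by a factor $1+\Delta_j$ requires that item to have been offered in a constant fraction of the epochs available by time $t_{j-1}$; and (ii) an algorithm that is ``ready'' for only the first $q'$ layers must suffer expected regret at least $\sqrt{NT}\cdot T^{\,c\,2^{-q'}}$ for a universal $c>0$, which exceeds $\tfrac{1}{7525}\sqrt{NT}(\ln(NT))^{C}$ as soon as $q'=o(\log\log T)$ (the implied threshold depending on $C$). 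The hard family consists of a base instance with $v_i\equiv v$ together with, for each layer $j$ and item $a$, a member in which $a$ is the unique good item, elevated by $1+\Delta_j$ on layer $j$'s time window, the members coupled so that ``layer $j$, arm $a$'' is statistically indistinguishable from the base (and from the other arms' layer-$j$ members) given only the data available up to $\approx t_{j-1}$, yet distinguishable given data up to $t_j$. A change-of-measure / Pinsker argument on the algorithm's trajectory then yields the key dichotomy: either the algorithm has, by time $t_j$, offered \emph{every} item for a constant fraction of the then-available epochs---which, because $K=1$, costs $\Omega(N)$ assortment switches inside each layer's window, and these switch sets are disjoint across layers since the per-item exploration requirement strictly grows with $j$ (so it cannot be front-loaded)---or some item $a$ is under-explored, and on the family member with good item $a$ at operative scale $j$ the algorithm plays $\Delta_j$-suboptimally on a constant fraction of that layer's (long) window, i.e.\ it is in case (ii) with $q'<j$. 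Hence any algorithm meeting the regret hypothesis must make $\Omega(N)$ switches in each of $\Omega(\log\log T)$ layers, so $\E[\AS_T]=\Omega(N\log\log T)$ on the worst family member; averaging over the random choices of good item and operative scale and extracting a deterministic instance finishes the proof.

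The step I expect to be the main obstacle is the per-layer ``$\Omega(N)$ switches'' amplification: ruling out that the algorithm resolves a whole layer with only a handful of assortment changes. This is genuinely a feature of \emph{set}-actions under a capacity bound (with large $K$ one could offer all items simultaneously, which is why the construction must pin $K$ small and why the true bound scales with the number of items). Making it rigorous requires conditioning on the random set of switch times within each layer's window and a matching change-of-measure estimate showing that too few distinct items offered there leaves one item below the $\Delta_j$-detection threshold; moreover the tuning of the grid $\{t_j\}$ and the gaps $\{\Delta_j\}$ must be carried out with explicit constants so that the resulting regret beats $\tfrac{1}{7525}\sqrt{NT}(\ln(NT))^{C}$ for every fixed $C$, and one must verify that the geometric MNL feedback $\Pr[\Delta_i=k]=\bigl(\tfrac{v_i}{1+v_i}\bigr)^{k}\tfrac{1}{1+v_i}$, for $v_i=\Theta(\sqrt{N/T})$, obeys the KL-divergence bounds used in the multi-armed analysis up to constant factors.
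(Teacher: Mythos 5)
Your scaffolding coincides with the paper's: capacity $K=1$, unit rewards, a doubly-exponential grid of $\Theta(\log\log T)$ scales, a single elevated item whose gap is calibrated to the current scale, a Pinsker/KL indistinguishability step, and a regret penalty of order $\sqrt{NT}\cdot T^{\Theta(2^{-M})}$ that defeats any fixed power of $\ln(NT)$. The genuine gap is in the mechanism linking the switch budget to regret. Your central dichotomy --- either every item has been explored up to the layer-$j$ detection threshold by time $t_j$, which you claim forces $\Omega(N)$ switches inside layer $j$ with switch sets disjoint across layers, or some item is under-explored and hence the regret is large --- does not hold as stated. The first horn is information-theoretically unsatisfiable: with your calibration the detection threshold for one item is a constant fraction of the epochs available by $t_{j-1}$, so all $N$ items together would need $\Omega(N t_{j-1})$ epochs out of only $t_{j-1}$; and even with the weaker threshold $\Theta(t_j/N)$, the assertion that this exploration ``cannot be front-loaded'' and must be paid inside each window is precisely the unproved step. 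The second horn does not imply large regret by itself: an algorithm that has not resolved an under-explored item can simply keep offering it during the layer's window without ever ``detecting'' anything (this is exactly what low-regret algorithms such as FH-DUCB do), so under-exploration at $t_{j-1}$ hurts only if the item is \emph{also} not offered during the window. As written, your dichotomy would force every algorithm into the large-regret horn, which is false.

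What is missing --- and what the paper's proof supplies --- is that the switch budget controls the number of \emph{distinct items offered inside a window}: if the expected number of switches in the window $[T_{(j)}, T_{(j+1)}]$ (with $T_{(j)} = T^{(1-2^{-j})/(1-2^{-M})}$) is at most $N/8$, then at most $N/8+1$ distinct items are offered there in expectation, so a constant fraction of items are, with constant probability, never offered during the window; a Markov/pigeonhole step then finds among them an item offered at most $48\,T_{(j)}/N$ times before the window, Pinsker on that item's geometric feedback (elevation $\Theta(\sqrt{N/T_{(j)}})$ on top of base value $1/2$) gives indistinguishability, and the regret on the perturbed instance is (window length) $\times$ (gap) $\gtrsim \sqrt{N}\, T^{\frac{1}{2(1-2^{-M})}}$. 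Note also that no per-layer ``$\Omega(N)$ switches, disjoint across layers'' claim is needed: the argument runs in the contrapositive --- if $\E[\AS_T] \le NM/8$, then \emph{some} one of the $M$ windows has at most $N/8$ expected switches, and the adversary strikes only there. Choosing $M \approx \log_2\left(\log_2 T / (2C \log_2 \ln (NT))\right)$ makes the penalty exceed $\frac{1}{7525}\sqrt{NT}(\ln(NT))^C$, yielding $\E[\AS_T] = \Omega(NM) = \Omega(N\log\log T)$. If you replace ``readiness of all items at every boundary'' by ``few distinct items offered in the under-switched window,'' the remainder of your plan goes through.
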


\section{Optimizing the number of item switches} \label{sec:IS}

In this section, we study how to minimize the item switch cost while still achieving $\tilde{O}(\sqrt{NT})$ regret.

\begin{algorithm2e}[h]
	\caption{The Exponential Stride UCB algorithm (\Talgo) for MNL-Bandit}
	\label{alg:trisection}
		Initialize: $\htheta \gets 1,\epsilon_1 \gets 1/3$, $c_1 \gets 44840$\;
		\For{$\tau\gets 1,2,3,\dots$} {
			$\tmax\gets c_1 N\ln^3(NT/\delta)/\epsilon_\tau^2$\;\label{line:tmax}
			\textbf{if} $\chk(\htheta-3\epsilon_\tau, \htheta-\epsilon_\tau, \tmax)$ \textbf{then} $\htheta\gets\htheta-\epsilon_\tau$\; \label{line:first-invocation}
%			$b\gets \chk(\htheta-3\epsilon_\tau, \htheta-\epsilon_\tau, \tmax)$\;\label{line:first-invocation}
%			\If{$b=\true$}{
%				$\htheta\gets\htheta-\epsilon_\tau$\;
%			}
			$\epsilon_{\tau+1}\gets \frac{2}{3}\epsilon_{\tau}$\;
		}
\end{algorithm2e}

We now propose a new algorithm, Exponential Stride UCB (ESUCB), to achieve an item switching cost that is linear with $N$ and poly-logarithmic with $T$. The specific guarantee of the ESUCB algorithm is presented in Theorem~\ref{thm:ESUCB}, the main theorem of this section. The key idea of the algorithm is to decouple the learning of the optimal expected revenue and the optimal assortment, which is made possible by the following lemma.
\begin{lemma}\label{lem:characterization-of-G}
Define $G(\theta)\defeq R(S_\theta,\bm{v}),$ where $S_\theta\defeq\arg\max_{S\subseteq [N]: |S| \leq K}\left(\sum_{i\in S}v_i(r_i-\theta)\right)$. There exists a unique $\optt$ such that 
\[
G(\optt)=\optt=\max_{|S| \leq K} R(S,\bm{v}).
\]
Moreover, 
\begin{itemize}
\item[(1)] for any $\theta<\optt$, we have that $G(\theta)>\theta$, and 
\item[(2)] for any $\theta>\optt$, we have that $G(\theta)<\theta$.
\end{itemize}
\end{lemma}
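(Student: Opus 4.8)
The plan is to linearize the ratio objective and reduce the whole statement to an elementary fact about a piecewise-linear strictly decreasing function. For $S\subseteq[N]$ with $|S|\le K$ and $\theta\in\R$, write $f(S,\theta)\defeq\sum_{i\in S}v_i(r_i-\theta)$, so that $S_\theta=\arg\max_{|S|\le K}f(S,\theta)$, and set $F(\theta)\defeq\max_{|S|\le K}f(S,\theta)=f(S_\theta,\theta)$. The key elementary observation is that, since $1+\sum_{i\in S}v_i>0$ for every $S$, multiplying through by this denominator gives
\[
R(S,\bm{v})\ge\theta\iff\sum_{i\in S}r_iv_i\ge\theta\Big(1+\sum_{i\in S}v_i\Big)\iff f(S,\theta)\ge\theta,
\]
and the same chain of equivalences holds with ``$\ge$'' replaced everywhere by ``$=$'' or by ``$<$''. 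Taking the maximum over $S$ yields $\max_{|S|\le K}R(S,\bm{v})\ge\theta\iff F(\theta)\ge\theta$ (and likewise for $<$), while specializing to $S=S_\theta$ shows that $G(\theta)>\theta$, $G(\theta)=\theta$, $G(\theta)<\theta$ hold exactly when $F(\theta)>\theta$, $F(\theta)=\theta$, $F(\theta)<\theta$, respectively; note that this makes the sign of $G(\theta)-\theta$ well defined even if $S_\theta$ is not unique, since every maximizer attains the same value $f(S_\theta,\theta)=F(\theta)$.

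Next I would record the structural properties of $F$. Each $\theta\mapsto f(S,\theta)=\sum_{i\in S}v_ir_i-\theta\sum_{i\in S}v_i$ is affine with slope $-\sum_{i\in S}v_i\le0$, so $F$, being the pointwise maximum of finitely many such functions, is convex, continuous and piecewise linear, and on every linear piece it has slope $-\sum_{i\in S_\theta}v_i\le0$. Hence $h(\theta)\defeq F(\theta)-\theta$ is continuous with slope at most $-1$ everywhere, i.e.\ strictly decreasing. Moreover $h(0)=F(0)\ge f(\emptyset,0)=0$, and for all sufficiently large $\theta$ (any $\theta>\max_i r_i$ with $\theta>0$) every nonempty $S$ gives $f(S,\theta)<0$, so $F(\theta)=0$ and $h(\theta)=-\theta<0$. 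By the intermediate value theorem together with strict monotonicity, $h$ has a unique zero, which I take to be $\optt$; thus $F(\optt)=\optt$ and $\optt\in[0,\max_i r_i]$.

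Finally I would read off all the conclusions from the sign correspondence. From $F(\optt)=\optt$ we get $G(\optt)=\optt$. From $\max_{|S|\le K}R(S,\bm{v})\ge\theta\iff F(\theta)\ge\theta$: at $\theta=\optt$ this gives $\max_{|S|\le K}R(S,\bm{v})\ge\optt$, while for every $\theta>\optt$ strict monotonicity gives $F(\theta)<\theta$, hence $\max_{|S|\le K}R(S,\bm{v})<\theta$; letting $\theta\downarrow\optt$ forces $\max_{|S|\le K}R(S,\bm{v})=\optt$. For $\theta<\optt$ we have $F(\theta)>\theta$ by strict monotonicity, hence $G(\theta)>\theta$, which is part~(1); for $\theta>\optt$ we have $F(\theta)<\theta$, hence $G(\theta)<\theta$, which is part~(2). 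Uniqueness of $\optt$ is then immediate: any $\theta$ with $G(\theta)=\theta$ cannot lie strictly below $\optt$ (it would contradict~(1)) or strictly above $\optt$ (it would contradict~(2)), so $\theta=\optt$.

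The only point requiring a little care, rather than a genuine obstacle, is the possible non-uniqueness of the maximizer $S_\theta$; as noted this is harmless, since all maximizers share the value $F(\theta)$, so the sign of $G(\theta)-\theta$ and in particular the identity $G(\optt)=\optt$ do not depend on the choice. Every remaining step is a one-line consequence of the linearization, so no serious computation is involved.
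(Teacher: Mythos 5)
Your proof is correct, but it takes a genuinely different route from the paper's. Both arguments hinge on the same linearization identity $R(S,\bm{v})\ge\theta\iff\sum_{i\in S}v_i(r_i-\theta)\ge\theta$, but the paper defines $\optt$ directly as the optimal revenue $\max_{|S|\le K}R(S,\bm{v})$ and verifies $G(\optt)=\optt$ together with statements (1) and (2) by short contradiction arguments that play the optimality of $S_\theta$ for the linearized objective against the optimality of $\optS$ for $R$; it never studies $F(\theta)=\max_{|S|\le K}\sum_{i\in S}v_i(r_i-\theta)$ as a function of $\theta$. You instead take the Dinkelbach-style parametric view: $F$ is a finite maximum of non-increasing affine functions, so $F(\theta)-\theta$ is continuous and strictly decreasing, the intermediate value theorem yields a unique root, and every assertion of the lemma (including that this root equals the optimal revenue, recovered by letting $\theta\downarrow\optt$) follows from the sign correspondence between $G(\theta)-\theta$ and $F(\theta)-\theta$. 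Your route needs a bit more structure (continuity, monotonicity, IVT) but buys a cleaner global picture — existence, uniqueness, and the monotone sign pattern of $G(\theta)-\theta$ come out at once rather than case by case — and your explicit observation that non-uniqueness of the maximizer $S_\theta$ is harmless is a point the paper leaves implicit. One cosmetic slip: for $\theta>\max_i r_i$, a nonempty $S$ consisting only of items with $v_i=0$ gives $\sum_{i\in S}v_i(r_i-\theta)=0$ rather than a strictly negative value, but the conclusion you actually use, namely $F(\theta)=0$ and hence $F(\theta)-\theta<0$, is unaffected.
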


The proof of Lemma~\ref{lem:characterization-of-G} is deferred to Appendix~\ref{app:proof-lem-char-G}. Motivated by the lemma, we present our ESUCB algorithm in Algorithm~\ref{alg:trisection}. The algorithm learns the optimal revenue $\optt$ in the main loop, using a sequence of exponentially decreasing learning step size $\epsilon_\tau$. For each estimate $\htheta$, the \chk procedure (Algorithm~\ref{alg:check}) learns the assortment $S_{\htheta}$ via the UCB method with deferred updates. (More precisely speaking, the algorithm learns $S_{\htheta-\epsilon_\tau}$ and $S_{\htheta-3\epsilon_\tau}$, and at Line~\ref{line:if}, chooses one of them based on the UCB estimation $\hat{\rho}$ for the expected revenue of $S_{\htheta-\epsilon_\tau}$.)  In the \chk procedure, the variable $t$ keeps the count of time steps and is updated in \Exp. We also make the following notes: 1) The ESUCB algorithm needs the horizon $T$ as input, and uses a confidence parameter $\delta$, which is usually set as $1/T$. The whole algorithm terminates whenever the horizon $T$ is reached. 2) At the optimization steps (Lines \ref{line:erm-with-fixed-theta-0} and \ref{line:erm-with-fixed-theta} of Algorithm~\ref{alg:check}), we have to adopt a deterministic tie breaking rule, e.g., we let the $\arg\max$ operator to return the $S$ such that $\sum_{i \in S} 2^i$ is minimized among multiple maximizers.

\begin{theorem}\label{thm:ESUCB}
%We set $c_2, c_3 > 0$ to be large enough constants, and $c_1 > 0$ to be a large enough constant given $c_2$ and $c_3$ (for example, $c_1=44840=2(c_2+c_3)$,$c_2=688$, and $c_3=21732$, as used in the proof of this theorem). Then, for $\delta = 1/T$,

Setting $\delta = 1/T$, we have the following upper bound for the expected regret of ESUCB:
\[
 \expect{\regret_T} \lesssim \sqrt{NT} \cdot \log^{1.5} (N T),
 \]
and the item switching cost for ESUCB is 
\[
\expect{\IS_T} \lesssim N \log^2  T.
\]
\end{theorem}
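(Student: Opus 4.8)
The plan is to isolate a single high‑probability \emph{clean event} on which both the regret bound and the item‑switching bound hold deterministically, and then to analyze the outer (revenue‑learning) loop and the inner assortment‑learning \chk calls separately. First I would let $\event$ be the event that every UCB value $\hv_i$, and the revenue estimate $\hat\rho$, maintained in every invocation of \chk is a valid upper confidence bound with the usual width; using the geometric structure of Observation~\ref{ob:unbiased-est} and a union bound over the $\poly(N,T)$ relevant estimates gives $\Pr[\overline{\event}]\le \poly(N,T)\cdot\delta$, so with $\delta=1/T$ the contributions of $\overline{\event}$ to $\expect{\regret_T}$ and to $\expect{\IS_T}$ are $O(1)$. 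On $\event$ I would establish the loop invariant that $\htheta\ge\optt$ throughout (it holds at the start since $\htheta=1\ge\optt$) and that $\htheta-\optt=O(\epsilon_\tau)$ after outer iteration $\tau$: the budget $\tmax=c_1 N\ln^3(NT/\delta)/\epsilon_\tau^2$ is chosen so that, by the standard MNL‑UCB concentration used for Theorem~\ref{thm:ucb-doubling-regret}, the estimate $\hat\rho$ of $G(\htheta-\epsilon_\tau)$ is accurate to within $O(\epsilon_\tau)$; hence \chk correctly decides whether $\optt$ lies above or below $\htheta-\epsilon_\tau$ up to $O(\epsilon_\tau)$ slack, and parts (1)--(2) of Lemma~\ref{lem:characterization-of-G} confine $\optt$ to an $O(\epsilon_\tau)$‑window just below $\htheta$. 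I would also record the short Lipschitz‑type estimate $0\le\optt-G(\theta)\lesssim\optt-\theta$ for $\theta\le\optt$ (immediate from the optimality defining $S_\theta$), which converts ``$\theta$ close to $\optt$'' into ``$R(S_\theta,\bm v)$ close to the optimum''.

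\emph{Regret.} On $\event$ I would split the regret of iteration $\tau$ into (i) the exploration regret of the deferred‑update UCB inside \chk, which is $\tilde O(\sqrt{N\tmax})$ over its $\le\tmax$ time steps, exactly as in the analysis of Algorithm~\ref{alg:doubling-ucb}; and (ii) the suboptimality of offering a UCB‑approximation of $S_{\htheta-\epsilon_\tau}$ or $S_{\htheta-3\epsilon_\tau}$ instead of $S^\star$, which by the Lipschitz estimate and the invariant costs $O(\epsilon_\tau)$ per time step, i.e.\ $O(\epsilon_\tau\tmax)=O(N\ln^3(NT/\delta)/\epsilon_\tau)$ per iteration. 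Both $\sqrt{N\tmax}$ and $N\ln^3(NT/\delta)/\epsilon_\tau$ grow geometrically in $\tau$, so summing over the $O(\log T)$ iterations is dominated by the last one; since the total horizon is $T$, the last iteration has $\tmax=O(T)$, hence $\epsilon_{\text{last}}=\Omega(\sqrt{N\ln^3(NT/\delta)/T})$, and both sums become $\tilde O(\sqrt{NT\ln^3(NT/\delta)})$. With $\delta=1/T$ this yields $\expect{\regret_T}\lesssim\sqrt{NT}\log^{1.5}(NT)$ after adding back the $O(1)$ from $\overline{\event}$.

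\emph{Item switching cost, and the main obstacle.} The key structural fact, special to this decoupled algorithm, is that for a \emph{fixed} revenue parameter $\theta$ the maximizer of $\sum_{i\in S}\hv_i(r_i-\theta)$ over $|S|\le K$ (the optimization at Lines~\ref{line:erm-with-fixed-theta-0}--\ref{line:erm-with-fixed-theta} of Algorithm~\ref{alg:check}) is simply the items with $r_i>\theta$, keeping the top $K$ of them by $\hv_i(r_i-\theta)$ under the fixed tie‑break; hence, with $\theta$ and all $\hv_j$ ($j\neq i$) held fixed, a single downward update of $\hv_i$ changes this assortment by at most two items (either $i$ falls out of the top $K$ and one new item enters, or nothing changes). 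Granting this, the offered assortment can change only when (a) the revenue parameter changes or a \chk call switches between its two candidate parameters --- $O(\log T)$ events in total, each contributing at most $\min\{2K,N\}$ item switches --- or (b) some $\hv_i$ is updated inside a \chk call; since each call runs the doubling‑update UCB from scratch over at most $\tmax\le T$ epochs, each item is updated $O(\log T)$ times per call, and there are $O(\log T)$ calls, for $O(N\log^2 T)$ update events, each costing $O(1)$ item switches. Summing (a) and (b) and adding the $O(1)$ from $\overline{\event}$ gives $\expect{\IS_T}\lesssim N\log^2 T$. I expect the main obstacle to be making this structural observation and its bookkeeping airtight --- in particular ensuring that \chk does not oscillate between the assortments learned for its two parameters, and that the sampling budget $\tmax$ is matched to the $O(\epsilon_\tau)$ accuracy the exponential‑stride search needs --- since everything else reduces to concentration bounds and geometric‑series sums already used for the earlier algorithms.
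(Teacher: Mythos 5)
Your plan follows essentially the same route as the paper. The item-switching half is the paper's argument verbatim: for a fixed $\theta$ the maximizer of $\sum_{i\in S}\hv_i(r_i-\theta)$ changes by $O(1)$ items per deferred UCB update, the flag $b$ flips at most once per \chk call, there are $O(\log T)$ calls, hence $O(N\log^2 T)$ (Lemma~\ref{lem:item-switch-check}). The regret skeleton --- clean event, the invariant $\htheta^{(\tau)}-3\epsilon_\tau\le\optt\le\htheta^{(\tau)}$, per-iteration regret $\tilde O(N\ln^3(NT/\delta)/\epsilon_\tau)$, and the geometric sum with $\epsilon_{\tau_{\rm max}}\gtrsim\sqrt{N\log^3(NT/\delta)/T}$ --- is exactly Lemmas~\ref{lem:ucb-theta} and \ref{lem:trisection-regret}. (A minor point: the exploration regret inside \chk is not literally ``as in Algorithm~\ref{alg:doubling-ucb},'' because $S_\ell$ maximizes the linear surrogate $\sum_i\hv_i(r_i-\theta)$ rather than $R(S,\htv)$; the paper proves the adapted comparison Lemma~\ref{lem:regret-by-ucb} for this, but that adaptation is routine.)

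There is, however, one genuine gap in your regret argument: the time \chk spends in the else-branch, offering UCB approximations of $S_{\theta_r}$ with $\theta_r=\htheta-\epsilon_\tau$. Your Lipschitz estimate $0\le\optt-G(\theta)\lesssim\optt-\theta$ is stated (and is only true) for $\theta\le\optt$, but $\theta_r$ can exceed $\optt$ --- indeed this is exactly the situation in which \chk is supposed to return \true. On that side $G$ is not $O(1)$-Lipschitz: comparing $S_{\theta}$ with $S^\star$ only yields $G(\theta)\ge\optt-(\theta-\optt)\cdot\bigl(\sum_{i\in S^\star}v_i\bigr)/\bigl(1+\sum_{i\in S_\theta}v_i\bigr)$, so the constant can be of order $K$, and plugging this into your step (ii) would inflate the bound to $\tilde O(K\sqrt{NT})$. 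The paper's mechanism (proof of statement (c) of Lemma~\ref{lem:check-main}) is different and is what you would need: as long as the algorithm remains in the else-branch, the condition $\hat{\rho}\ge\theta_r$ at Line~\ref{line:if} is an \emph{empirical certificate} that the reward collected so far is at least $t\theta_r$ minus the confidence terms (and $\theta_r\ge\theta_l$, with $\optt-\theta_l\lesssim\epsilon_\tau$ by the invariant); once $\hat{\rho}$ drops below $\theta_r$, the algorithm switches (permanently, since $b$ is sticky) to $\theta_l\le\optt$, where $G(\theta_l)\ge\theta_l$ and your UCB-plus-Lipschitz argument does apply. You flag the ``oscillation between the two candidate parameters'' as an obstacle, but the resolution is not bookkeeping --- it is this certificate argument, and without it the claimed per-iteration regret does not follow for iterations in which $\theta_r>\optt$.
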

To prove Theorem~\ref{thm:ESUCB}, we upper bound the item switching cost and the expected regret separately.

\paragraph{Upper bounding the item switch cost.}  Since the estimate of $\optt$ is fixed in \chk, the outcome of $\mathop{\arg\max}_{S: |S| \leq K} \sum_{i\in S}\hv_i(r_i-\theta)$ (corresponding to Lines \ref{line:erm-with-fixed-theta-0} and \ref{line:erm-with-fixed-theta} of Algorithm~\ref{alg:check}) becomes more stable compared to that of $\mathop{\arg\max}_{S: |S| \leq K} R(S, \bm{\hv})$ in previous algorithms. Exploiting this advantage, we upper bound the number of item switches incurred by each call of \chk as follows. The lemma is proved in Appendix~\ref{app:proof-lem-item-switch-check}. 

\begin{algorithm2e}[h]
	\caption{$\chk(\theta_l,\theta_r, \tmax)$}
	\label{alg:check}
		Initialize: $\hv_i\gets 1, T_i\gets 0, n_i\gets 0$ for all $i\in [N]$, $c_2 \gets 688$, $c_3 \gets 21732$\;
		$\rho \gets 0$, $\hat{\rho} \gets 1$, $ b\gets \false$, $t\gets 0$\;
		\For{$\ell \gets 1,2,3,\dots$} {
			\If{$\hat{\rho}<\theta_r$}{\label{line:if}
				$b\gets\true$\;
				{\small $S_\ell\gets \mathop{\arg\max}_{S \subseteq [N], |S| \leq K}\left(\sum_{i\in S}\hv_i(r_i-\theta_l)\right)$}\; \label{line:erm-with-fixed-theta-0}
				$\{\Delta_i\}\gets \Exp(S_\ell)$\;
			}
			\Else{
				{\small $S_\ell\gets\mathop{\arg\max}_{S \subseteq [N], |S| \leq K}\left(\sum_{i\in S}\hv_i(r_i-\theta_r)\right)$}\;\label{line:erm-with-fixed-theta}
				$\{\Delta_i\}\gets \Exp(S_\ell)$\;
				$\rho\gets \rho+\sum_{i \in S_{\ell}} \Delta_i \cdot r_i$;
				$\hat{\rho}\gets \frac{1}{t}\big(\rho + c_2\sqrt{N\tmax \ln^3(NT/\delta)}+c_3N\ln^3(NT/\delta)\big)$\;
			}
			\textbf{if} $t\ge \tmax$ \textbf{then} \textbf{return} $b$\;
%			\If{$t\ge \tmax$}{
%				\Return $b$\;
%			}
			\For{$i\in S_\ell$} {
				$n_i\gets n_i+\Delta_i$, $T_i\gets T_i+1$\;
				\If{$T_i=2^{k}$ for some $k\in \Z$}{
					$\bv_i\gets n_i/T_i$;
					$\hv_i\gets \min\big\{\hv_i,\bv_i+\sqrt{\frac{196\bv_i\log(NT/\delta+1)}{T_i}}+\frac{292\log(NT/\delta+1)}{T_i}\big\}$\;
				}
			}
		}
\end{algorithm2e}

\begin{lemma}\label{lem:item-switch-check}
The item switch cost incurred by any invocation  $\chk(\theta_l, \theta_r ,\tmax)$ is $O(N \log T)$.
\end{lemma}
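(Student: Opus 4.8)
The plan is to track the assortment $S_\ell$ offered within a single call $\chk(\theta_l,\theta_r,\tmax)$ and to bound the total number of items that enter or leave $S_\ell$ over all epochs $\ell$. The key structural observation is that within a single invocation of \chk, the threshold against which we optimize — either $\theta_l$ or $\theta_r$ — is \emph{fixed} (it only flips from $\theta_l$ to $\theta_r$ once, when $b$ is set to \true at Line~\ref{line:if}, and never flips back). So, apart from that single global switch (which costs at most $\min\{2K,N\}\le N$ item switches, a $O(N)$ term), the offered assortment is always $S_\ell = \arg\max_{|S|\le K}\sum_{i\in S}\hv_i(r_i-\theta)$ for a fixed $\theta\in\{\theta_l,\theta_r\}$, where the only thing changing between consecutive epochs is the UCB vector $\htv$. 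Moreover $\htv$ is updated only via the doubling rule ($T_i = 2^k$), so across the whole call there are at most $O(\log \tmax) = O(\log T)$ coordinates-updates per item, hence $O(N\log T)$ UCB update events total; and each $\hv_i$ is nonincreasing.

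The main step is then: between two consecutive UCB-update events, $S_\ell$ does not change at all, so it suffices to bound the item switches caused by a single coordinate update $\hv_i \downarrow \hv_i'$. First I would exploit that the objective $f_\theta(S) = \sum_{j\in S}\hv_j(r_j-\theta)$ decouples over items: the unconstrained maximizer is exactly $\{j : \hv_j(r_j-\theta) > 0\}$, and with the cardinality constraint $|S|\le K$ the optimizer is obtained by taking the $K$ items with the largest positive values of $\hv_j(r_j-\theta)$ (with the deterministic tie-break specified before Theorem~\ref{thm:ESUCB}). When a single coordinate $\hv_i$ decreases, the sorted order of the scores $\{\hv_j(r_j-\theta)\}_j$ changes only in that item $i$'s score moves down (or, if $r_i-\theta<0$, its score moves up toward $0$ but stays $\le 0$ so it is still excluded; and if $r_i - \theta > 0$ its score strictly decreases). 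Consequently the top-$K$ set changes by removing at most item $i$ and admitting at most one new item — so each single-coordinate UCB update changes $S_\ell$ by at most $2$ item switches. Combined with the $O(N\log T)$ bound on the number of update events and the single global $\theta_l\to\theta_r$ flip costing $O(N)$, the total item switch cost inside one \chk call is $O(N\log T) + O(N) = O(N\log T)$.

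The part needing the most care is the tie-breaking: when $\hv_i$ decreases, several items may be tied at the same score, and the deterministic rule "minimize $\sum_{j\in S}2^j$" must be shown not to cause cascading reshuffles. Concretely, one has to argue that among items with equal score, the rule imposes a fixed priority (prefer smaller indices), so that a decrease in one coordinate can only: (a) drop $i$ out of the chosen set, (b) let at most one previously-excluded item in, and (c) possibly re-rank items \emph{within} the chosen set without changing the set. Since (c) does not contribute to $|S_\ell\oplus S_{\ell+1}|$, only (a) and (b) matter. I would make this rigorous by showing that the chosen set is a monotone function of the score vector under the lexicographic $(\text{score},-\text{index})$ order, and that lowering exactly one coordinate moves exactly one point in this order; a short exchange argument then gives the "$\le 2$ symmetric-difference per update" claim. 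Everything else — counting doubling events as $\lfloor\log_2 \tmax\rfloor+1 = O(\log T)$ per item, and the one-time $\theta$ flip — is routine.
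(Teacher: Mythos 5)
Your proposal is correct and follows essentially the same route as the paper's proof: note that the optimization threshold $\theta$ flips from $\theta_r$ to $\theta_l$ at most once per invocation (an $O(N)$ one-time cost), then bound the symmetric difference between consecutive assortments by a constant per single-coordinate UCB update (the paper states at most one switch per update, you state at most two; both suffice), and finally count $O(N\log T)$ doubling-triggered updates per invocation. The extra care you take with the deterministic tie-breaking rule is exactly the point the paper handles with its parenthetical ``thanks to the tie breaking rule,'' so there is no substantive difference.
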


Since the $\tau$ loop in Algorithm~\ref{alg:trisection} iterates for only $O(\log T)$ times, Lemma~\ref{lem:item-switch-check} easily implies an $O(N \log^2 T)$ item switching cost upper bound for ESUCB. We also note that this bound can be improved to $O(N \log T)$ via a slight modification to the algorithm which is elaborated in Appendix~\ref{app:ESUCB-further-improve}.

\paragraph{Upper bounding the expected regret.}  We first provide the following guarantees for  \chk. 

\begin{lemma}[Main Lemma for \chk]\label{lem:check-main}
For any invocation $\chk(\theta_l,\theta_r,\tmax)$, with probability at least $(1-\delta/T)$, the following statements hold.
\begin{itemize}
\item[(a)] If \chk returns $\true$, then $G(\theta_r)<\theta_r$.
\item[(b)] If \chk returns $\false$, then 
\[
\theta^\star\ge \theta_r-\frac{2}{\tmax}\left(c_2\sqrt{N\tmax\ln^3\frac{NT}{\delta}}+c_3N\ln^3\frac{NT}{\delta}\right).
\]
\item[(c)] Let $r_{\chk}^{(t)}$ be the reward at time step $t$ in this invocation. If $\theta_l\le \theta^\star$, then we have that 
\begin{align*}
&\tmax\theta_l-\E\left[\sum_{t=1}^{\tmax}r_{\chk}^{(t)}\right] \\
& \qquad \lesssim \sqrt{N\tmax\ln^3 (NT/\delta)}+N\ln^3(NT/\delta).
\end{align*}
\end{itemize}
\end{lemma}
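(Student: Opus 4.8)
The plan is to first set up a high-probability "good event'' on which all the UCB estimates in the invocation behave as intended, and then prove parts (a), (b), (c) deterministically on this event. The good event should assert two things simultaneously for the whole invocation: (i) for every item $i$, at every moment, the running estimate $\hv_i$ is a genuine upper confidence bound, i.e.\ $v_i \le \hv_i \le v_i + O(\sqrt{v_i \log(NT/\delta)/T_i} + \log(NT/\delta)/T_i)$, which follows from Observation~\ref{ob:unbiased-est} together with a Freedman/Bernstein-type concentration inequality for the geometric samples $\Delta_i$, exactly as in the MNL-bandit analysis of \cite{AAGZ16} (and as used to define $\event^{(1)}$ earlier); and (ii) the reward accumulator $\rho$ used to form $\hat\rho$ concentrates around its mean, so that $\hat\rho$ is, up to the additive slack built into its definition (the $c_2\sqrt{N\tmax\ln^3(NT/\delta)}+c_3 N\ln^3(NT/\delta)$ terms), an upper confidence bound on the per-step expected revenue of the currently offered assortment $S_{\htheta-\epsilon_\tau}$-branch. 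A union bound over the at most $O(\log T)$ doubling updates per item and over $N$ items, with the $\ln^3(NT/\delta)$ factor absorbing all the log terms, gives failure probability at most $\delta/T$; here the choice of the constants $c_1,c_2,c_3$ and the cube power of the log are exactly what makes the union bound go through, so I would state the concentration lemma with enough slack and not optimize constants.

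For part (a): if \chk returns $\true$, then at some epoch the test $\hat\rho<\theta_r$ fired, meaning the UCB on the expected revenue of the assortment $S_\ell = \arg\max_S \sum_{i\in S}\hv_i(r_i-\theta_r)$ (the $\theta_r$-branch) dropped below $\theta_r$. On the good event $\hat\rho$ is an over-estimate of $R(S_\ell,\bm v)$, and moreover using $\hv_i \ge v_i$ one has $\sum_{i\in S_\ell}\hv_i(r_i-\theta_r) \ge \sum_{i\in S_{\theta_r}} v_i(r_i-\theta_r)$ where $S_{\theta_r}$ is the true maximizer defining $G$; chaining these inequalities and rearranging via the algebraic identity $R(S,\bm v)-\theta = \frac{1}{1+\sum_{i\in S}v_i}\sum_{i\in S}v_i(r_i-\theta)$ (which is where Lemma~\ref{lem:characterization-of-G}'s $G$ enters) yields $G(\theta_r) < \theta_r$. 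Part (b) is the contrapositive-flavored statement: if \chk returns $\false$, the test never fired, so at the final step $\hat\rho \ge \theta_r$; since $\hat\rho = \frac{1}{t}(\rho + \text{slack})$ and on the good event $\rho/t$ is close to the true per-step revenue $R(S_\ell,\bm v) \le \theta^\star = \max_{|S|\le K}R(S,\bm v)$, we get $\theta_r \le \hat\rho \le \theta^\star + \frac{1}{t}\cdot(\text{slack})$, and using $t \le \tmax$ (actually $t$ close to $\tmax$ at return, which needs a short argument that the loop runs long enough — geometric epoch lengths have mean $\le 1+K \le 1+N$, so $t$ reaches $\Theta(\tmax)$) rearranges to the stated bound with the factor $2/\tmax$.

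For part (c): this is the regret bound for a single invocation when $\theta_l \le \theta^\star$. Split into the two branches. In the $\theta_r$-branch, the algorithm offers $S_\ell = \arg\max_S\sum_{i\in S}\hv_i(r_i-\theta_r)$; I would bound $\tmax\theta_l - \E[\sum r^{(t)}_{\chk}]$ by $\tmax\theta^\star - \E[\sum R(S_\ell,\bm v)\cdot(\text{epoch length})]$ plus lower-order terms, then use the standard UCB telescoping: $\theta^\star - R(S_\ell,\bm v) \le$ (something controlled by $\sum_{i\in S_\ell}(\hv_i - v_i)$ up to the $1+\sum v_i$ normalization and a $\theta_r$ vs $\theta^\star$ discrepancy that is itself $O(\epsilon_\tau)$ and absorbed), and finally sum $\sum_{i}\sum_{\text{epochs}}(\hv_i-v_i)$ over the invocation using $\sum_\ell 1/\sqrt{T_i(\ell)} \lesssim \sqrt{T_i^{\text{final}}} \le \sqrt{\tmax}$, giving the $\sqrt{N\tmax\ln^3(NT/\delta)}$ term and the lower-order $N\ln^3(NT/\delta)$ term. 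The $\theta_l$-branch is handled identically with $\theta_l$ in place of $\theta_r$; here $\theta_l \le \theta^\star$ is used so that the UCB-style inequality $\sum_{i\in S_\ell}\hv_i(r_i-\theta_l) \ge \sum_{i\in S_{\theta^\star}}v_i(r_i-\theta_l) \ge (1+\sum v_i)(\theta^\star - \theta_l) + \dots$ points the right way. I expect the main obstacle to be part (c): carefully tracking how the fixed-$\theta$ objective $\sum_{i\in S}\hv_i(r_i-\theta)$ relates to the true revenue gap $\theta^\star - R(S_\ell,\bm v)$ when $\theta \in \{\theta_l,\theta_r\}$ is only approximately $\theta^\star$, and making sure the epoch-based (random-length) accounting and the deferred-update (doubling) UCB both cooperate in the telescoping sum — this is the place where the MNL structure, as opposed to plain MAB, makes the bookkeeping delicate, and where I would need to invoke Lemma~\ref{lem:characterization-of-G} to convert objective-value gaps into revenue gaps.
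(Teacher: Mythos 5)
Your treatment of parts (a) and (b) is essentially the paper's: a good event asserting the validity of the item-level UCBs throughout the invocation (the paper's Lemma~\ref{lem:ucb-value-check}), the key deterministic inequality $\left(1+\sum_{i\in \hat S_\theta}v_i\right)\left(\theta - R(\hat S_\theta,\bm{v})\right) \le \sum_{i\in\hat S_\theta}(\hv_i - v_i)$, valid whenever $G(\theta)\ge\theta$ (the paper's Lemma~\ref{lem:regret-by-ucb}), Azuma for the reward martingale, and concentration of the geometric epoch lengths; your parenthetical in (b) about the loop ``running long enough'' is exactly the paper's observation that the last relevant $\hat{\rho}$-check uses $t^{(L)}\ge \tmax - O(N\log(T/\delta))$, which is where the factor $2/\tmax$ comes from.

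The genuine gap is in part (c), in the $\theta_r$-branch. You propose a UCB telescoping against $\theta^\star$ there, dismissing the $\theta_r$ versus $\theta^\star$ mismatch as an ``$O(\epsilon_\tau)$ discrepancy that is absorbed.'' But the lemma only assumes $\theta_l\le\theta^\star$; $\theta_r$ may exceed $\theta^\star$, and then no UCB argument controls $\theta^\star - R(S_\ell,\bm{v})$ for $S_\ell=\arg\max_{S}\sum_{i\in S}\hv_i(r_i-\theta_r)$: Lemma~\ref{lem:regret-by-ucb} requires $G(\theta)\ge\theta$, and the revenue of the $\theta_r$-maximizer can be far below $\theta^\star$. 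Concretely, if every item in the optimal assortment has $r_i-\theta^\star=\theta^\star/K < 2\epsilon_\tau$, then all $r_i<\theta_r$, the maximizer of $\sum_i\hv_i(r_i-\theta_r)$ can be empty, and the per-step loss is $\theta^\star$, not $O(\epsilon_\tau)$; even the generic bound coming from the $(1+\sum_i v_i)$ normalization mismatch is only $O(K\epsilon_\tau)$ per step, which summed over $\tmax$ steps exceeds the allowed $\sqrt{N\tmax\ln^3(NT/\delta)}$ slack by a factor up to $K$. The paper avoids any revenue-gap bound in that branch: letting $\bar t$ be the first time the test $\hat{\rho}<\theta_r$ fires, the very fact that it had not fired earlier gives, by the definition of $\hat{\rho}$, the realized-reward bound $\bar t\,\theta_r \le \sum_{\tau\le\bar t}r_{\chk}^{(\tau)} + c_2\sqrt{N\tmax\ln^3(NT/\delta)}+c_3N\ln^3(NT/\delta)$, which (with $\theta_l\le\theta_r$ and Azuma) handles $[1,\bar t]$; only on $[\bar t+1,\tmax]$, where the algorithm plays the $\theta_l$-branch and $G(\theta_l)\ge\theta_l$ is guaranteed by Lemma~\ref{lem:characterization-of-G}, does the UCB telescoping from the proof of (a) apply. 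Restructure (c) along these lines — the stopping rule itself, not a confidence-bound comparison, is what certifies the $\theta_r$-phase — and the rest of your outline goes through.
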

Proof of Lemma~\ref{lem:check-main} is built upon Lemma~\ref{lem:characterization-of-G} and deferred to Appendix~\ref{app:proof-lem-check-main}. 

Let $\calQ_\tau$ be the event that the statements $(a)-(c)$ hold for the invocation of $\chk$ at iteration $\tau$ of Algorithm~\ref{alg:trisection}, and let $\calQ$ be the event that $\calQ_\tau$ holds every all $\tau$. By Lemma~\ref{lem:check-main} and a union bound, we immediately have that $\Pr[\calQ]\ge 1-\delta$. The next lemma, built upon Lemma~\ref{lem:characterization-of-G} and Lemma~\ref{lem:check-main}, shows that $\hat{\theta}$ in Algorithm~\ref{alg:trisection} is always an upper confidence bound for the true parameter $\optt$, and converges to $\optt$ with a decent rate.
\begin{lemma}\label{lem:ucb-theta}
Let $\hat{\theta}^{(\tau)}$ be the value of $\hat{\theta}$ at the beginning of iteration $\tau$ of Algorithm~\ref{alg:trisection}. Conditioned on event $\calQ$, for any iteration $\tau=1,2,3,\dots$, we have that $\hat{\theta}^{(\tau)}-3\epsilon_\tau\le \optt\le \hat{\theta}^{(\tau)}$.
\end{lemma}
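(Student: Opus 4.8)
The plan is to prove the two‑sided bound $\hat\theta^{(\tau)}-3\epsilon_\tau\le\optt\le\hat\theta^{(\tau)}$ by induction on $\tau$, carrying both inequalities together, and to work throughout on the event $\calQ=\cap_\tau\calQ_\tau$ so that the conclusions $(a)$--$(b)$ of Lemma~\ref{lem:check-main} are available at \emph{every} iteration without any further union bound inside the induction. For the base case $\tau=1$, note that $\hat\theta^{(1)}=1$ and $\epsilon_1=1/3$; since every expected revenue $R(S,\bm v)$, and hence $\optt=\max_{|S|\le K}R(S,\bm v)$, lies in $[0,1]$, we get $\optt\le 1=\hat\theta^{(1)}$ and $\hat\theta^{(1)}-3\epsilon_1=0\le\optt$.

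For the inductive step, assume $\hat\theta^{(\tau)}-3\epsilon_\tau\le\optt\le\hat\theta^{(\tau)}$, and recall $\epsilon_{\tau+1}=\tfrac23\epsilon_\tau$, so that $3\epsilon_{\tau+1}=2\epsilon_\tau$. I would split on the return value of the call $\chk(\hat\theta^{(\tau)}-3\epsilon_\tau,\ \hat\theta^{(\tau)}-\epsilon_\tau,\ \tmax)$ made in iteration $\tau$. If it returns $\true$, then $\hat\theta^{(\tau+1)}=\hat\theta^{(\tau)}-\epsilon_\tau$; Lemma~\ref{lem:check-main}(a) gives $G(\hat\theta^{(\tau)}-\epsilon_\tau)<\hat\theta^{(\tau)}-\epsilon_\tau$, and by Lemma~\ref{lem:characterization-of-G} this is impossible unless $\hat\theta^{(\tau)}-\epsilon_\tau>\optt$, i.e.\ $\optt<\hat\theta^{(\tau+1)}$. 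For the lower bound, $\hat\theta^{(\tau+1)}-3\epsilon_{\tau+1}=(\hat\theta^{(\tau)}-\epsilon_\tau)-2\epsilon_\tau=\hat\theta^{(\tau)}-3\epsilon_\tau\le\optt$ by the inductive hypothesis, so both inequalities hold at $\tau+1$.

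If the call returns $\false$, then $\hat\theta^{(\tau+1)}=\hat\theta^{(\tau)}$, and the upper bound $\optt\le\hat\theta^{(\tau)}=\hat\theta^{(\tau+1)}$ is immediate from the inductive hypothesis. The lower bound now requires $\hat\theta^{(\tau+1)}-3\epsilon_{\tau+1}=\hat\theta^{(\tau)}-2\epsilon_\tau\le\optt$, which the inductive hypothesis alone does not deliver, so I would invoke Lemma~\ref{lem:check-main}(b): with $\theta_r=\hat\theta^{(\tau)}-\epsilon_\tau$,
\[
\optt\ \ge\ \hat\theta^{(\tau)}-\epsilon_\tau-\frac{2}{\tmax}\Big(c_2\sqrt{N\tmax\ln^3(NT/\delta)}+c_3N\ln^3(NT/\delta)\Big).
\]
Substituting the budget $\tmax=c_1N\ln^3(NT/\delta)/\epsilon_\tau^2$ from Line~\ref{line:tmax}, the subtracted error term simplifies to an expression of the form $(\alpha+\beta\epsilon_\tau)\,\epsilon_\tau$ with $\alpha,\beta$ absolute constants determined by $c_1,c_2,c_3$; the constants (together with $\epsilon_\tau\le\epsilon_1=1/3$ and the logarithmic factors in $\tmax$) are chosen precisely so that this quantity is at most $\epsilon_\tau$. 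Then $\optt\ge\hat\theta^{(\tau)}-2\epsilon_\tau=\hat\theta^{(\tau+1)}-3\epsilon_{\tau+1}$, closing the induction.

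I expect the only nontrivial point to be the $\false$ case: one must verify that, after plugging in $\tmax$, the concentration error from Lemma~\ref{lem:check-main}(b) is genuinely bounded by $\epsilon_\tau$, leaving exactly the slack $3\epsilon_{\tau+1}-\epsilon_\tau=\epsilon_\tau$ created by the geometric decay ratio $2/3$. This is the single place where the specific numerical values of $c_1,c_2,c_3$ enter. Everything else is routine bookkeeping with the two inequalities and the case split, and the up‑front conditioning on $\calQ$ is arranged exactly so that the per‑iteration guarantees of Lemma~\ref{lem:check-main} may be applied freely.
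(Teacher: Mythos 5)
Your proof is correct and follows essentially the same route as the paper's: induction on $\tau$, the base case from $\optt\in[0,1]$, and a case split on the return value of \chk, using statement (a) together with Lemma~\ref{lem:characterization-of-G} in the \true{} case and statement (b) together with the choice $\tmax=c_1N\ln^3(NT/\delta)/\epsilon_\tau^2$ (with $c_1$ large enough to make the concentration error at most $\epsilon_\tau$) in the \false{} case. Your explicit computation that the error term takes the form $(\alpha+\beta\epsilon_\tau)\epsilon_\tau$ is exactly the calculation the paper leaves implicit under ``for large enough $c_1$.''
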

\begin{proof}

Recall that for every $\tau = 1, 2, 3, \dots$, we need to prove
\begin{equation}\textstyle \hat{\theta}^{(\tau)}-3\epsilon_\tau\le \optt\le \hat{\theta}^{(\tau)}.\label{equ:ucb-theta}\end{equation}

We prove this by induction. For iteration $\tau=1$, \eqref{equ:ucb-theta} trivially holds since $0\le r_i\le 1$ and therefore $0 \leq \optt \leq 1$.

Now suppose \eqref{equ:ucb-theta} holds for iteration $\tau$, we will establish \eqref{equ:ucb-theta} for iteration $(\tau +1)$. Consider the invocation of $\chk(\theta_l,\theta_r,\tmax)$ at iteration $\tau$, where $\theta_l = \htheta^{(\tau)} - 3\epsilon_\tau$ and $\theta_r = \htheta^{(\tau)} - \epsilon_\tau$. We discuss the following two cases.

\noindent \underline{\it Case 1.}\ When the \chk procedure returns \true, by Lemma~\ref{lem:check-main} we have that $G(\theta_r)<\theta_r.$  By Lemma~\ref{lem:characterization-of-G}, we have that $\theta_r> \optt.$ Therefore, by Line~\ref{line:first-invocation} and the induction hypothesis we have that $\hat{\theta}^{(\tau+1)}=\hat{\theta}^{(\tau)} - \epsilon_\tau = \theta_r>\optt,$ and $\hat{\theta}^{(\tau+1)}-3\epsilon_{\tau+1}=\theta_r-2\epsilon_{\tau}=\hat{\theta}^{(\tau)}-3\epsilon_{\tau} \leq \optt$, proving \eqref{equ:ucb-theta}.

\noindent \underline{\it Case 2.}\  When the \chk procedure returns \false, by Lemma~\ref{lem:check-main}, we have that  
{\small $$\optt\ge \theta_r-\frac{1}{\tmax}\left((c_2+8)\sqrt{N\tmax\ln^3\frac{NT}{\delta}}+c_3N\ln^3\frac{NT}{\delta}\right).$$}Recall that at Line~\ref{line:tmax} we set $\tmax=c_1N\ln^3(NT/\delta)/\epsilon_\tau^2$. For large enough $c_1$, this implies that 
\[
\optt\ge \theta_r-\epsilon_\tau=\hat{\theta}^{(\tau)}-2\epsilon_\tau=\hat{\theta}^{(\tau+1)}-3\epsilon_{\tau+1}.
\]
 By Line~\ref{line:first-invocation}  and the induction hypothesis we have that 
$ \hat{\theta}^{(\tau+1)}=\hat{\theta}^{(\tau)}\ge \optt$,
  finishing the proof of \eqref{equ:ucb-theta}.
\end{proof}

Finally we upper bound  the expected regret of Algorithm~\ref{alg:trisection}.
\begin{lemma}\label{lem:trisection-regret}
With probability at least $1-\delta$, the expected regret incurred by Algorithm~\ref{alg:trisection} is $O(\sqrt{NT} \log^{1.5} (NT/\delta))$. Therefore, if we set $\delta = 1/T$, we have that 
\[
\E[\regret_T] \lesssim \sqrt{NT} \log^{1.5} (NT).
\]
\end{lemma}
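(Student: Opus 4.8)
The plan is to condition on the good event $\calQ$ defined right before Lemma~\ref{lem:ucb-theta}. By Lemma~\ref{lem:check-main} and a union bound over the $O(\log T)$ iterations of Algorithm~\ref{alg:trisection} (there are only $O(\log T)$ of them since $\tmax$ grows geometrically while the horizon is $T$), we have $\Pr[\calQ]\ge 1-\delta$. On the complement $\overline{\calQ}$ the regret is at most $T$, contributing at most $\delta T$ to $\E[\regret_T]$, which is $O(1)$ once $\delta=1/T$; so it suffices to bound the regret conditioned on $\calQ$. Throughout, write $\tmax^{(\tau)}=c_1 N\ln^3(NT/\delta)/\epsilon_\tau^2$ for the length parameter used at iteration $\tau$ (Line~\ref{line:tmax}), and let $\tau^\star$ be the index of the last iteration executed before the horizon is exhausted.

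First I would decompose the regret into per-iteration contributions. Since every assortment $S_\ell$ offered inside \chk satisfies $R(S_\ell,\bm{v})\le \max_{|S|\le K}R(S,\bm{v})=\optt$ (Lemma~\ref{lem:characterization-of-G}), the accumulated expected regret is monotone in the number of steps, so truncating the last iteration can only decrease it; hence, up to an $O(N\log T)$ additive term absorbing the at most one extra epoch of expected length $O(N)$ that each invocation runs beyond step $\tmax^{(\tau)}$,
\[
\E[\regret_T\mid \calQ]\ \lesssim\ N\log T+\sum_{\tau=1}^{\tau^\star}\Bigl(\tmax^{(\tau)}\,\optt-\E\bigl[\textstyle\sum_{t=1}^{\tmax^{(\tau)}}r_{\chk}^{(t)}\bigr]\Bigr).
\]
Now fix $\tau$. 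On $\calQ$, Lemma~\ref{lem:ucb-theta} gives $\theta_l^{(\tau)}:=\htheta^{(\tau)}-3\epsilon_\tau\le\optt\le\htheta^{(\tau)}$, so part~(c) of Lemma~\ref{lem:check-main} applies to this invocation (where $\theta_l=\theta_l^{(\tau)}$ by Line~\ref{line:first-invocation}) and moreover $\optt-\theta_l^{(\tau)}\le 3\epsilon_\tau$. Splitting $\tmax^{(\tau)}\optt-\E[\sum_t r_{\chk}^{(t)}]=\tmax^{(\tau)}(\optt-\theta_l^{(\tau)})+\bigl(\tmax^{(\tau)}\theta_l^{(\tau)}-\E[\sum_t r_{\chk}^{(t)}]\bigr)$ and using (c),
\[
\tmax^{(\tau)}\optt-\E\bigl[\textstyle\sum_t r_{\chk}^{(t)}\bigr]\ \lesssim\ \epsilon_\tau\tmax^{(\tau)}+\sqrt{N\tmax^{(\tau)}\ln^3(NT/\delta)}+N\ln^3(NT/\delta).
\]

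The crucial observation is that the choice $\tmax^{(\tau)}=c_1N\ln^3(NT/\delta)/\epsilon_\tau^2$ exactly calibrates these terms: it makes $\epsilon_\tau\tmax^{(\tau)}=\sqrt{c_1}\cdot\sqrt{N\tmax^{(\tau)}\ln^3(NT/\delta)}$, and since $\epsilon_\tau\le 1/3$ forces $\tmax^{(\tau)}\ge c_1 N$ we also get $N\ln^3(NT/\delta)\le\sqrt{N\tmax^{(\tau)}\ln^3(NT/\delta)}$; thus the per-iteration regret is $O\bigl(\sqrt{N\tmax^{(\tau)}\ln^3(NT/\delta)}\bigr)$. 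The lengths satisfy $\tmax^{(\tau)}/\tmax^{(\tau-1)}=(\epsilon_{\tau-1}/\epsilon_\tau)^2=9/4$, so $\sqrt{\tmax^{(\tau)}}$ is geometric with ratio $3/2$ and $\sum_{\tau=1}^{\tau^\star}\sqrt{\tmax^{(\tau)}}\le 3\sqrt{\tmax^{(\tau^\star)}}$. Since iteration $\tau^\star$ was actually started, iterations $1,\dots,\tau^\star-1$ consumed fewer than $T$ steps in total, and as each full iteration $\tau$ uses at least $\tmax^{(\tau)}$ steps this gives $\tmax^{(\tau^\star-1)}<T$, whence $\tmax^{(\tau^\star)}=\tfrac94\tmax^{(\tau^\star-1)}<\tfrac94 T$ (if instead $\tau^\star=1$, then $\regret_T\le\tmax^{(1)}=O(N\ln^3(NT/\delta))$, which already satisfies the claim). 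Combining, $\E[\regret_T\mid\calQ]\lesssim N\log T+\sqrt{N\tmax^{(\tau^\star)}\ln^3(NT/\delta)}\lesssim\sqrt{NT}\,\log^{1.5}(NT/\delta)$, and adding back the $O(1)$ contribution from $\overline{\calQ}$ with $\delta=1/T$ gives $\E[\regret_T]\lesssim\sqrt{NT}\,\log^{1.5}(NT)$.

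I expect the main obstacle to be exactly this calibration step: one has to see that because \chk only certifies its regret against the lower endpoint $\theta_l$ and not against $\optt$ itself, each iteration pays an extra bias of order $\epsilon_\tau\tmax^{(\tau)}$, and that $\tmax^{(\tau)}=\Theta(N\ln^3(NT/\delta)/\epsilon_\tau^2)$ is precisely the tuning that keeps this bias at the same $\sqrt{N\tmax^{(\tau)}}$ order as the statistical error supplied by Lemma~\ref{lem:check-main}(c), so that the geometric sum over iterations is dominated by its last term. Secondary care is needed in handling the truncated final iteration (via monotonicity of the accumulated nonnegative regret) and the minor epoch-granularity corrections.
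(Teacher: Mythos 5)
Your proposal is correct and follows essentially the same route as the paper's proof: condition on $\calQ$, bound each iteration's regret by $\tmax(\optt-\theta_l)$ plus the statistical error from Lemma~\ref{lem:check-main}(c), use Lemma~\ref{lem:ucb-theta} to get $\optt-\theta_l\lesssim\epsilon_\tau$, and sum the resulting geometric series, which is dominated by the last iteration with $\tmax\lesssim T$. Your phrasing of the per-iteration bound as $\sqrt{N\tmax^{(\tau)}\ln^3(NT/\delta)}$ is just the paper's $N\ln^3(NT/\delta)/\epsilon_\tau$ rewritten via the calibration $\tmax^{(\tau)}=c_1N\ln^3(NT/\delta)/\epsilon_\tau^2$, and your extra care with the truncated final iteration and the $\overline{\calQ}$ contribution only makes the argument slightly more explicit.
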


\begin{proof}
Throughout the proof we condition on the event $\calQ$, which happens probability at least $(1-\delta)$. We first prove that at iteration $\tau$ of Algorithm~\ref{alg:trisection}, the expected regret for this iteration is bounded by $\tilde{O}(N/\epsilon_\tau).$ Consider the invocation $\chk(\theta_l,\theta_r,\tmax)$ at Line~\ref{line:first-invocation}. Recall that we define $\tmax=c_1N\ln^3(NT/\delta)/\epsilon_\tau^2.$ Combining with statement (c) of Lemma~\ref{lem:check-main} and Lemma~\ref{lem:ucb-theta}, the expected regret of this invocation is bounded by (where the $O(N)$ term is due to the last epoch that might run over time $\tmax$),
\begin{align}
& \E\left[\optt\cdot \tmax-\sum_{t=1}^{\tmax}r_{\chk}^{(t)}\right] + O(N) \nonumber\\
 \lesssim  \; & \tmax(\optt-\theta_l)  + \E\left[\theta_l\cdot \tmax-\sum_{t=1}^{\tmax}r_{\chk}^{(t)}\right] + O(N) \nonumber \\
\lesssim \; & \tmax(\optt-\theta_l)+N\ln^3(NT/\delta)/\epsilon_\tau. \label{eq:trisection-regret-1}
\end{align}
By Lemma~\ref{lem:ucb-theta}, we have that $\optt-\theta_l\lesssim \epsilon_\tau$. Therefore, \eqref{eq:trisection-regret-1} is upper bounded by $O(N\ln^3(NT/\delta)/\epsilon_\tau)$.

Since $\chk(\theta_l,\theta_r,\tmax)$ runs for at least $\tmax$ time steps, the second to the last iteration $(\tau_{\rm max} - 1)$ satisfies that $c_1N\ln^3(NT/\delta)/\epsilon_{\tau_{\rm max}-1}^2\le T$, which means that 
\[
\epsilon_{\tau_{\rm max}}\gtrsim \sqrt{N\log^3(NT/\delta)/T}.
\]
Since $\epsilon_\tau$ is an exponential sequence, the overall expected regret is bounded by the order of
\[
\sum_{\tau=1}^{\tau_{\rm max}}N\log^3(NT/\delta)/\epsilon_\tau\lesssim \sqrt{NT\log^3(NT/\delta)}.
\]
\end{proof}

\paragraph{Refined and non-trivial item switching cost upper bound for the AT-DUCB algorithm.} Since an assortment switch may incur at most $2K$ item switches, Theorem~\ref{thm:ucb-doubling-regret} trivially implies that Algorithm~\ref{alg:doubling-ucb} (AT-DUCB) incurs at most $O(KN \log T)$ item switches, which is upper bounded by $O(N^2\log T)$ since $K = O(N)$. 

In Appendix~\ref{app:proof-of-theorem-FH-DUCB}, we present a refined analysis showing that the item switching cost of AT-DUCB is at most $O(N^{1.5} \log T)$. While it is not clear to us whether  the dependence on $N$ delivered by this analysis is optimal, we also discuss the relationship between the analysis and an extensively studied (but not yet fully resolved) geometry problem, namely the maximum number of planar $K$-sets. We hope that further study of this relationship might lead to improvement of both upper and lower bounds of the item switching cost of AT-DUCB. Please refer to Appendix~\ref{app:proof-of-theorem-FH-DUCB} for more details.

\section{Conclusion}
In this paper, we present algorithms for MNL-bandits that achieve both almost optimal regret and assortment switching cost, in both anytime and fixed-horizon settings. We also design the ESUCB algorithm that achieves the almost optimal regret and item switching cost $O(N \log^2 T)$. For future directions, it is interesting to study whether it is possible to achieve an item switching cost of $O(N \log T)$ in the anytime setting and $O(N \log \log T)$ in the fixed-horizon setting. Also, as mentioned in Section~\ref{sec:IS} (and Appendix~\ref{app:proof-of-theorem-FH-DUCB}), given the simplicity of our AT-DUCB algorithm, it is worthwhile to further refine the bounds for its item switching cost.

\section*{Acknowledgement}
Part of the work done while Kefan Dong was a visiting student at UIUC. Kefan Dong and Yuan Zhou were supported in part by a Ye Grant and a JPMorgan Chase AI Research Faculty Research Award. Qin Zhang was supported in part by NSF IIS-1633215, CCF-1844234 and CCF-2006591.

\bibliography{ref}
\bibliographystyle{icml2020}

\newpage

\onecolumn

\appendix

{\bf \LARGE \centering Appendix \par}

\section{Proof of the regret upper bound in Theorem~\ref{thm:ucb-doubling-regret}} \label{app:proof-of-theorem-AT-DUCB}

In this section we complete the proof of Theorem~\ref{thm:ucb-doubling-regret} for completeness. The proof is almost identical to that in \cite{AAGZ17} except for the handling of the deferred UCB value updates. 

The following lemma proves that $\hat{v}_i$ is indeed an upper confidence bound of true parameter $v_i$ with high probability, and converges to the true value with decent rate.
\begin{lemma}[Lemma 4.1 of \cite{AAGZ17}]\label{lem:ucb-value-cite}
For any $\ell=1,2,3, \dots$, in Algorithm~\ref{alg:doubling-ucb}, at Line~\ref{line:doubling-if} immediately after the $\ell$-th epoch, the following two statements hold,
\begin{itemize}
	\item[1.] With probability at least $1-\frac{6}{N\ell}$, $\frac{n_i}{T_i} + \sqrt{\frac{48 (n_i/T_i)\ln(\sqrt{N}\ell+1)}{T_i}}+\frac{48\ln(\sqrt{N}\ell+1)}{T_i}\ge v_i$ for any $i\in [N],$
	\item[2.] With probability at least $1-\frac{7}{N\ell}$, for any $i\in [N],$
	$$\frac{n_i}{T_i} + \sqrt{\frac{48 (n_i/T_i)\ln(\sqrt{N}\ell+1)}{T_i}}+\frac{48\ln(\sqrt{N}\ell+1)}{T_i} -v_i \leq \sqrt{\frac{144 v_i\ln(\sqrt{N}\ell+1)}{T_i}}+\frac{144 \ln(\sqrt{N}\ell+1)}{T_i}.$$
\end{itemize}
\end{lemma}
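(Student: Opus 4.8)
The plan, following \cite{AAGZ16,AAGZ17}, is to reduce both statements of the lemma to a single Bernstein-type concentration bound for the empirical mean of i.i.d.\ geometric random variables, and then to cope with the fact that $T_i$ is a data-dependent (hence random) quantity by a union bound over its possible values --- the standard ``peeling over the sample count'' device, which avoids any stopping-time martingale machinery. By Observation~\ref{ob:unbiased-est}, no matter which global epochs end up containing item $i$, the per-epoch purchase counts for $i$ are always fresh i.i.d.\ samples $\Delta_i^{(1)},\Delta_i^{(2)},\dots$ of a geometric variable with mean $v_i$ and variance $v_i(1+v_i)\le 2v_i$. Hence, after item $i$ has been offered in exactly $m$ epochs we have $T_i=m$ and $\bv_i=\frac1m\sum_{j=1}^m\Delta_i^{(j)}$, and every quantity appearing in the two claimed inequalities is a deterministic function of the pair $(m,\bv_i)$.

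First I would establish the ``true-variance'' bound: fix $i$, a sample count $m\ge1$, and write $L_\ell\defeq\ln(\sqrt N\ell+1)$. A Bernstein inequality for the sum of the $m$ i.i.d.\ sub-exponential variables $\Delta_i^{(j)}$ (variance proxy $\le 2v_i$, sub-exponential scale $O(1)$) shows that, except with probability at most $(\sqrt N\ell+1)^{-c}$ for an absolute constant $c$ proportional to the numerical constant $48$ (so in particular $c\gg 4$),
\begin{align*}
\Big|\,\bv_i-v_i\,\Big|\;\le\;\sqrt{\tfrac{C_1 v_i L_\ell}{m}}+\tfrac{C_1 L_\ell}{m},
\end{align*}
where $C_1$ is a small absolute constant. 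From the upper side of this bound one gets $\bv_i\le \tfrac32 v_i+\tfrac32 C_1 L_\ell/m$ and hence $\sqrt{48\bv_i L_\ell/m}\le\sqrt{72 v_i L_\ell/m}+\sqrt{72 C_1}\,L_\ell/m$; plugging this into the already-bounded quantity $\bv_i-v_i$ gives statement~2, provided the numerical constant on the right-hand side is taken to be $144$ --- this is exactly the slack the statement allows. For statement~1 one argues in the other direction: from the lower side $\bv_i\ge v_i-\sqrt{C_1 v_i L_\ell/m}-C_1 L_\ell/m$, a one-line case split (if $v_i\gtrsim L_\ell/m$ then $\bv_i\ge v_i/2$, so the $\sqrt{48\bv_i L_\ell/m}$ term already dominates $v_i-\bv_i$; otherwise $v_i-\bv_i\le v_i$ is dominated by the additive $48 L_\ell/m$ term) shows $\bv_i+\sqrt{48\bv_i L_\ell/m}+48 L_\ell/m\ge v_i$.

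Finally I would union-bound. At most $\ell$ epochs have occurred before the $\ell$-th, so the true value of $T_i$ lies in $\{1,\dots,\ell\}$; running the previous paragraph over all $N\ell$ pairs $(i,m)\in[N]\times\{1,\dots,\ell\}$ and using $N\ell\cdot(\sqrt N\ell+1)^{-c}\le 6/(N\ell)$ (valid since $c\gg 4$) yields statement~1 with failure probability $6/(N\ell)$, and the identical computation with the other side of the Bernstein bound gives statement~2 with failure probability $7/(N\ell)$. The role of the $\ln(\sqrt N\ell+1)$ factor inside the confidence width is precisely to make this union bound close.

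The main obstacle is the concentration step together with the empirical-to-true variance conversion. Because geometric variables are only sub-exponential, a Hoeffding-type bound is too weak, and one genuinely needs a Bernstein bound whose leading term scales with $v_i$ rather than with a constant --- this is what eventually produces the $\sqrt{NT\log T}$ regret rather than something larger. One must also be careful that replacing $v_i$ by $\bv_i$ under the square root (and vice versa) costs only a constant factor, which is why the algorithm's confidence term uses the empirical constant $48$ while the lemma's bound carries $144$. Everything else --- the peeling union bound and the elementary algebra --- is routine.
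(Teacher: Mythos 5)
Your proposal is sound and takes essentially the same route as the source of this lemma: the paper itself gives no proof but imports the statement verbatim as Lemma 4.1 of \cite{AAGZ17}, whose argument likewise combines a Bernstein-type tail bound for sums of i.i.d.\ geometric variables (valid because, conditionally on item $i$ being offered, each epoch's purchase count is a fresh geometric with mean $v_i$, as in Observation~\ref{ob:unbiased-est}), an empirical-to-true variance conversion accounting for the $48$ versus $144$ constants, and a union bound over items and over the possible values of $T_i \le \ell$, which is exactly what the $\ln(\sqrt{N}\ell+1)$ width is calibrated for. The only caveat is that your constant bookkeeping is schematic, so recovering the exact failure probabilities $6/(N\ell)$ and $7/(N\ell)$ requires the specific geometric-tail estimates used in \cite{AAGZ17} rather than a generic sub-exponential Bernstein inequality.
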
 

By the update rule, Lemma~\ref{lem:ucb-value} can be extended to $\{\hv_i\}$ as follows.
\begin{lemma}\label{lem:ucb-value}
For any $\ell=1,2,3,\cdots$,  the following two statements hold at the end of the $\ell$-th iteration of the outer for-loop of Algorithm~\ref{alg:doubling-ucb}.
\begin{itemize}
	\item[1.] With probability at least $1-\frac{6}{N\ell}$, $\hat{v}_{i}\ge v_i$ for any $i\in [N],$
	\item[2.] With probability at least $1-\frac{7}{N\ell}$, for any $i\in [N],$
	$$\hat{v}_{i}  -v_i\lesssim \sqrt{\frac{v_i\log(\sqrt{N}\ell+1)}{T_i}}+\frac{\log(\sqrt{N}\ell+1)}{T_i}.$$
\end{itemize}
\end{lemma}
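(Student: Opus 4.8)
The plan is to derive Lemma~\ref{lem:ucb-value} directly from Lemma~\ref{lem:ucb-value-cite}, the only new work being to account for how the deferred running minimum maintained in Line~\ref{line:doubling} relates to the single-epoch UCB expression that Lemma~\ref{lem:ucb-value-cite} controls. The first step I would record is a structural identity: writing $\mathcal{U}_i(\ell)$ for the set of epochs $\ell'\le\ell$ at which Line~\ref{line:doubling} fires for item $i$ (equivalently, at which $T_i$ hits a power of $2$), and $T_i^{(\ell')},\bv_i^{(\ell')}$ for the values of $T_i,\bv_i$ immediately after that update, one has at the end of epoch $\ell$
\[
\hv_i=\min\Big\{1,\ \min_{\ell'\in\mathcal{U}_i(\ell)}\Big(\bv_i^{(\ell')}+\sqrt{\tfrac{48\,\bv_i^{(\ell')}\ln(\sqrt{N}\ell'+1)}{T_i^{(\ell')}}}+\tfrac{48\ln(\sqrt{N}\ell'+1)}{T_i^{(\ell')}}\Big)\Big\}.
\]
Two easy consequences drive everything else: (i) since no update happens between consecutive firings, the last firing epoch $\ell^\star\le\ell$ has $T_i^{(\ell^\star)}>T_i/2$, where $T_i$ is the value at epoch $\ell$; and (ii) the firings occur at $T_i=1,2,4,\dots$, and since $T_i^{(\ell')}\le\ell'$ always, the $k$-th firing happens at an epoch $\ell'\ge 2^k$, so there are only $O(\log\ell)$ firings and their indices grow geometrically. (If $T_i=0$, then $\hv_i=1\ge v_i$ and both claims are trivial, so I assume $T_i\ge 1$.)

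For Part~1, I apply Lemma~\ref{lem:ucb-value-cite}(1) at every firing epoch $\ell'\in\mathcal{U}_i(\ell)$: with the stated probability each term appearing in the minimum above is $\ge v_i$, and of course $1\ge v_i$. A union bound over the $O(\log\ell)$ firings — whose failure probabilities decay geometrically by consequence (ii) — shows that the whole minimum, i.e.\ $\hv_i$, is $\ge v_i$ with the probability claimed in Part~1.

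For Part~2 I only need the single firing epoch $\ell^\star$. Because $\hv_i$ is a minimum that includes the $\ell^\star$-term, $\hv_i$ is at most that term; applying Lemma~\ref{lem:ucb-value-cite}(2) at epoch $\ell^\star$ bounds that term by $v_i+\sqrt{144\,v_i\ln(\sqrt{N}\ell^\star+1)/T_i^{(\ell^\star)}}+144\ln(\sqrt{N}\ell^\star+1)/T_i^{(\ell^\star)}$, and then I substitute $T_i^{(\ell^\star)}>T_i/2$ and $\ell^\star\le\ell$ to rewrite the denominators and logarithmic factors in terms of the current $T_i$ and $\ell$, losing only absolute constants. This is exactly the bound in Part~2. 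To apply Lemma~\ref{lem:ucb-value-cite}(2) at the random epoch $\ell^\star$ I again union-bound over all firings, using the same geometric decay.

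The main obstacle, such as it is, is purely bookkeeping rather than any hard inequality: one must verify that $\hv_i$ really equals the displayed running minimum (so that it is constant between firings and the last firing is the operative one), be careful that the denominator comparison $T_i^{(\ell^\star)}>T_i/2$ uses $T_i$ at epoch $\ell$ rather than at $\ell^\star$, and set up the union bounds over firing epochs so that the accumulated error probabilities genuinely match the $\ell$-dependence asserted in the statement (they follow from Lemma~\ref{lem:ucb-value-cite} by a geometric-series argument; recovering the exact constant may call for the sharper concentration available once $T_i$ is large). Everything past these points is a direct quotation of Lemma~\ref{lem:ucb-value-cite}.
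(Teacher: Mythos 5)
Your proposal is correct and takes essentially the same route as the paper's own proof: both reduce the claim to Lemma~\ref{lem:ucb-value-cite} applied at the most recent deferred update, using that $\hat{v}_i$ is unchanged since that update and that the power-of-two rule gives $T_i' \geq T_i/2$ there, so the factor $2$ and the replacement $\ell^\star \leq \ell$ are absorbed into the constants. The only real difference is expository: you spell out the running-minimum structure and the union bound over the $O(\log \ell)$ update epochs, which the paper compresses into the single phrase ``inherited from Lemma~\ref{lem:ucb-value-cite}''.
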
 
\begin{proof}
For any epoch $\ell$, let $T_i'$ and $\hat{v}_i'$ be the value of $T_i$ and $\hat{v}_i$ at the last update. Then we have, $\hat{v}_i=\hat{v}_i'$ and $T_i'\le 2T_i$. Inherited from Lemma~\ref{lem:ucb-value-cite}, we have $\hat{v}_i= \hat{v}'_{i}\ge v_i$. And 
$$\hat{v}_{i}-v_i= \hat{v}'_i-v_i\lesssim \sqrt{\frac{v_i\log(\sqrt{N}\ell+1)}{T'_i}}+\frac{\log(\sqrt{N}\ell+1)}{T'_i}\lesssim \sqrt{\frac{v_i\log(\sqrt{N}\ell+1)}{T_i}}+\frac{\log(\sqrt{N}\ell+1)}{T_i}.$$
\end{proof}

Once we establish Lemma~\ref{lem:ucb-value}, the proof of the regret upper bound in Theorem~\ref{thm:ucb-doubling-regret} is identical to that in \cite{AAGZ17}. We include the proof here for completeness.

The next lemma shows that the expect regret for one epoch is bounded by the summation of estimation errors in the assortment.
\begin{lemma}[Lemma A.4 of \cite{AAGZ17}]\label{lem:bound-regret-by-ucb-value} For any epoch $\ell$, if $r_i\in[0,1]$ and $0\le v_i\le \hat{v}_{i}$ hold for every $i \in [N]$ at the beginning of the $\ell$-th iteration of the outer for-loop in Algorithm~\ref{alg:doubling-ucb}, we have that
$$\left(1+\sum_{i\in S_{\ell}}v_i\right)(R(S_{\ell}, \htv)-R(S_\ell, \bm{v}))\le \sum_{i\in S_\ell}(\hat{v}_i-v_i).$$
\end{lemma}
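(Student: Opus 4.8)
The plan is to prove the inequality purely algebraically, keeping $S_\ell$ fixed and comparing $R(S_\ell,\htv)$ and $R(S_\ell,\bm v)$ only through the aggregate quantities on which they depend. I would introduce the shorthand $p=\sum_{i\in S_\ell}r_iv_i$, $P=\sum_{i\in S_\ell}r_i\hv_i$, $q=\sum_{i\in S_\ell}v_i$, and $Q=\sum_{i\in S_\ell}\hv_i$, so that $R(S_\ell,\bm v)=p/(1+q)$ and $R(S_\ell,\htv)=P/(1+Q)$, and the target inequality becomes $\frac{(1+q)P}{1+Q}-p\le Q-q$.

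First I would record the two elementary consequences of the hypotheses $0\le v_i\le\hv_i$ and $r_i\in[0,1]$, obtained termwise: (i) $q\le Q$ and $P\ge 0$; and (ii) $P-p=\sum_{i\in S_\ell}r_i(\hv_i-v_i)\le\sum_{i\in S_\ell}(\hv_i-v_i)=Q-q$, where the inequality uses $r_i\le 1$ together with $\hv_i-v_i\ge 0$.

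Then I would rewrite the left-hand side of the claim as $\left(1+\sum_{i\in S_\ell}v_i\right)(R(S_\ell,\htv)-R(S_\ell,\bm v))=\frac{(1+q)P}{1+Q}-p$, observe that $\frac{1+q}{1+Q}\le 1$ by (i) and $P\ge 0$, hence this quantity is at most $P-p$, and finally apply (ii) to conclude $P-p\le Q-q=\sum_{i\in S_\ell}(\hv_i-v_i)$, which is exactly the asserted bound.

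There is essentially no real obstacle here; the only thing to be careful about is that all the inequality directions line up. The one mildly non-obvious move — the ``hard part,'' such as it is — is to cancel the denominator $1+Q$ against $1+q$ \emph{before} comparing numerators (using $q\le Q$ and $P\ge 0$), rather than attempting to bound the difference of the two fractions $P/(1+Q)$ and $p/(1+q)$ head-on; after that reduction the Lipschitz estimate $P-p\le Q-q$ from $r_i\le 1$ finishes the argument immediately.
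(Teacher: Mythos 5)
Your proof is correct: with $p,P,q,Q$ as you define them, $q\le Q$ and $P\ge 0$ give $\frac{(1+q)P}{1+Q}\le P$, and then $P-p=\sum_{i\in S_\ell}r_i(\hv_i-v_i)\le Q-q$ uses exactly the hypotheses $r_i\le 1$ and $\hv_i\ge v_i\ge 0$, so every step lines up. Note that the paper itself does not prove this lemma — it imports it as Lemma A.4 of the cited prior work — so there is no in-paper argument to compare against; your reduction (cancel $1+Q$ against $1+q$ first, then apply the termwise Lipschitz bound in the numerator) is the same kind of elementary manipulation the paper uses for its analogous Lemma~\ref{lem:regret-by-ucb}, and is a perfectly adequate self-contained substitute for the citation.
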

As a corollary, we have the following lemma, which is an analog to Lemma 4.3 of \cite{AAGZ17}.
\begin{lemma}\label{lem:ucb-value-combined} Given that $r_i\in[0,1]$ for every $i \in [N]$, for any epoch $\ell=1,2,3,\dots$, with probability at least $\frac{13}{\ell}$ we have that
$$\left(1+\sum_{i\in S_{\ell}}v_i\right)(R(S_{\ell}, \htv)-R(S_\ell, \bm{v}))\lesssim \sqrt{\frac{v_i\log(\sqrt{N}\ell+1)}{T_i}}+\frac{\log(\sqrt{N}\ell+1)}{T_i}.$$
\end{lemma}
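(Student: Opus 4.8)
The plan is to combine the two high-probability guarantees from Lemma~\ref{lem:ucb-value} with the deterministic per-epoch regret bound of Lemma~\ref{lem:bound-regret-by-ucb-value} via a union bound. First I would fix an epoch $\ell$ and apply Lemma~\ref{lem:ucb-value} to get that, with probability at least $1-\frac{13}{N\ell}$ (union bounding the two events there, which fail with probability at most $\frac{6}{N\ell}$ and $\frac{7}{N\ell}$ respectively), simultaneously $\hat v_i \geq v_i$ for all $i$ and $\hat v_i - v_i \lesssim \sqrt{v_i\log(\sqrt N\ell+1)/T_i} + \log(\sqrt N\ell+1)/T_i$ for all $i$. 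Note that here the "probability at least $\frac{13}{\ell}$" in the statement is a typo for "probability at least $1-\frac{13}{N\ell}$" (or at least $1-\frac{13}{\ell}$); I would state the proof for whichever reading makes the subsequent regret summation go through, i.e.\ $1-\frac{13}{N\ell}$.

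Next, on this good event the hypothesis $0 \leq v_i \leq \hat v_i$ of Lemma~\ref{lem:bound-regret-by-ucb-value} is satisfied at the beginning of epoch $\ell$ (the UCB values are only updated at epoch boundaries, so the values used to compute $S_\ell$ are exactly those guaranteed by Lemma~\ref{lem:ucb-value}), and $r_i \in [0,1]$ holds by assumption. Therefore Lemma~\ref{lem:bound-regret-by-ucb-value} gives
\[
\left(1+\sum_{i\in S_\ell} v_i\right)\bigl(R(S_\ell,\htv) - R(S_\ell,\bm v)\bigr) \le \sum_{i\in S_\ell}(\hat v_i - v_i),
\]
and then I would substitute the per-item bound from part~2 of Lemma~\ref{lem:ucb-value} into the right-hand side to obtain the claimed inequality (with the implicit understanding that the right-hand side is really $\sum_{i \in S_\ell}\bigl(\sqrt{v_i\log(\sqrt N\ell+1)/T_i} + \log(\sqrt N\ell+1)/T_i\bigr)$, matching the sloppy single-term notation in the statement of Lemma~\ref{lem:ucb-value-combined}).

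I do not anticipate a genuine obstacle here: the lemma is an immediate corollary, and the only real work is bookkeeping the union bound over the two failure events and checking that the non-increasing clipping in Line~\ref{line:doubling} preserves the UCB property (which is exactly what was already argued in the proof of Lemma~\ref{lem:ucb-value}). The one thing to be careful about is that $S_\ell = \arg\max_S R(S,\htv)$ implies $R(S_\ell,\htv) \geq R(S^\star,\htv) \geq R(S^\star,\bm v)$ on the event $\hat v_i \geq v_i$ (using monotonicity of $R$ in each $v_i$), which is what lets this per-epoch bound feed into the final regret telescoping — but that last step is not needed for the statement of Lemma~\ref{lem:ucb-value-combined} itself, only for the eventual regret bound, so I would leave it to the subsequent argument.
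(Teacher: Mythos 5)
Your proposal is correct and is essentially the paper's own proof, which simply combines Lemma~\ref{lem:ucb-value} and Lemma~\ref{lem:bound-regret-by-ucb-value} in exactly the way you describe (union bound over the two UCB events, then apply the per-epoch bound on the good event). Your reading of the stated probability as a typo for $1-\frac{13}{\ell}$ (or $1-\frac{13}{N\ell}$ per item) and of the right-hand side as a sum over $i \in S_\ell$ matches the paper's intent.
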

\begin{proof}
Combine Lemma~\ref{lem:ucb-value} and Lemma~\ref{lem:bound-regret-by-ucb-value}.
\end{proof}

We will also use the following lemma which is proved in \cite{AAGZ17}.
\begin{lemma}[Lemma A.3 of \cite{AAGZ17}]\label{lem:OPTR-monotone-in-v}
If $v_i \leq \hat{v_i}$ holds for every $i \in [N]$, then we have that $R(\optS, \htv) \geq R(\optS, \bm{v})$. 
\end{lemma}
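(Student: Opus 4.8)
The plan is to pass through the optimal revenue value $\theta^\star \defeq R(\optS,\bm v) = \max_{|S|\le K} R(S,\bm v)$ and reduce the claim to a term-by-term comparison. The key preliminary fact is that every item kept by the optimal assortment has reward at least $\theta^\star$, i.e.\ $r_i \ge \theta^\star$ for all $i\in\optS$. I would establish this by a single-item deletion argument: fix $i\in\optS$, set $N_0 = \sum_{j\in\optS\setminus\{i\}} r_j v_j$ and $D_0 = 1+\sum_{j\in\optS\setminus\{i\}} v_j$, so that $R(\optS,\bm v) = \frac{N_0+r_i v_i}{D_0+v_i}$ is the mediant of $R(\optS\setminus\{i\},\bm v)=N_0/D_0$ and $r_i$, hence lies between these two numbers and equals one of them only when they coincide. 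If $r_i<\theta^\star$, this forces $R(\optS\setminus\{i\},\bm v) > \theta^\star$, and since $|\optS\setminus\{i\}|\le K$ this feasible assortment contradicts the optimality of $\optS$; therefore $r_i\ge\theta^\star$.

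Next I would use the algebraic reformulation that, for any assortment $S$, any nonnegative vector $\bm u$, and any scalar $\theta$, clearing the (positive) denominator $1+\sum_{i\in S} u_i$ shows $R(S,\bm u)\ge\theta$ is equivalent to $\sum_{i\in S} u_i(r_i-\theta)\ge\theta$, with equality throughout when $R(S,\bm u)=\theta$. Applying the equality version with $\bm u=\bm v$, $S=\optS$, $\theta=\theta^\star$ gives $\sum_{i\in\optS} v_i(r_i-\theta^\star)=\theta^\star$. By the preliminary fact $r_i-\theta^\star\ge0$ for every $i\in\optS$, and by hypothesis $\hv_i\ge v_i$, so each summand only grows when $v_i$ is replaced by $\hv_i$: $\sum_{i\in\optS}\hv_i(r_i-\theta^\star)\ge\sum_{i\in\optS} v_i(r_i-\theta^\star)=\theta^\star$. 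Reading the reformulation in the other direction yields $R(\optS,\htv)\ge\theta^\star=R(\optS,\bm v)$, which is exactly the claim.

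The one nontrivial ingredient is the preliminary fact $r_i\ge\theta^\star$ on $\optS$; this is where optimality of $\optS$ (and not merely $r_i\ge0$) is genuinely used, and where one has to check that item deletion preserves the capacity constraint. It is worth noting that one should \emph{not} try to prove the lemma by arguing that $R(\optS,\cdot)$ is coordinatewise monotone — it is not, since $\partial R/\partial v_k$ has the sign of $r_k - R(\optS,\cdot)$ — which is precisely why routing the argument through $\theta^\star$ and the linearized quantity $\sum_{i\in\optS} u_i(r_i-\theta^\star)$ is the right move.
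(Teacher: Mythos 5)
Your proof is correct and follows essentially the same route as the argument the paper relies on (Lemma A.3 of AAGZ17, reproduced inside the proof of Lemma~\ref{lem:monotonicity}): write $R(\optS,\bm v)=\theta^\star$ in the linearized form $\sum_{i\in\optS}v_i(r_i-\theta^\star)=\theta^\star$, observe that $r_i-\theta^\star\ge 0$ on $\optS$, and conclude that replacing $v_i$ by $\hv_i\ge v_i$ can only increase the sum, hence $R(\optS,\htv)\ge\theta^\star$. The only real difference is how the preliminary fact is justified: the paper invokes the structural characterization of the optimizer (Lemma~\ref{lem:optimal-assortment}, which yields $r_i>\theta^\star$ for all $i\in\optS$ via the sorted quantities $g_i=v_i(r_i-\theta^\star)$), whereas you give a self-contained single-item deletion/mediant argument; yours is more elementary and avoids proving the full characterization, while the paper's version reuses a lemma it needs anyway for the item-switch analysis. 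One caveat applies to both: your step ``the mediant equals one of the two numbers only when they coincide'' (and hence the conclusion $r_i\ge\theta^\star$) requires $v_i>0$; if the chosen maximizer $\optS$ were allowed to contain an item with $v_i=0$ and $r_i<\theta^\star$ (which leaves $R(\optS,\bm v)$ unchanged), then the inequality $\sum_{i\in\optS}\hv_i(r_i-\theta^\star)\ge\theta^\star$, and indeed the lemma itself, can fail once $\hv_i>0$. So, like the paper (whose Lemma~\ref{lem:optimal-assortment} excludes all items with $g_i\le 0$ via its tie-breaking rule), you should fix $\optS$ to be a canonical optimal assortment containing only items with $v_i>0$, after which your argument is complete. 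Your closing remark that $R(\optS,\cdot)$ is not coordinatewise monotone, so one must route through $\theta^\star$, is accurate and well taken.
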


Now we complete the proof of Theorem~\ref{thm:ucb-doubling-regret}.
\begin{proof}[Proof of the regret upper bound in Theorem~\ref{thm:ucb-doubling-regret}]
Let $E^{(\ell)}$ be the length of epoch $\ell$. That is, the number of time steps taken in epoch $\ell$. Note that $E^{(\ell)}$ is a geometric random variable with mean $(1+\sum_{i\in S_\ell}v_i).$ As a result,
\begin{align*}
\E[\regret_T] & =\E\left[\sum_{\ell=1}^{L}E^{(\ell)}(R(\optS,\bm{v})-R(S_\ell,\bm{v}))\right] \\
& \leq \E\left[\sum_{\ell=1}^{L}E^{(\ell)}\left(R(\optS,\htv)-R(S_\ell,\bm{v}) + \frac{6}{ \ell}\right)\right] \\
& \leq \E\left[\sum_{\ell=1}^{L}E^{(\ell)}\left(R(S_{\ell}, \htv)-R(S_\ell,\bm{v}) + \frac{6}{\ell}\right)\right] \\
&=\E\left[\sum_{\ell=1}^{L}\left(1+\sum_{i\in S_\ell}v_i\right)\left(R(\optS,\htv)-R(S_\ell,\htv) +  \frac{6}{\ell}\right)\right],
\end{align*}
where the first inequality is due to Lemma~\ref{lem:OPTR-monotone-in-v} and Lemma~\ref{lem:ucb-value}. 
Let $\Delta R^{(\ell)}\defeq\left(1+\sum_{i\in S_\ell}v_i\right)(R(\optS,\htv)-R(S_\ell,\htv) + 6/\ell)$ for shorthand. We use $T_i^{(\ell)}$ to denote the value of variable $T_i$ at the beginning of epoch $\ell$. By Lemma~\ref{lem:ucb-value-combined}, we have
\begin{align*}
\E[\Delta R^{(\ell)}]&\lesssim \frac{1}{\ell}\left(1+\sum_{i\in S_\ell}v_i\right)+\E\left[\sum_{i\in S_\ell}\left(\sqrt{\frac{v_i\log (\sqrt{N}T+1)}{T_i^{(\ell)}}}+\frac{\log (\sqrt{N}T+1)}{T_i^{(\ell)}}\right)\right].
\end{align*}
As a consequence,
\begin{align}
\E[\regret_T] \lesssim &\;\sum_{\ell=1}^{L}\left(\frac{1}{\ell}\left(1+\sum_{i\in S_\ell}v_i\right)+\E\left[\sum_{i\in S_\ell}\left(\sqrt{\frac{v_i\log (\sqrt{N}T+1)}{T_i^{(\ell)}}}+\frac{\log (\sqrt{N}T+1)}{T_i^{(\ell)}}\right)\right]\right) \nonumber \\
\lesssim &\; N\log T+\sum_{\ell=1}^{L}\E\left[\sum_{i\in S_\ell}\left(\sqrt{\frac{v_i\log (\sqrt{N}T+1)}{T_i^{(\ell)}}}+\frac{\log (\sqrt{N}T+1)}{T_i^{(\ell)}}\right)\right] \nonumber \\
\lesssim &\; N\log T+\E\left[N\log^2 (\sqrt{N}T+1)+\sum_{i\in [N]}\sqrt{v_iT_i^{(L)}\log (\sqrt{N}T+1)}\right] \nonumber\\
\lesssim &\; N\log^2 (\sqrt{N}T+1)+\sum_{i\in [N]}\sqrt{\E[v_iT_i^{(L)}]\log (\sqrt{N}T+1)}. \label{eq:proof-ucb-doubling-regret}
\end{align}

Note that $\E[E_\ell]=1+\sum_{i\in S_\ell}v_i.$ We have
$$\sum_{i\in [N]}v_iT_i^{(L)}=\sum_{\ell=1}^{L}\sum_{i\in S_\ell}v_i\le \sum_{\ell=1}^{L}\E[E_\ell]\le T.$$
As a result, by Jensen's inequality we get that
\begin{align*}
\eqref{eq:proof-ucb-doubling-regret} \lesssim N\log^2 (\sqrt{N}T+1)+\sqrt{NT\log (\sqrt{N}T+1)},
\end{align*}
which concludes the proof.
\end{proof}
\section{Ommitted proofs for the FH-DUCB algorithm in Section~\ref{sec:AS-loglogT} }

\subsection{Proof of Lemma \ref{lem:event prob}} \label{app:missing proof of concentration}

By Lemma \ref{lem:ucb-value-cite}, we have that 
$\Pr[\neg\event^{(1)}_{i, \tau_i}] \leq \frac{13}{NT^2}$. 
Via a union bound, we have that
$$\Pr[\neg\event^{(1)}] \leq \sum_{i, \tau_i} \Pr[\neg\event^{(1)}_{i, \tau_i}] \leq \frac{13}{T}.$$ 
Next we introduce the following concentration inequality for geometric random variables. 
\begin{lemma}[Theorem 1 and Proposition 1 of \cite{jin2019asymptotically}]\label{lem:geometric concentration}
For any $m$ i.i.d.\ geometric random variables $x_1, \dots, x_m$ with parameter $p$, 
i.e., $\Pr[x_i = k] = p(1-p)^k$, 
we have 
\begin{align*}
\Pr\left[\sum_{i=1}^m x_i < \frac{m(1-p)}{2p}\right] 
\leq \exp\left(-m \cdot \frac{1-p}{8}\right) .
\end{align*}
\end{lemma}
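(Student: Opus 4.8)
The plan is to prove this lower‑tail estimate by the exponential (Chernoff) method with an optimally chosen tilt, and then reduce the whole thing to one elementary scalar inequality. Write $q = 1-p$, so that $\E[x_i] = q/p$ and the threshold $\frac{m(1-p)}{2p} = \frac{m}{2}\E[x_i]$ is exactly half the mean. For any $t>0$, the moment generating function of $-x_i$ is
\[
\E\!\left[e^{-t x_i}\right] = \sum_{k=0}^\infty e^{-tk}\, p(1-p)^k = \frac{p}{1-(1-p)e^{-t}},
\]
which converges since $(1-p)e^{-t}<1$. Applying Markov's inequality to $e^{-t\sum_i x_i}$ and using independence to factorize the expectation,
\[
\Pr\!\left[\sum_{i=1}^{m} x_i < \frac{m(1-p)}{2p}\right] \le e^{\,t\frac{m(1-p)}{2p}}\left(\frac{p}{1-(1-p)e^{-t}}\right)^{m} = (g_t)^m,
\]
where $g_t \defeq e^{t(1-p)/(2p)}\cdot\frac{p}{1-(1-p)e^{-t}}$. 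It therefore suffices to produce a single $t>0$ with $g_t \le e^{-(1-p)/8}$.

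Next I would optimize the tilt. Setting $\frac{d}{dt}\ln g_t = \frac{1-p}{2p} - \frac{(1-p)e^{-t}}{1-(1-p)e^{-t}} = 0$ yields the clean optimizer $e^{-t^\star} = \frac{1}{1+p}$, i.e.\ $t^\star = \ln(1+p)$. Substituting gives $1-(1-p)e^{-t^\star} = \frac{2p}{1+p}$, hence
\[
g_{t^\star} = (1+p)^{(1-p)/(2p)}\cdot \frac{1+p}{2}.
\]
Taking logarithms and combining the two $\ln(1+p)$ contributions, the target $g_{t^\star}\le e^{-(1-p)/8}$ is equivalent to the single‑variable inequality
\[
\frac{1+p}{2p}\ln(1+p) \le \ln 2 - \frac{1-p}{8}, \qquad p\in(0,1].
\]

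The only genuinely nontrivial step is this scalar inequality; everything above is mechanical. I would prove it by setting $H(p)\defeq \ln 2 - \frac{1-p}{8} - \frac{1+p}{2p}\ln(1+p)$, observing that $H(1)=0$, and showing that $H$ is non‑increasing on $(0,1]$, which forces $H(p)\ge H(1)=0$. A direct computation gives $\frac{d}{dp}\!\left[\frac{1+p}{2p}\ln(1+p)\right] = \frac{p-\ln(1+p)}{2p^2}$, so $H'(p)\le 0$ is equivalent to $\ln(1+p)\le p-\frac{p^2}{4}$, which holds on $(0,1]$ because $\eta(p)\defeq p-\frac{p^2}{4}-\ln(1+p)$ satisfies $\eta(0)=0$ and $\eta'(p)=\frac{p(1-p)}{2(1+p)}\ge 0$. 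Chaining these facts yields $g_{t^\star}\le e^{-(1-p)/8}$, and raising to the $m$‑th power gives the claimed bound $e^{-m(1-p)/8}$. I expect the main obstacle to be conceptual rather than computational: one must use the exact exponential tilt rather than a variance‑based (Bernstein or sub‑Gaussian) estimate, since the latter would only deliver an exponent of order $m(1-p)^2$, which is too weak as $p\to 1$; the linear‑in‑$(1-p)$ rate is precisely what the Chernoff optimization at $t^\star=\ln(1+p)$ recovers.
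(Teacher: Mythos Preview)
Your argument is correct. The Chernoff tilt $t^\star=\ln(1+p)$ is indeed the optimizer, your reduction to the scalar inequality $\frac{1+p}{2p}\ln(1+p)\le \ln 2-\frac{1-p}{8}$ is clean, and the monotonicity proof via $H'(p)\le 0\iff \ln(1+p)\le p-\tfrac{p^2}{4}$ together with $\eta'(p)=\frac{p(1-p)}{2(1+p)}\ge 0$ closes it without gaps.

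The paper itself does not prove this lemma; it is quoted verbatim as Theorem~1 and Proposition~1 of \cite{jin2019asymptotically} and used as a black box in the proof of Lemma~\ref{lem:event prob}. So there is no ``paper's own proof'' to compare against. Your self-contained Chernoff derivation is a welcome addition, and your remark that a variance-based bound would only give an exponent of order $m(1-p)^2$ (insufficient as $p\to 1$) correctly identifies why the exact exponential tilt is needed to obtain the linear-in-$(1-p)$ rate.
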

% By the inductive formula in Step \ref{step:inductive}, 
% since by definition we have $\hv_{i, \tau_i} \leq 1$, 
% we have 
% \begin{align*}
% |\cT(i, \tau_i)| \geq \sqrt{\frac{T\cdot T_{i}}{N\cdot \hv_{i, \tau_i}}}
% \geq \sqrt{\frac{T\cdot T_{i, \tau_{i-1}}}{N}}
% \geq (T/N)^{1-2^{-\tau_i}}. 
% \end{align*}

% Therefore, after $\log\log_2 \frac{T}{N}$ switches cause by item $i$, 
% the number of periods that offers $i$ is at least $\frac{T}{N}$, 
% i.e., $T_i \geq \frac{T}{N}$ when $\tau_i \geq \log\log_2 \frac{T}{N}$. 

Note that ${n}_{i, \tau_i}$ is the sum of $|\cT(i, \tau_i)|$ independent geometric random variables with parameter $p = \frac{1}{1+v_i}$ (by Observation~\ref{ob:unbiased-est}). 
Substituting $v_i \geq \frac{1}{2}\sqrt{\frac{1}{NT}}$
and $m = |\cT(i, \tau_i)| \geq \frac{T}{4N v_i}$,
we have $\frac{(1-p)}{2p} = \frac{v_i}{2}$ and
\begin{align*}
\Pr\left[{n}_{i, \tau_i} < \frac{1}{2}v_i \cdot |\cT(i, \tau_i)|\right] 
&\leq \exp\left(-|\cT(i, \tau_i)| \cdot \frac{1-p}{8}\right) \\
&\leq \exp\left(-\frac{T}{4N v_i} \cdot \frac{1-\frac{1}{1+v_i}}{8}\right) \\
&\leq \exp\left(-\frac{T}{64N}\right) 
% \\
% &\leq \exp(-\frac{1}{48} \cdot T^{1/8}) 
\leq \frac{1}{NT^2},
\end{align*}
where the last inequality holds for $T$ such that $T \geq N^4$ and $T$ greater than a sufficiently large universal constant. 
By a union bound, we have that
\begin{align*}
\Pr[\neg\event^{(2)}] \leq \frac{1}{T}. 
\end{align*}
Therefore, we have that
\[
\Pr[\event] \geq 1 - \Pr[\neg\event^{(1)}] + \Pr[\neg\event^{(2)}] \geq 1-\frac{14}{T},
\]
proving the lemma. 

\subsection{Proof of Lemma~\ref{lem:stage-bound-2}}  \label{app:proof-lem-stage-bound-2}

We first state the following lemma, showing that for any item and before stage $\tau_0$, the stage lengths quickly grows to $T/N$.
\begin{lemma}\label{lem:stage-bound-1}
For each $i \in [N]$ and $\tau \leq \tau_0$, if $\tau$ is not the last stage for $i$, it holds that $|\cT(i, \tau)|  \geq (T/N)^{1-2^{-\tau+1}}$.
\end{lemma}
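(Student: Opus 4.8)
The plan is to prove Lemma~\ref{lem:stage-bound-1} by induction on $\tau$, unrolling the recursive structure of the switching condition $\mathcal{P}(i,\tau)$ in the regime $\tau < \tau_0$. Recall that in this regime, the update trigger reads $|\mathcal{T}(i,\tau)| \geq 1 + \sqrt{T \cdot T_i^{(\tau)}/N}$, and since we conclude a stage as soon as this holds, the actual stage length satisfies $|\mathcal{T}(i,\tau)| \geq \sqrt{T \cdot T_i^{(\tau)}/N}$ whenever $\tau$ is not the last stage (one more epoch beyond the threshold being crossed, or equality; either way the bound $|\mathcal{T}(i,\tau)| \geq \sqrt{T \cdot T_i^{(\tau)}/N}$ is what we need). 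Here $T_i^{(\tau)} = \sum_{\tau'=1}^{\tau-1}|\mathcal{T}(i,\tau')|$ is the cumulative count of offering epochs before stage $\tau$.

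First I would handle the base case $\tau = 1$: here the claimed bound is $|\mathcal{T}(i,1)| \geq (T/N)^{1-2^0} = (T/N)^0 = 1$, which holds trivially since any non-last stage contains at least one epoch. Next, for the inductive step, suppose the bound $|\mathcal{T}(i,\tau')| \geq (T/N)^{1-2^{-\tau'+1}}$ holds for all $\tau' < \tau$ (that are not the last stage; but if $\tau$ is not the last stage, then none of $\tau' < \tau$ is either). Then $T_i^{(\tau)} \geq |\mathcal{T}(i,\tau-1)| \geq (T/N)^{1 - 2^{-\tau+2}}$, using just the single most recent term (the others only help). Plugging into the trigger bound, $|\mathcal{T}(i,\tau)| \geq \sqrt{(T/N) \cdot T_i^{(\tau)}} \geq \sqrt{(T/N) \cdot (T/N)^{1-2^{-\tau+2}}} = (T/N)^{(2 - 2^{-\tau+2})/2} = (T/N)^{1 - 2^{-\tau+1}}$, which is exactly the claimed bound for stage $\tau$. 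This closes the induction.

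The main thing to be careful about — though it is not really an obstacle, more a bookkeeping point — is the off-by-one in the exponent and the precise meaning of "$\tau$ is not the last stage," which is what lets us assert the stage length is at least the trigger threshold rather than merely bounded by the horizon cutoff. I would also note explicitly that we only use $T_i^{(\tau)} \geq |\mathcal{T}(i,\tau-1)|$ rather than the full sum, which keeps the recursion clean; using the full sum would give a slightly better constant in the exponent but is unnecessary since at $\tau = \tau_0 = \lceil \log\log(T/N) + 1\rceil$ we already get $|\mathcal{T}(i,\tau_0)| \geq (T/N)^{1 - 2^{-\tau_0+1}} \geq (T/N)^{1 - 1/\log(T/N)} \gtrsim T/N$, which is the form consumed by Lemma~\ref{lem:stage-bound-2} and the switching-cost bound. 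I would close by remarking that this immediately implies $T_i^{(\tau_0)} \gtrsim T/N$ as well, feeding into the subsequent analysis.
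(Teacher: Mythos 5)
Your induction is essentially the paper's own argument: the paper isolates exactly your recursion as Fact~\ref{fact:seq} (applied with $M=T/N$, $a_\tau=|\cT(i,\tau)|$, and the observation $T_i^{(\tau)}\ge|\cT(i,\tau-1)|$), and you have simply inlined its straightforward induction; dropping the ``$+1$'', using only the most recent summand of $T_i^{(\tau)}$, and getting the base case from the fact that a concluded stage is nonempty are all harmless.

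The one thing your write-up does not cover is the endpoint $\tau=\tau_0$, which is inside the lemma's range and is precisely the case used downstream (the proof of Lemma~\ref{lem:stage-bound-2} needs $|\cT(i,\tau_0)|\ge T/(2N)$). You state the trigger only ``in the regime $\tau<\tau_0$'', but stage $\tau_0$ is concluded by the \emph{second} branch of $\mathcal{P}$ in \eqref{eq:calP}, whose threshold is $1+\sqrt{T\,T_i^{(\tau_0)}/(N\hv_{i,\tau_0})}$ together with the clause $\hv_{i,\tau_0}>1/\sqrt{NT}$. To run your inductive step at $\tau=\tau_0$ you must add one line: since the UCB values are initialized at $1$ and updated with a minimum, $\hv_{i,\tau_0}\le 1$, so this threshold dominates $1+\sqrt{T\,T_i^{(\tau_0)}/N}$; and ``$\tau_0$ is not the last stage'' means the full condition (including the $\hv_{i,\tau_0}>1/\sqrt{NT}$ clause) held at the moment the stage ended, so the inequality is indeed available. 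This is exactly the parenthetical in the paper's proof (``combining the condition $\mathcal{P}(i,\tau)$ for $\tau<\tau_0$ and $\tau=\tau_0$, also noting that $\hv_{i,\tau}\le 1$''); with that addition your argument is complete and coincides with the paper's.
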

Lemma~\ref{lem:stage-bound-1} can be proved by combining the condition $\mathcal{P}(i, \tau)$ for $\tau < \tau_0$ and $\tau = \tau_0$ (also noting that $\hv_{i, \tau} \leq 1$ for all $\tau$) and the following fact (whose proof is via straightforward induction and omitted).
\begin{fact}\label{fact:seq}
For $M \geq 0$ and a sequence $a_0, a_1, a_2, \dots$ such that $a_i \geq 1 + \sqrt{M a_{i -1}}$ for all $i \geq 1$, we have that $a_\tau \geq M^{1 - 2^{-\tau+1}}$ for all $\tau \geq 1$.
\end{fact}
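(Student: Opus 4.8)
The plan is to prove the claim by a direct induction on $\tau$, feeding the defining recursion $a_i \geq 1 + \sqrt{M a_{i-1}}$ into itself. The crucial simplification is that in the inductive step one can discard the additive $1$ and retain only the geometric-mean term $\sqrt{M a_{i-1}}$; this is precisely what generates the doubling structure visible in the exponent $1 - 2^{-\tau+1}$, since multiplying by $M$ and taking a square root exactly halves the deviation of the exponent from $1$.

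First I would handle the base case $\tau = 1$. Here the target exponent is $1 - 2^{-1+1} = 0$, so the claim reduces to $a_1 \geq M^0 = 1$, which is immediate from the recursion at $i = 1$: we have $a_1 \geq 1 + \sqrt{M a_0} \geq 1$, using that $M \geq 0$ and $a_0 \geq 0$ force $\sqrt{M a_0} \geq 0$. For the inductive step, I would assume $a_\tau \geq M^{1 - 2^{-\tau+1}}$ and apply the recursion at index $\tau+1$ followed by the inductive hypothesis:
\[
a_{\tau+1} \geq 1 + \sqrt{M a_\tau} \geq \sqrt{M \cdot M^{1 - 2^{-\tau+1}}} = M^{\frac{1}{2}\left(2 - 2^{-\tau+1}\right)} = M^{1 - 2^{-\tau}}.
\]
Since $1 - 2^{-\tau} = 1 - 2^{-(\tau+1)+1}$, this is exactly the asserted bound at index $\tau+1$, which closes the induction.

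I do not expect any genuine obstacle: the entire argument collapses to the one-line observation that halving the exponent's gap from $1$ is implemented by the map $x \mapsto \sqrt{M x}$. The only point deserving a word of care is the degenerate case $M = 0$, where $M^0$ in the base case must be read under the convention $0^0 = 1$; this case is irrelevant to the applications in Lemma~\ref{lem:stage-bound-1} and Lemma~\ref{lem:stage-bound-2}, where $M$ is of order $T/N$ and hence far larger than $1$.
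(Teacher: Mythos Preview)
Your proof is correct and is precisely the straightforward induction the paper alludes to (the paper omits the proof entirely, merely noting it is ``via straightforward induction''). The base case and inductive step are both clean, and your remark about the $0^0$ convention for $M=0$ is appropriate and harmless for the intended applications.
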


Now we are ready to prove Lemma~\ref{lem:stage-bound-2}.
\begin{proof}[Proof of Lemma~\ref{lem:stage-bound-2}]
We have that $|\mathcal{T}(i, \tau_0)| \geq \frac{T}{2N}$ because of Lemma~\ref{lem:stage-bound-1}. %Since $\hv_{i, \tau} \leq 1$ for all $\tau$, by the condition $\mathcal{P}(i, \tau)$ when $\tau > \tau_0$, we also have $|\mathcal{T}(i, \tau)|  \geq \frac{T}{2N}$ for all $\tau \geq \tau_0$ such that $\tau$ is not the last stage.
We now prove that $v_i \geq \frac{1}{2 }\sqrt{\frac{1}{NT}}$. This is because, suppose the contrary, for $T$ such that $T \geq N^4$ and greater than a sufficiently large universal constant, conditioned on $\event^{(1)}$, we have that
\begin{align*}
&\hv_{i, \tau_0} \leq v_i + \sqrt{\frac{144 \ln(\sqrt{N}T^2+1)}{T_i^{(\tau_0)} / v_i}}
+ \frac{144\ln(\sqrt{N}T^2+1)}{T_i^{(\tau_0)}} \\
&\leq  {\frac{1}{2\sqrt{NT}}}
+ O\Big(\sqrt{\frac{ \ln(\sqrt{N}T^2+1)}{\sqrt{T^3/N}}}
+ \frac{\ln(\sqrt{N}T^2+1)}{T}\Big),
\end{align*}
which is at most $1/ \sqrt{NT}$,
contradicting to the condition $\mathcal{P}(i, \tau_0)$ and that $\tau_0$ is not the last stage.

Moreover, for $T$ such that $T \geq N^4$ and greater than a sufficiently large universal constant, when $\tau > \tau_0$, using $T^{(\tau)}_i \geq |\mathcal{T}(i, \tau_0)|  \geq \frac{T}{2N}$, we have that 
\begin{align*}
\hv_{i, \tau}  \leq v_i + \sqrt{\frac{144 v_i\ln(\sqrt{N}T^2+1)}{T_i^{(\tau)}}} + \frac{144\ln(\sqrt{N}T^2+1)}{T_i^{(\tau)}}  \leq 2v_i.
\end{align*}
 By the condition $\mathcal{P}(i, \tau)$, when $\tau > \tau_0$ and $\tau$ is not the last stage, we have that
\[
|\cT(i, \tau_i)| \geq 1 + \sqrt{\frac{T\cdot T_i^{(\tau_i)}}{N\cdot \hv_{i, \tau_i}}} \geq 1 + \sqrt{\frac{T\cdot |\cT(i, \tau_i-1)|}{2N\cdot v_i}}.
\]
Applying Fact~\ref{fact:seq}, we prove the desired inequality of this lemma.
\end{proof}

\subsection{Proof of Lemma~\ref{lem:regret in stage}} \label{app:proof-lem-regret-in-stage}
\begin{proof}[Proof of Lemma~\ref{lem:regret in stage}]
For the first stage, i.e., $\tau = 1$, since the number of epochs in this stage is at most $\sqrt{T/N}$, we have that
$\sum_{\ell \in \cT(i, 1)} (\hv_{i, 1}-v_i) \leq \sqrt{T/N}$
for any item $i$. From now on, we only prove the lemma for $\tau \in [2, \tau_i(L)]$. 

If $\tau \in [2, \tau_0]$, we have that $|\cT(i, \tau)| \leq \sqrt{\frac{T \cdot T_i^{(\tau)}}{N}} + 1$. By $\event^{(1)}$, we upper bound  $\sum_{\ell \in \cT(i, \tau)} (\hv_{i, \tau}-v_i) $ by the order of
\begin{align*}
 \sqrt{\frac{T \cdot T_i^{(\tau)}}{N}} 
\cdot \Bigg(\sqrt{\frac{v_i\ln(\sqrt{N}T^2+1)}{T_i^{(\tau)}}}
+\frac{\ln(\sqrt{N}T^2+1)}{T_i^{(\tau)}}\Bigg)
\lesssim \sqrt{T\ln(\sqrt{N}T^2+1)/N}, 
\end{align*}
where the inequality holds due to that $v_i \leq 1$ and $T_i^{(\tau)} \geq \sqrt{T/N}$ for any $\tau \in [2, \tau_0]$ (by Lemma~\ref{lem:stage-bound-1}).

When $\tau > \tau_0$, we prove the lemma by considering the following two cases. The first case is that $\hv_{i, \tau_0} \leq 1/\sqrt{NT}$. In this case, we have that
\[\textstyle
\sum_{\ell \in \cT(i, \tau)} (\hv_{i, \tau}-v_i) 
\leq T \cdot \hv_{i, \tau}
\leq \sqrt{T/N}.
\]
In the second case where $\hv_{i, \tau_0} > 1/\sqrt{NT}$, by Lemma~\ref{lem:stage-bound-2} it holds that $v_i \geq 1/(2\sqrt{NT})$. By $\event^{(1)}$, we have $\hv_{i, \tau} \geq v_i$. Therefore, $\hv_{i, \tau} \geq  1/ (2\sqrt{NT})$. Also note that $T^{(\tau)}_i \geq |\mathcal{T}(i, \tau_0)|  \geq \frac{T}{2N}$ by Lemma~\ref{lem:stage-bound-1}, and $|\cT(i, \tau)| \leq 1+\sqrt{\frac{T \cdot T_i^{(\tau)}}{N \cdot \hv_{i, \tau}}}$. Altogether, we have that $\sum_{\ell \in \cT(i, \tau)} (\hv_{i, \tau}-v_i)$  is upper bounded by a universal constant times
\begin{multline*}
 \sqrt{\frac{T \cdot T_i^{(\tau)}}{N \cdot \hv_{i, \tau}}} 
\cdot \left(\sqrt{\frac{v_i \ln(\sqrt{N}T^2+1)}{T_i^{(\tau)}}} 
+ \frac{\ln(\sqrt{N}T^2+1)}{T_i^{(\tau)}}\right)
\lesssim \sqrt{\frac{T  \ln(\sqrt{N}T^2+1)}{N}}  
+ \frac{\sqrt{T}\ln(\sqrt{N}T^2+1)}{\sqrt{N T_i^{(\tau)} \hv_{i, \tau}}},
\end{multline*}
which is $O(\sqrt{T\ln(\sqrt{N}T^2+1)/N})$ for $T \geq N^4$.
\end{proof}

\section{Bounding the number of item switches for Algorithm~\ref{alg:doubling-ucb}} \label{app:proof-of-theorem-FH-DUCB}

Since an assortment switch may incur at most $2K$ item switches, Theorem~\ref{thm:ucb-doubling-regret} trivially implies that Algorithm~\ref{alg:doubling-ucb} (AT-DUCB) incurs at most $O(KN \log T)$ item switches, which is upper bounded by $O(N^2\log T)$ since $K = O(N)$. In the following theorem, we prove an improved upper bound on item switches for Algorithm~\ref{alg:doubling-ucb}. 

\begin{theorem}\label{thm:ucb-doubling-item-switch}
For any input instance with $N$ items, before any time $T$, the number of item switches of Algorithm~\ref{alg:doubling-ucb} (AT-DUCB) satisfies that $\IS_T \lesssim N^{1.5} \log T$.
\end{theorem}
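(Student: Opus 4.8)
\medskip

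The plan is to leverage the observation that in Algorithm~\ref{alg:doubling-ucb} the UCB vector $\htv$ changes at most $O(N\log T)$ times (as shown in the proof of the assortment-switch bound in Theorem~\ref{thm:ucb-doubling-regret}), and between two consecutive UCB updates the offered assortment is \emph{constant}. So the item switching cost is determined entirely by what happens \emph{at} an update event. At each such event, $\htv$ changes to a new vector $\htv'$ that agrees with $\htv$ in all but a bounded number of coordinates --- in fact, if Line~\ref{line:doubling} fires for a set $I$ of items in that epoch, then $\htv$ and $\htv'$ differ only on coordinates in $I$, and across all of time the total size of these $I$'s is $\sum_i \sum_\ell \ind[\mathcal D_i^{(\ell)}] \lesssim N\log T$. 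The naive bound $|S_\ell \oplus S_{\ell+1}| \le 2K$ then gives $O(KN\log T) = O(N^2\log T)$. To save a factor of $\sqrt N$ we need to argue that a single-coordinate decrease of one $\hat v_i$ cannot change the optimal assortment $\arg\max_{|S|\le K} R(S,\htv)$ by more than $O(\sqrt N)$ items \emph{on average} (amortized over the sequence of updates), rather than the worst-case $2K$.

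\medskip

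First I would recall the structure of the MNL optimization: by the classical result (cf.\ \citealp{RSS10}), the optimal assortment $S^\star(\htv)$ for revenue parameter has the ``revenue-ordered'' / nested form --- there is a scalar threshold $\lambda^\star(\htv) = \max_{|S|\le K} R(S,\htv)$ such that $S^\star(\htv)$ consists of (a size-$\le K$ prefix of) the items $i$ with $r_i > \lambda^\star$, ordered by $r_i$; equivalently $S^\star(\htv) = \arg\max_{|S|\le K}\sum_{i\in S}\hat v_i(r_i - \lambda^\star)$, exactly the parametrized family $S_\theta$ from Lemma~\ref{lem:characterization-of-G}. Thus the whole trajectory of offered assortments is traced out by (i) the scalar $\lambda^\star$ and (ii) the ranking induced by the weights $\hat v_i$. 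Decreasing a single $\hat v_i$ moves $\lambda^\star$ in one direction (weakly down, since $\hat v_i$'s are non-increasing, so $\lambda^\star$ is non-increasing too) and perturbs only item $i$'s position in the selection. The plan is: (a) bound the number of item switches caused by item $i$ entering/leaving the assortment due to its own weight change --- each such event is charged to one of the $O(\log T)$ updates of $\hat v_i$, contributing $O(N\log T)$ total; (b) bound the number of item switches caused by the monotone drift of $\lambda^\star$ through the sorted revenue values $r_1 > r_2 > \cdots$. For part (b), the key point is that $\lambda^\star$ is monotone (non-increasing) over the run of the algorithm, so as it sweeps down it can cause each item $i$ to \emph{enter} the ``eligible'' set $\{j: r_j > \lambda^\star\}$ at most once; combined with the capacity constraint $|S|\le K$, the total number of item additions attributable to threshold motion is $O(N)$ per ``sweep,'' and there is essentially one sweep. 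The $\sqrt N$ then has to come from balancing: in the regime where the capacity constraint is slack the threshold argument gives $O(N)$ total, whereas in the regime where $|S_\ell| = K$ is tight, a single weight perturbation can swap one item in and one out, giving $O(1)$ per update and $O(N\log T)$ total --- but handling the \emph{interaction} of the two (the threshold moving \emph{and} the cap being tight, so that a swap cascades) is where a careful amortized/potential argument is needed, and that is what I expect produces $N^{1.5}$ rather than $N$.

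\medskip

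Concretely, the argument I would write is an amortized one over the $M \lesssim N\log T$ update events $e_1, e_2, \ldots, e_M$. At event $e_m$ a set $I_m$ of items have their UCB decreased, with $\sum_m |I_m| \lesssim N\log T$. I would partition the item switches between $S$ before and $S'$ after $e_m$ into: items that leave because their weight dropped (at most $|I_m|$, since only items in $I_m$ become ``less attractive''); and items that move purely because $\lambda^\star$ dropped and the new prefix of eligible items shifts. For the latter, I would set up a potential $\Phi$ equal to (roughly) the number of eligible-but-not-offered items, or the position of the threshold in the sorted $r$-list; monotonicity of $\lambda^\star$ makes $\Phi$ move in one direction overall, so its total variation is $O(N)$; but each update event can also \emph{spend} potential proportional to $\sqrt N$ because of the revenue-balancing: the worst case is that the threshold passes a cluster of items whose $r_i(r_i-\lambda^\star)$-weights are all comparable, and to keep $|S|\le K$ the algorithm must re-sort within that cluster, costing up to $O(\sqrt N)$ swaps before settling (this is exactly the place where the $K$-set / $k$-level combinatorial bound enters, as the excerpt hints). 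Summing $\sqrt N$ per event over $O(\sqrt N \log T)$ ``threshold-crossing'' events (here I would show there are at most $O(\sqrt N\log T)$ events that actually move the threshold across an item, the rest being cheap) yields $O(N\log T)$ from that bucket, and the clusters contribute the extra $\sqrt N$, for $O(N^{1.5}\log T)$ total. \textbf{The main obstacle} is precisely quantifying the cluster/re-sorting cost: showing that the cumulative number of ``in-cluster swaps'' triggered by the monotone threshold sweep together with $M$ single-coordinate weight perturbations is $O(N^{1.5})$ and not $\Omega(NK) = \Omega(N^2)$. This is where I would invoke (or re-prove a weak form of) the $O(n^{4/3})$-type bound on the complexity of the $k$-level of an arrangement of $n$ lines --- mapping each item $i$ to the line $\theta \mapsto \hat v_i(r_i - \theta)$, the offered assortment as $\theta = \lambda^\star$ varies is read off from the top-$K$ level of this arrangement, and the number of combinatorial changes to the top-$K$ level as the $\hat v_i$'s undergo $O(N\log T)$ monotone decreases is what must be bounded. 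I would state this reduction cleanly, cite the best known $k$-level bound, and obtain $N^{1.5}\log T$ (noting that any improvement to the $k$-level bound transfers directly, as the paper remarks).
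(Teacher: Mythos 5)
There is a genuine gap in your proposal: the quantitative core of the bound is never established. You correctly set up the structure the paper also uses --- the assortment only changes at the $m \lesssim N\log T$ deferred-update events, each item's UCB is updated $O(\log T)$ times, the optimal assortment is the top-$K$ prefix of the scores $g_i=\hat v_i(r_i-r^\star)$, and the estimated optimal revenue $r^\star$ is monotone non-increasing because $\htv$ is coordinatewise non-increasing. But the step that converts these facts into $N^{1.5}\log T$ is exactly what you flag as ``the main obstacle'' and then resolve by two unsubstantiated claims: that only $O(\sqrt N\log T)$ events ``actually move the threshold across an item'' and that each such event costs only $O(\sqrt N)$ swaps. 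Neither claim is justified, and neither is obviously true: all $O(N\log T)$ update events can move $r^\star$ past crossings, and a single event can in principle swap many items. Your fallback --- invoking the planar $k$-level / $K$-set bound for the arrangement of lines $\theta\mapsto \hat v_i(r_i-\theta)$ --- does not close the gap either, because that bound is for a \emph{static} arrangement, whereas here each line is modified $O(\log T)$ times during the run, so the swept level is that of a dynamically changing arrangement. The paper itself raises this $K$-set connection only as a remark and an open question about possible future improvements; it is explicitly \emph{not} how the theorem is proved.

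The paper's actual argument is more elementary and is worth contrasting with your sketch. First, a pairwise inversion-counting lemma: for any two items $i,j$, between consecutive updates of $\hat v_i$ or $\hat v_j$ the sign of $\hat v_i(r_i-r^{(p)})-\hat v_j(r_j-r^{(p)})$ is a monotone function of the non-increasing $r^{(p)}$ and hence can flip at most once, so each pair changes relative order $O(\log T)$ times and the total number of order flips is $O(N^2\log T)$. Second, a charging inequality: if at one update event $x$ items change (after separating out the monotone growth of the effective capacity $K^{(p)}=\min\{K,|\{i:g^{(p)}_i>0\}|\}$, which telescopes to at most $N$ over the whole horizon), then every departing item must cross every arriving item in the ranking, so that event consumes at least $(x/2)^2$ of the pairwise flips. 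Third, Cauchy--Schwarz over the $m\le N(1+\log T)$ events: $\sum_p x^{(p)} \le \sqrt{m\sum_p (x^{(p)})^2} \lesssim \sqrt{N\log T\cdot N^2\log T}=N^{1.5}\log T$. Your proposal contains the monotonicity and event-counting ingredients but is missing both the quadratic charging step and the Cauchy--Schwarz aggregation, which is where the $N^{1.5}$ actually comes from; without them (or a rigorous dynamic-arrangement argument, which is open), the proof does not go through.
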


The proof of Theorem~\ref{thm:ucb-doubling-item-switch} includes a novel analysis with the careful application of the Cauchy-Schwartz inequality, which will be presented immediately after this paragraph. However, we would like to first add a few remarks on the optimality of the presented analysis. Indeed, we do not know whether the upper bound proved in Theorem~\ref{thm:ucb-doubling-item-switch} can be improved, and leave the possibility of further improvement as an open question. Our preliminary research suggests that the number of the item switches of Algorithm~\ref{alg:doubling-ucb} is closely related to the maximal number of planar $K$-sets (i.e., the number of subsets $P' \subseteq P$ where $P$ is a given set of $N$ points in a 2-dimensional plane, $P' = P \cap H$ for a half-space $H$). Very roughly, this relation is suggested by Lemma~\ref{lem:optimal-assortment}, where the optimal assortment $\arg\max_{S\subseteq[N], |S| \leq K}R(S,\bm{v})$ can be viewed as a planar $K$-set whether each item correspond to a 2-dimensional point $(-v_i, v_i r_i)$ and the half plane $H = \{(x, y) : y \ge r^{\star}  \cdot x + b\}$ for some parameter $b$. The continuous change of the the estimated optimal revenue $r^{\star}$ during the UCB algorithm may produce many half planes, and lead to the item change in the $K$-sets (assortments). Upper bounding the number of the $K$-sets would result in an upper bound for the number of the item switches. To our best knowledge, the best known upper bound for the number of planar $K$-sets is $O(NK^{1/3})$ \cite{dey1998improved}, and the best known lower bound is $N e^{\Omega(\sqrt{\log K})}$ \cite{toth2001point}. For future work, it is very interesting to study whether these upper and lower bounds imply the bounds on the number of item switches of our Algorithm~\ref{alg:doubling-ucb}.

Now we dive into the proof of Theorem~\ref{thm:ucb-doubling-item-switch}.

We first analyze the optimization process of $\arg\max_{S\subseteq[N], |S| \leq K}R(S,\bm{v})$ for any preference vector ${\bm{v}}$. Define $F(\bm{v}) \defeq\max_{S\subseteq[N], |S| \leq K}R(S,\bm{v})$. The following lemma characterizes the optimal assortment $S$ given the preference vector $\bm{v}$. Similar statements can also be found in, e.g., Section 2.1 of \cite{RSS10}.

\begin{lemma}\label{lem:optimal-assortment}
For any preference value vector $\bm{v}\ge 0$, let $r^\star=F(\bm{v}).$ Define $g_i=v_{i}(r_{i}-r^\star).$ Let $\sigma$ be the minimal permutation of $[N]$ such that $g_{\sigma_i}\ge g_{\sigma_j}$ for all $1\le i<j\le N.$ (In other words, $\sigma$ is the sorted index according to value $g$, with a deterministic tie-breaking rule). Then the optimal assortment $S$ is given by $S=\{\sigma_i:1\le i\le K, g_{\sigma_i}>0\}.$
\end{lemma}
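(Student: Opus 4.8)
The plan is to characterize the optimal assortment by exploiting the well-known fixed-point structure of revenue optimization under the MNL model, which is already encapsulated in Lemma~\ref{lem:characterization-of-G}. First I would recall that $r^\star = F(\bm v) = \max_{|S|\le K} R(S,\bm v)$, and that by Lemma~\ref{lem:characterization-of-G} (applied with $\theta^\star = r^\star$), the maximizer can be found by the auxiliary linear objective: $S$ is optimal iff $S \in \arg\max_{|S'|\le K} \sum_{i\in S'} v_i(r_i - r^\star) = \arg\max_{|S'|\le K}\sum_{i\in S'} g_i$. The reason is that for $\theta = r^\star$ we have $G(r^\star) = r^\star$, and $R(S_{r^\star},\bm v) = r^\star$, so the assortment $S_{r^\star}$ that maximizes $\sum_{i\in S}v_i(r_i-r^\star)$ is also an optimizer of $R(\cdot,\bm v)$; conversely any optimizer $S$ of $R(\cdot,\bm v)$ satisfies $R(S,\bm v)=r^\star$, which rearranges to $\sum_{i\in S}v_i(r_i-r^\star)=r^\star\cdot(1) - \text{(something)}$ — more precisely $R(S,\bm v)=r^\star \iff \sum_{i\in S}v_ir_i = r^\star(1+\sum_{i\in S}v_i) \iff \sum_{i\in S}v_i(r_i-r^\star)=r^\star$, a fixed value, so it achieves the maximum of the linear objective (which equals $r^\star$ by Lemma~\ref{lem:characterization-of-G}).

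Next, given that the optimal assortment maximizes the additive objective $\sum_{i\in S} g_i$ subject to $|S|\le K$, the greedy solution is immediate: include the items with the largest $g_i$ values, but only those with $g_i > 0$ (adding an item with $g_i \le 0$ never increases, and with the deterministic tie-breaking is avoided). Formally, I would argue that $S = \{\sigma_i : 1\le i\le K,\ g_{\sigma_i} > 0\}$ maximizes $\sum_{i\in S} g_i$ over all $S$ with $|S|\le K$: any other feasible $S'$ can only have a smaller or equal sum since we have picked the (up to $K$) largest strictly positive terms. Combined with the tie-breaking rule used in the $\arg\max$ (returning the $S$ minimizing $\sum_{i\in S}2^i$, which corresponds precisely to the "minimal permutation" $\sigma$ sorted by $g$), this pins down the unique optimal assortment.

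One subtlety I would handle carefully is the edge case where there are ties among the $g_i$ and/or fewer than $K$ items have $g_i>0$; the statement already builds in the "minimal permutation" convention to resolve the former, and the cutoff "$g_{\sigma_i}>0$" handles the latter, so the description matches. A second subtlety is the direction "any $R$-optimizer is a linear-objective optimizer": here I must use that $\max_{|S'|\le K}\sum_{i\in S'}v_i(r_i-r^\star)$ equals exactly $r^\star$ (not just $\ge r^\star$), which is precisely the content $G(r^\star)=r^\star$ in Lemma~\ref{lem:characterization-of-G}; without the uniqueness/equality in that lemma the argument would only give one inclusion.

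The main obstacle — really the only non-routine point — is cleanly establishing the equivalence between maximizing $R(S,\bm v)$ and maximizing the linearized objective $\sum_{i\in S} v_i(r_i - r^\star)$, i.e., making rigorous use of Lemma~\ref{lem:characterization-of-G} at the fixed point $\theta = r^\star$. Once that equivalence is in hand, the greedy characterization of the optimal assortment is a short argument about maximizing a sum of the $K$ largest positive numbers, and the tie-breaking bookkeeping is routine.
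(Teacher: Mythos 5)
Your proposal is correct, and it reaches the same two facts the paper's proof rests on (the rearrangement $R(S,\bm v)=r^\star \iff \sum_{i\in S}v_i(r_i-r^\star)=r^\star$, and the greedy characterization of a cardinality-constrained maximizer of a linear objective), but you get the crucial equality $\max_{|S|\le K}\sum_{i\in S}g_i=r^\star$ by citing Lemma~\ref{lem:characterization-of-G} at the fixed point $\theta=r^\star$, whereas the paper proves Lemma~\ref{lem:optimal-assortment} self-containedly: it first notes $\sum_{i\in S^\star}g_i=r^\star$ for an $R$-optimizer $S^\star$, then shows by contradiction that no $S'$ can have $\sum_{i\in S'}g_i>r^\star$ (else $R(S',\bm v)>r^\star$), and then reads off the greedy set. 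Your route is legitimate and slightly shorter on paper, and it is not circular: the appendix proof of Lemma~\ref{lem:characterization-of-G} uses only the same algebraic manipulations and does not invoke Lemma~\ref{lem:optimal-assortment} (it is Lemma~\ref{lem:monotonicity}, not Lemma~\ref{lem:characterization-of-G}, that depends on the present lemma) — though the contradiction argument you are importing is essentially the one embedded in the existence part of that lemma, so the mathematical content is the same and the difference is packaging rather than substance. One shared caveat, which you already flag: when some $g_i=0$ the maximizer of $\sum_{i\in S}g_i$ (and hence the $R$-optimal assortment) is not literally unique, so the displayed set is the canonical optimizer under the deterministic tie-breaking convention, exactly as in the paper's statement and proof.
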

\begin{proof}
Let $\optS=\arg\max_{S\subseteq[N], |S| \leq K}R(S,\bm{v})$. Then we have
$$\frac{\sum_{i\in\optS}r_iv_i}{1+\sum_{i\in\optS} v_i}=r^\star,$$
which implies that 
\begin{equation}\sum_{i\in\optS}v_i(r_i-r^\star)=\sum_{i\in\optS} g_i=r^\star.\end{equation}
Now we prove that $\optS=\arg\max_{S\subseteq[N], |S| \leq K}\left(\sum_{i\in S} g_i\right)$. Suppose otherwise that there exists $S'\subseteq[N]$ with $|S'| \leq K$ such that $\sum_{i\in S'}g_i>\sum_{i\in \optS}g_i=r^\star.$ It follows that $\sum_{i\in S'}v_i(r_i-r^\star)>r^{\star}.$ Therefore,
$$R(S',\bm{v})=\frac{\sum_{i\in S'}v_ir_i}{1+\sum_{i\in S'}v_i}>r^{\star},$$ which contradicts to the definition of $S^\star.$

Now, note that $\sigma$ is a permutation of $[N]$ such that $g_{\sigma_i}$ is non-increasing according to $i$. We have that $\arg\max_{S\subseteq[N], |S| \leq K}\left(\sum_{i\in S} g_i\right)=\{\sigma_i:1\le i\le K, g_{\sigma_i}>0\}$, which finishes the proof.
\end{proof}

The next lemma shows that $F(\bm{v})$ is monotonically decreasing in $\bm{v}$.
\begin{lemma}\label{lem:monotonicity}
Consider two vectors $\bm{v}$ and $\hat{\bm{v}}$. If $\hat{v}_i\ge v_i\ge 0$ for all $i\in [N]$, we have $F(\hat{\bm{v}})\ge F(\bm{v}).$
\end{lemma}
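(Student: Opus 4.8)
The plan is to prove monotonicity of $F$ by showing that any assortment feasible for $\bm{v}$ can only have its expected reward weakly increased when the preference parameters are raised to $\hat{\bm{v}}$, combined with the fact that raising all parameters toward a larger set is still a legitimate competitor for $F(\hat{\bm v})$. First I would fix the optimal assortment $\optS = \arg\max_{S \subseteq [N], |S| \le K} R(S, \bm v)$ so that $F(\bm v) = R(\optS, \bm v)$, and note that $\optS$ remains a feasible assortment for the optimization defining $F(\hat{\bm v})$. Hence it suffices to show $R(\optS, \hat{\bm v}) \ge R(\optS, \bm v)$, after which $F(\hat{\bm v}) = \max_{S} R(S, \hat{\bm v}) \ge R(\optS, \hat{\bm v}) \ge R(\optS, \bm v) = F(\bm v)$.

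To establish $R(\optS, \hat{\bm v}) \ge R(\optS, \bm v)$ for a fixed set $S$, I would analyze how $R(S, \bm v) = \frac{\sum_{i \in S} r_i v_i}{1 + \sum_{i \in S} v_i}$ changes as a single coordinate $v_k$ (for $k \in S$) is increased, holding the others fixed. Writing $A = \sum_{i \in S \setminus \{k\}} r_i v_i$ and $B = 1 + \sum_{i \in S \setminus \{k\}} v_i$, we have $R(S, \bm v) = \frac{A + r_k v_k}{B + v_k}$, whose derivative in $v_k$ has the sign of $r_k B - A = r_k(1 + \sum_{i \ne k} v_i) - \sum_{i \ne k} r_i v_i$. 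This is not obviously nonnegative in general, so a naive coordinatewise-monotonicity argument fails — and that is exactly the subtlety. The clean fix is to instead use the characterization already established: by Lemma~\ref{lem:optimal-assortment} (applied with the definition $r^\star = F(\bm v)$ and $g_i = v_i(r_i - r^\star)$), the quantity $F(\bm v)$ is the unique fixed point of $\theta \mapsto \max_{S: |S| \le K} \frac{\sum_{i \in S} v_i r_i}{1 + \sum_{i \in S} v_i}$, equivalently the unique $\theta$ with $\max_{S: |S| \le K} \sum_{i \in S} v_i(r_i - \theta) = \theta$; and by Lemma~\ref{lem:characterization-of-G}'s two monotonicity clauses (parts (1) and (2)), we have the threshold behavior $\max_{S} \sum_{i \in S} v_i(r_i - \theta) > \theta$ for $\theta < F(\bm v)$ and $< \theta$ for $\theta > F(\bm v)$.

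Given this, I would argue as follows. Let $\theta = F(\bm v)$. Then there is a feasible $S$ with $\sum_{i \in S} v_i(r_i - \theta) = \theta \ge 0$, and we may discard from $S$ any item with $r_i - \theta < 0$ without decreasing the sum; so WLOG every $i \in S$ has $r_i \ge \theta$. Now replacing $v_i$ by $\hat v_i \ge v_i$ only scales each nonnegative term $v_i(r_i - \theta)$ up (and $r_i - \theta \ge 0$), giving $\sum_{i \in S} \hat v_i(r_i - \theta) \ge \sum_{i \in S} v_i(r_i - \theta) = \theta$, hence $\max_{S': |S'| \le K} \sum_{i \in S'} \hat v_i(r_i - \theta) \ge \theta$. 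By the threshold characterization of $F(\hat{\bm v})$ (the contrapositive of Lemma~\ref{lem:characterization-of-G}(2) applied to $\hat{\bm v}$), this forces $F(\hat{\bm v}) \ge \theta = F(\bm v)$, completing the proof. The main obstacle is precisely recognizing that direct coordinatewise monotonicity of $R(S, \cdot)$ is false and that one must route through the optimal-revenue fixed-point characterization; once that is in hand, the sign argument on nonnegative terms is routine. I should double-check the edge case where $F(\bm v) = 0$ or the optimal assortment is empty, but there the inequality $F(\hat{\bm v}) \ge 0$ is immediate.
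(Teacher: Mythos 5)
Your proof is correct and follows essentially the same route as the paper: fix $\theta = F(\bm{v})$, evaluate the linearized objective $\sum_{i \in S} v_i(r_i-\theta)$ on the optimal assortment, use nonnegativity of the terms $r_i - \theta$ to preserve the inequality $\sum_{i\in S}\hat v_i(r_i-\theta)\ge\theta$ under $\hat v_i \ge v_i$, and translate back into $F(\hat{\bm v})\ge\theta$. The only cosmetic differences are that you trim items with $r_i<\theta$ by hand (where the paper cites Lemma~\ref{lem:optimal-assortment} to get $r_i>r^\star$ on $S^\star$) and close via the threshold characterization of Lemma~\ref{lem:characterization-of-G} applied to $\hat{\bm v}$ rather than directly observing $F(\hat{\bm v})\ge R(S^\star,\hat{\bm v})\ge r^\star$.
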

\begin{proof}
Let $S^\star=\arg\max_{S\subseteq[N], |S| \leq K}R(S,\bm{v})$ and $r^\star=R(S^\star,\bm{v}).$ Then we have $\sum_{i\in S^\star}v_i(r_i-r^\star)=r^\star.$ According to Lemma~\ref{lem:optimal-assortment}, $r_i-r^{\star}>0$ for all $i\in S^\star$. Combining with the assumption that $\hat{v}_i\ge v_i,\forall i\in[N]$, we get
$\sum_{i\in S^\star}\hat{v}_i(r_i-r^{\star})\ge \sum_{i\in S^\star}v_i(r_i-r^\star)=r^\star.$ As a result, $$R(S^\star,\hat{v})=\frac{\sum_{i\in S^\star}r_iv_i}{1+\sum_{i\in S^\star}v_i}\ge r^\star.$$ Therefore, $F(\hat{\bm{v}})=\max_{S\subseteq[N], |S| \leq K}R(S,\hat{\bm{v}})\ge R(S^\star,\hat{\bm{v}})\ge r^\star=F(\bm{v})$.
\end{proof}

Let $m$ be the total number of times that Line~\ref{line:doubling} of Algorithm~\ref{alg:doubling-ucb} is executed, and let $\tau^{(1)} < \tau^{(2)} < \tau^{(3)} < \dots < \tau^{(m)}$ be the time steps that Line~\ref{line:doubling} of Algorithm~\ref{alg:doubling-ucb} is executed. In other words, only in the time steps in $\{\tau^{(p)}\}_{p=0}^{m}$, the UCB value vector $\hat{\bm{v}}$ is updated (where for convenience, we set $\tau^{(0)} = 0$). Let $\htv^{(p)}$ be the UCB value after the update at time $\tau^{(p)},$ and for convenience we let $\htv^{(0)}=(1,1,\cdots,1)$. Define $r^{(p)}=F(\htv^{(p)})$. Let $\rho^{(p)}_i$ be the rank of item $i$ according to value $g_i^{(p)}\defeq \hv^{(p)}_i(r_i-r^{(p)})$ with the tie-breaking rule defined in Lemma~\ref{lem:optimal-assortment}. We then have the following lemma.
\begin{lemma}\label{lem:inversion}
Let $\delta_{i,j}^{(p)}\defeq \ind[\rho^{(p)}_i>\rho^{(p)}_j]$. For any two items $i,j\in[N]$, the number of times that the relative order of $i,j$ changes is bounded by $c\log T$ for some universal constant $c$. That is,
$$\sum_{p=0}^{m-1}\ind\left[\delta_{i,j}^{(p)}\neq \delta_{i,j}^{(p+1)}\right]\lesssim \log T.$$
As a corollary, we have that $$\sum_{i,j\in[N]}\sum_{p=0}^{m-1}\ind\left[\delta_{i,j}^{(p)}\neq \delta_{i,j}^{(p+1)}\right]\lesssim N^2\log T.$$
\end{lemma}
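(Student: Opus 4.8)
}
The plan is to fix the ordered pair $(i,j)$ and control how the sign of $g_i^{(p)}-g_j^{(p)}$ can oscillate as $p$ ranges over the update times $\{\tau^{(p)}\}$. The first ingredient is monotonicity of the estimated optimal revenue: since Line~\ref{line:doubling} only ever decreases a coordinate of $\htv$, the vectors satisfy $\htv^{(0)}\ge \htv^{(1)}\ge\cdots\ge \htv^{(m)}\ge 0$ coordinatewise, so by Lemma~\ref{lem:monotonicity} the sequence $r^{(p)}=F(\htv^{(p)})$ is non-increasing in $p$. The second ingredient is that, because the UCB value of an item is refreshed only when its epoch counter $T_i$ reaches a power of two, each of $\hv_i$ and $\hv_j$ takes at most $O(\log T)$ distinct values over the entire run.

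Next I would break the range of $p$ into the $O(\log T)$ maximal blocks on which the pair $(\hv_i^{(p)},\hv_j^{(p)})$ is constant, say equal to $(\hat a,\hat b)$. On such a block, expanding $g_i^{(p)}-g_j^{(p)}=\hv_i^{(p)}(r_i-r^{(p)})-\hv_j^{(p)}(r_j-r^{(p)})$ shows it equals $(\hat b-\hat a)\,r^{(p)}+(\hat a r_i-\hat b r_j)$, i.e.\ an affine function of $r^{(p)}$ whose coefficients are fixed throughout the block. Composing a monotone (affine) function with the monotone sequence $r^{(p)}$ yields a monotone sequence $p\mapsto g_i^{(p)}-g_j^{(p)}$ within the block (and a constant one when $\hat a=\hat b$).

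It then remains to pass from this real sequence to the indicator $\delta_{i,j}^{(p)}=\ind[\rho_i^{(p)}>\rho_j^{(p)}]$. By the deterministic tie-breaking rule of Lemma~\ref{lem:optimal-assortment}, $\delta_{i,j}^{(p)}$ is a fixed function of $\sgn(g_i^{(p)}-g_j^{(p)})\in\{-1,0,+1\}$. A monotone real sequence realizes the value pattern ``one sign, then possibly a contiguous block of zeros, then the other sign'' (or a sub-pattern thereof), so $\delta_{i,j}^{(p)}$ flips at most twice inside each block; adding the $O(\log T)$ block boundaries gives $\sum_p\ind[\delta_{i,j}^{(p)}\neq\delta_{i,j}^{(p+1)}]\lesssim\log T$. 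Summing over the at most $N^2$ ordered pairs $(i,j)$ yields the corollary. The main obstacle I anticipate is the bookkeeping around the deterministic tie-breaking --- verifying that ``$\delta_{i,j}^{(p)}$ is a fixed function of $\sgn(g_i^{(p)}-g_j^{(p)})$'' is genuinely correct (in particular that equal $g$-values always resolve the same way) and counting the block boundaries precisely; the algebraic core, that an affine function of a monotone sequence is monotone, is routine.
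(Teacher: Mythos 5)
Your proposal is correct and follows essentially the same route as the paper: both arguments combine the coordinatewise monotonicity of $\htv$ (hence, via Lemma~\ref{lem:monotonicity}, the monotone decrease of $r^{(p)}$) with the fact that the deferred power-of-two updates give each of $\hv_i,\hv_j$ only $O(\log T)$ distinct values, and then observe that on each block where $(\hv_i,\hv_j)$ is fixed the comparison of $g_i^{(p)}$ and $g_j^{(p)}$ is a monotone (affine) function of $r^{(p)}$ and so the indicator $\delta_{i,j}^{(p)}$ flips only $O(1)$ times per block. Your extra care with the tie-breaking (allowing two flips per block rather than one) only changes the constant and does not alter the bound.
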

\begin{proof}
Let $\mathcal{D}_{i}^{(p)}$ be the event that Line~\ref{line:doubling} is executed in Algorithm~\ref{alg:doubling-ucb} for item $i$ at time $\tau^{(p)}.$ In the following we prove that $$\sum_{p=0}^{m-1}\ind\left[\delta_{i,j}^{(p)}\neq \delta_{i,j}^{(p+1)}\right]\le 2\sum_{p=0}^{m-1}\mathcal{D}_{i}^{(p)}+2\sum_{p=0}^{m-1}\mathcal{D}_{j}^{(p)}.$$

For a fixed pair of items $i,j$, let $\{\bar{p}_q\}_{q=1}^{Q}$ be the time steps that $\mathcal{D}_{i}^{(\bar{p}_q)}$ or $\mathcal{D}_{j}^{(\bar{p}_q)}$ occur. We only need to prove that $$\sum_{p=\bar{p}_q}^{\bar{p}_{q+1}-1}\ind\left[\delta_{i,j}^{(p)}\neq \delta_{i,j}^{(p+1)}\right]\le 1$$ for all $q\in[Q]$.

Note that at time interval $[\bar{p}_q, \bar{p}_{q+1}-1]$, $\bv_i$ and $\bv_j$ does not change. Therefore, $\delta_{i,j}^{(p)}=\ind[\bv_i(r_i-r^{(p)})< \bv_j(r_j-r^{(p)})].$ It is implied by Lemma~\ref{lem:monotonicity} that $r^{(p)}$ is monotonically decreasing. As a result, $\sum_{p=\bar{p}_q}^{\bar{p}_{q+1}-1}\ind\left[\delta_{i,j}^{(p)}\neq \delta_{i,j}^{(p+1)}\right]\le 1$.
\end{proof}

Now we are ready to prove Theorem~\ref{thm:ucb-doubling-item-switch}.
\begin{proof}[Proof of Theorem~\ref{thm:ucb-doubling-item-switch}]
Let $K^{(p)}=\min\left\{K,\left|\{i:g_i^{(p)}>0\}\right|\right\}.$ Note that since $r^{(p)}$ is non-increasing, $K^{(p)}$ is non-decreasing. Then we have, $S^{(\tau_p)}=\{i:\rho_i^{(p)}\le K^{(p)}\}.$ Let $\bar{S}^{(\tau_{p+1})}=\{i:\rho_i^{(p+1)}\le K^{(p)}\}.$ Then we have, $\bar{S}^{(\tau_{p+1})}\subseteq S^{(\tau_{p+1})}$ and $\left|S^{(\tau_{p+1})}\setminus \bar{S}^{(\tau_{p+1})}\right|=K^{(p+1)}-K^{(p)}.$ It follows that
\begin{equation}
\left|S^{\tau_{p}}\oplus S^{\tau_{p+1}}\right|\le \left|S^{\tau_{p}}\oplus \bar{S}^{\tau_{p+1}}\right|+K^{(p+1)}-K^{(p)}.
\label{equ:fix-size}
\end{equation}
%\blue{might be a separate lemma.}
Let $x^{(p)}=\left|S^{\tau_{p}}\oplus \bar{S}^{\tau_{p+1}}\right|.$ In the following we prove that 
\begin{equation}(x^{(p)}/2)^2\le \sum_{i,j\in[N]}\ind[\delta_{i,j}^{(p)}\neq \delta_{i,j}^{(p+1)}].\label{equ:switch-to-inversion}
\end{equation}
Note that $|S^{(\tau_{p})}|=|\bar{S}^{(\tau_{p+1})}|=K^{(p)}.$ Define $Z=S^{(\tau_{p})}\setminus \bar{S}^{(\tau_{p+1})}$ and $Z'=\bar{S}^{(\tau_{p+1})}\setminus S^{(\tau_{p})}$. Then we have that $x^{(p)}=2|Z|=2|Z'|.$ Note that for all $i\in Z$, we have that $\rho_i^{(p)}\le K^{(p)}$ and $\rho_i^{(p+1)}>K^{(p)}$. And for all $j\in Z'$, we have that $\rho_i^{(p)}> K^{(p)}$ and $\rho_i^{(p+1)}\le K^{(p)}.$ It follows that $\delta_{i,j}^{(p)}=0,\delta_{i,j}^{(p+1)}=1$ for all $i\in Z,j\in Z'.$ Hence, we have that 
\[
\sum_{i,j\in [N]}\ind[\delta_{i,j}^{(p)}\neq \delta_{i,j}^{(p+1)}]\ge |Z| \times |Z'|=(x^{(p)}/2)^2,
\]
which establishes \eqref{equ:switch-to-inversion}.

Combining \eqref{equ:switch-to-inversion} and Lemma~\ref{lem:inversion}, we have that
$\sum_{p=1}^{m-1}(x^{(p)}/2)^2\le N^2\log T.$ By the deferred update rule in Algorithm~\ref{alg:doubling-ucb}, we have that $m\le N(1+\log T).$ Applying Cauchy-Schwarz inequality, we get that $$\sum_{p=1}^{m-1}x^{(p)}\lesssim N^{1.5}\log T.$$

Therefore, by \eqref{equ:fix-size} we have that \begin{equation}\sum_{p=1}^{m-1}|S^{(\tau_p)}\oplus S^{(\tau_{p+1})}|\le \sum_{p=1}^{m-1}(x^{(p)}+K^{(p+1)}-K^{(p)})\lesssim N^{1.5}\log T.\label{equ:thm3-final}
\end{equation} Note that there is no assortment switch at time steps where $\htv$ is not updated. Therefore \eqref{equ:thm3-final} directly leads to Theorem~\ref{thm:ucb-doubling-item-switch}.
\end{proof}

\section{Omitted proofs for the ESUCB algorithm in Section~\ref{sec:IS}} \label{app:ESUCB}

\subsection{Proof of Lemma~\ref{lem:characterization-of-G}} \label{app:proof-lem-char-G}

\begin{proof}[Proof of Lemma~\ref{lem:characterization-of-G}]
We first prove the existence of $\optt$. Note that the uniqueness follows directly from statements 1) and 2) in the lemma statement.
\paragraph{Proof of the existence of $\optt$.} Let $S^\star=\arg\max_{S\subseteq[N] : |S| \leq K}R(S,\bm{v})$ and $\optt=R(S^\star,\bm{v})$. We only need to prove that $G(\optt)=\optt.$

On the one hand, since $G(\theta)=R(S_\theta,\bm{v})$, we have $G(\optt)\le \optt$ be the optimality of $S^\star$. On the other hand, we will prove that $G(\optt)\ge \optt$. For the sake of contradiction, suppose $G(\optt)<\optt.$ Then we have,
$$\frac{\sum_{i\in S_{\optt}}v_ir_i}{1+\sum_{i\in S_{\optt}}v_i}=G(\optt)<\optt.$$ By algebraic manipulation we get $\sum_{i\in S_{\optt}}v_i(r_i-\optt)<\optt.$ By the optimality of $S_{\optt}$ we have
$$\sum_{i\in \optS}v_i(r_i-\optt)\le \sum_{i\in S_{\optt}}v_i(r_i-\optt)<\optt.$$ As a result, we have $R(\optS,\bm{v})=\frac{\sum_{i\in \optS}v_ir_i}{1+\sum_{i\in \optS}v_i}<\optt,$ which leads to contradiction.

\paragraph{Proof of statement 1).} For the sake of contradiction, suppose $G(\theta)\le \theta.$ Then we have $$\frac{\sum_{i\in S_\theta}r_iv_i}{1+\sum_{i\in S_\theta}v_i}\le \theta,$$ which means that $\sum_{i\in S_\theta} v_i(r_i-\theta)\le \theta.$ Note that $v_i\ge 0$ for all $i\in [N]$. By the optimality of $S_\theta$, we get
$$\sum_{i\in S_{\optt}}v_i(r_i-\optt)\le \sum_{i\in S_{\optt}}v_i(r_i-\theta)\le \sum_{i\in S_{\theta}}v_i(r_i-\theta)\le \theta<\optt.$$ By algebraic manipulation, we get $R(S_{\optt},\bm{v})<\optt,$ which leads to contradiction.

\paragraph{Proof of statement 2).} By the optimality of $\optS$, we have $G(\theta)\le G(\optt)=\optt<\theta$.
\end{proof}

\subsection{Proof of Lemma~\ref{lem:item-switch-check}}\label{app:proof-lem-item-switch-check}

\begin{proof}[Proof of Lemma~\ref{lem:item-switch-check}]
Observe that in the \chk procedure, when $b$ equals $\false$, $S_{\ell}$ is  evaluated by Line~\ref{line:erm-with-fixed-theta} and with respect to $\theta_r$. When $b$ is set to $\true$, $S_{\ell}$ will always be evaluated by Line~\ref{line:erm-with-fixed-theta-0} with respect to $\theta_l$. This switch happens for at most once. Therefore, we only need to show that for fixed any $\theta \in \{\theta_l, \theta_r\}$, and $S_{\ell}' = \mathop{\arg\max}_{S \subseteq [N], |S| \leq K}\left(\sum_{i\in S}\hv_i(r_i-\theta)\right)$, it holds that (assuming that there are $L$ epochs)
\begin{align}\label{eq:lem-item-switch-check-goal}
%\textstyle
\sum_{\ell = 1}^{L - 1} |S_{\ell}' \oplus S_{\ell + 1}'| \lesssim N \log T .
\end{align}
Suppose that there are $n_{\ell}$ items whose UCB values are updated after the $\ell$-th epoch. We claim that $|S_{\ell} \oplus S_{\ell + 1}| \leq n_{\ell}$. This is simply because $S_{\ell}$ corresponds to the items $i \in [N]$ such that $\htv_i (r_i - \theta)$ is positive and among the $K$ largest ones (and thanks to the tie breaking rule). Therefore, any update to a single $\htv_i$ will incur at most one item switch to $S_{\ell}$, and $n_{\ell}$ updates will incur at most $n_{\ell}$ item switches. Now, \eqref{eq:lem-item-switch-check-goal} is established because
$\sum_{\ell = 1}^{L - 1} |S_{\ell}' \oplus S_{\ell + 1}'|  \leq \sum_{\ell = 1}^{L-1} n_\ell \lesssim N \log T$,
where the second inequality is due to the deferred update rule for the UCB values.
\end{proof}

\subsection{Proof of Lemma~\ref{lem:check-main}}\label{app:proof-lem-check-main}

We now prove Lemma~\ref{lem:check-main}. For preparation, we first show that the UCB value $\hat{v}_i$ is valid throughout the execution of Algorithm~\ref{alg:check}.
\begin{lemma}\label{lem:ucb-value-check}
For any invocation of $\chk(\theta_l,\theta_r,\tmax)$, and for any epoch $\ell=1,2,3,\dots,$ during the algorithm, the following two statements hold throughout the execution,
\begin{itemize}
	\item[1.] With probability at least $1-\frac{\delta}{4NT^2}$, $\hat{v}_{i}^{(\ell)}\ge v_i$ for any $i\in [N],$
	\item[2.] With probability at least $1-\frac{\delta}{4NT^2}$, for any $i\in [N],$
	$$\hat{v}_{i}^{(\ell)}-v_i\le \sqrt{\frac{196v_i\log(NT/\delta)}{T_i^{(\ell)}}}+\frac{292\log(NT/\delta)}{T_i^{(\ell)}}.$$
\end{itemize}
\end{lemma}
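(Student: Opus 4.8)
The plan is to prove Lemma~\ref{lem:ucb-value-check} as the \chk-specific analogue of Lemma~\ref{lem:ucb-value-cite} (equivalently Lemma~\ref{lem:ucb-value}): the structure of the argument is identical, and the only changes are the confidence level, which is now $\ln(NT/\delta+1)$ rather than $\ln(\sqrt{N}\ell+1)$, and the enlarged constants $196$ and $292$, chosen so that a single high-probability event can cover the whole execution of $\chk$. First I would fix an item $i$. By Observation~\ref{ob:unbiased-est}, across the epochs of $\chk$ in which $i$ is offered the per-epoch purchase counts $\Delta_i$ form an i.i.d.\ sequence of geometric random variables with mean $v_i$; hence after any number $m$ of such epochs $n_i$ is a sum of $m$ i.i.d.\ geometrics and $\bar v_i = n_i/m$ concentrates around $v_i$. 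I would then apply a Bernstein / multiplicative-Chernoff-type tail inequality for sums of geometric random variables (of the kind used in \cite{AAGZ16}; a one-sided form is Lemma~\ref{lem:geometric concentration}): for a fixed $m$, and with failure probability at most $\delta$ divided by a suitable polynomial in $N$, $T$ and $\log T$, one obtains simultaneously a lower-tail bound giving $\bar v_i + \sqrt{196\bar v_i\ln(NT/\delta+1)/m} + 292\ln(NT/\delta+1)/m \ge v_i$ and an upper bound showing that the same expression, minus $v_i$, is at most an absolute constant times $\sqrt{v_i\ln(NT/\delta+1)/m} + \ln(NT/\delta+1)/m$ --- this last step being exactly the $\bar v_i \to v_i$ replacement carried out in the proof of statement~2 of Lemma~\ref{lem:ucb-value-cite}.

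Next I would take a union bound, exploiting that $\hat v_i$ is recomputed in Algorithm~\ref{alg:check} only when $T_i$ reaches a power of $2$: the only relevant counts are $m \in \{1,2,4,\dots\} \cap [1,T]$, i.e.\ at most $\lceil \log T \rceil + 1$ per item, hence $O(N\log T)$ (item, count) pairs in total. With the per-pair failure probability chosen as above, the union bound produces a single good event of probability at least $1 - \delta/(4NT^2)$ on which all of the ``raw'' UCB estimates are valid, and the $N$ and $\log T$ factors introduced by the union bound are absorbed into the $\ln(NT/\delta+1)$ deviation radius. This peeling argument also handles the fact that the number of offering epochs for $i$ is chosen adaptively by the algorithm, since we simply quantify over every dyadic value it could take.

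Finally I would transfer the estimates from the raw UCB values at doubling times to the maintained quantity $\hat v_i^{(\ell)}$. Because the update in Algorithm~\ref{alg:check} is a running minimum, the sequence $\{\hat v_i^{(\ell)}\}_\ell$ is non-increasing and $\hat v_i^{(\ell)}$ equals the minimum of the initial value $1$ and of all raw UCBs computed before epoch $\ell$; statement~1 is then immediate, since $1 \ge v_i$ and, on the good event, every raw UCB is $\ge v_i$. For statement~2, let $T_i'$ be the count of offering epochs at the last doubling time preceding epoch $\ell$; the doubling rule gives $T_i' \ge T_i^{(\ell)}/2$, so on the good event $\hat v_i^{(\ell)}$ is at most the raw UCB evaluated with count $T_i'$, which is in turn at most $v_i + \sqrt{196 v_i \ln(NT/\delta)/T_i^{(\ell)}} + 292\ln(NT/\delta)/T_i^{(\ell)}$, provided $196$ and $292$ are large enough to absorb the $\bar v_i \to v_i$ conversion and the factor $2$ coming from $T_i' \ge T_i^{(\ell)}/2$. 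I expect the only real obstacle to be this constant bookkeeping, namely checking that one choice of $196$ and $292$ simultaneously accommodates the $\bar v_i \to v_i$ replacement, the factor-$2$ deferred-update loss, and the $O(N\log T)$-fold union bound; beyond that the argument carries no conceptual difficulty not already present in \cite{AAGZ16,AAGZ17}.
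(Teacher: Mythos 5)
Your proposal is correct and follows essentially the same route as the paper, which proves Lemma~\ref{lem:ucb-value-check} by noting it is the same argument as Lemma~\ref{lem:ucb-value}: geometric-variable concentration as in Lemma~\ref{lem:ucb-value-cite} with the enlarged confidence radius $\ln(NT/\delta)$, a union bound over the dyadic update times, and the running-minimum/deferred-update transfer using $T_i' \ge T_i^{(\ell)}/2$. The extra detail you give on the union bound and constant bookkeeping is exactly the content the paper leaves implicit.
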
 
\begin{proof}
The proof is essentially the same as Lemma~\ref{lem:ucb-value}. 
\end{proof}
Let $\mathcal{H}$ be the event that the events described by Lemma~\ref{lem:ucb-value-check} holds throughout the execution of Algorithm~\ref{alg:check} for any $\ell$ and $i\in [N]$. We have that $\Pr[\mathcal{H}]\ge 1-\frac{\delta}{4T}.$

Now we prove the following lemma.
\begin{lemma}\label{lem:regret-by-ucb}
For any fixed $\theta$ where $G(\theta)\ge \theta,$ define $\hat{S}_\theta=\arg\max_{S:S\subseteq [N], |S| \leq K}\left(\sum_{i\in S}\hat{v}_i(r_i-\theta)\right).$ Suppose $\hat{v}_i\ge v_i$ for all $i \in [N]$. We have that
$$\left(1+\sum_{i\in \hat{S}_\theta}v_i\right)\left(\theta-R(\hat{S}_\theta,\bm{v})\right)\le \sum_{i\in \hat{S}_\theta}(\hat{v}_i-v_i).$$
\end{lemma}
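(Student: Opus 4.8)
\textbf{Proof proposal for Lemma~\ref{lem:regret-by-ucb}.}

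The plan is to mimic the proof of Lemma~\ref{lem:bound-regret-by-ucb-value} (Lemma A.4 of \cite{AAGZ17}), but with the fixed parameter $\theta$ playing the role of the ``target revenue'' instead of the self-referential revenue $R(S_\ell,\htv)$. First I would unfold the definition of $R(\hat S_\theta,\bm{v})$ and clear the denominator: the claimed inequality
\[
\left(1+\sum_{i\in \hat{S}_\theta}v_i\right)\left(\theta-R(\hat{S}_\theta,\bm{v})\right)\le \sum_{i\in \hat{S}_\theta}(\hat{v}_i-v_i)
\]
is, after substituting $R(\hat S_\theta,\bm v)=\frac{\sum_{i\in\hat S_\theta}v_ir_i}{1+\sum_{i\in\hat S_\theta}v_i}$, equivalent to
\[
\theta\Big(1+\sum_{i\in \hat S_\theta}v_i\Big)-\sum_{i\in\hat S_\theta}v_ir_i\le \sum_{i\in\hat S_\theta}(\hat v_i-v_i),
\]
i.e.\ to $\theta - \sum_{i\in\hat S_\theta}v_i(r_i-\theta)\le \sum_{i\in\hat S_\theta}\hat v_i(r_i-\theta) - \sum_{i\in\hat S_\theta}v_i(r_i-\theta)$, which simplifies to the clean target
\[
\theta\le \sum_{i\in\hat S_\theta}\hat v_i(r_i-\theta).
\]

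So the whole lemma reduces to showing $\sum_{i\in\hat S_\theta}\hat v_i(r_i-\theta)\ge\theta$. Here is where I would use the two hypotheses. Because $G(\theta)\ge\theta$, Lemma~\ref{lem:characterization-of-G} gives $\theta\le\optt$ (statement (1) says $G(\theta)>\theta$ for $\theta<\optt$ and statement (2) gives $G(\theta)<\theta$ for $\theta>\optt$; together with $G(\optt)=\optt$ this forces $\theta\le\optt$). Consider the optimal set $S_\theta=\arg\max_{S:|S|\le K}\sum_{i\in S}v_i(r_i-\theta)$ from the definition of $G$. Unfolding $G(\theta)=R(S_\theta,\bm v)\ge\theta$ and clearing the denominator exactly as above yields $\sum_{i\in S_\theta}v_i(r_i-\theta)\ge\theta$. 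Now invoke monotonicity in two steps: first, since $\hat v_i\ge v_i$ and each term $v_i(r_i-\theta)$ is nonnegative for $i\in S_\theta$ (the set $S_\theta$ only includes items with $r_i-\theta>0$, by the same argument as in Lemma~\ref{lem:optimal-assortment}, or simply because including a negative-contribution item would contradict optimality), we get $\sum_{i\in S_\theta}\hat v_i(r_i-\theta)\ge\sum_{i\in S_\theta}v_i(r_i-\theta)\ge\theta$; second, since $\hat S_\theta$ maximizes $\sum_{i\in S}\hat v_i(r_i-\theta)$ over $|S|\le K$ and $|S_\theta|\le K$, we get $\sum_{i\in\hat S_\theta}\hat v_i(r_i-\theta)\ge\sum_{i\in S_\theta}\hat v_i(r_i-\theta)\ge\theta$, as desired.

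The main obstacle, such as it is, is being careful about signs: the chain of inequalities relies on the nonnegativity of $v_i(r_i-\theta)$ over $S_\theta$ (so that replacing $v_i$ by the larger $\hat v_i$ does not decrease the sum), and on the fact that $S_\theta$ is a feasible competitor for the $\hat v$-optimization (cardinality at most $K$). I would state the nonnegativity claim as a one-line sub-argument: if some $i\in S_\theta$ had $r_i-\theta\le 0$, removing it from $S_\theta$ would not decrease $\sum_{i\in S_\theta}v_i(r_i-\theta)$, so WLOG $S_\theta$ contains only items with $r_i>\theta$. Everything else is the algebraic clearing-of-denominators manipulation carried out above, which is routine. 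No concentration or probabilistic argument is needed here — this is a deterministic lemma, and the randomness only enters later when we condition on $\mathcal H$ to guarantee $\hat v_i\ge v_i$.
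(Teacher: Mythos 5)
Your proof is correct and, at its core, it is the same argument as the paper's: both reduce the lemma to the single inequality $\sum_{i\in \hat{S}_\theta}\hat{v}_i(r_i-\theta)\ge\theta$ (equivalently $R(\hat{S}_\theta,\htv)\ge\theta$), obtained by comparing $\hat{S}_\theta$ with $S_\theta$ and using $\hat{v}_i\ge v_i$ together with the nonnegativity of $r_i-\theta$ on a suitably pruned $S_\theta$; the paper merely organizes the remaining algebra by adding and subtracting $R(\hat{S}_\theta,\htv)$ instead of clearing denominators at the outset, and it uses $0\le r_i\le 1$ at the analogous place (its inequality \eqref{equ:regret-by-ucb-key-1}). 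One small inaccuracy to fix: your step labelled ``i.e.''\ is not an equivalence --- passing from the actual goal $\theta-\sum_{i\in\hat{S}_\theta}v_i(r_i-\theta)\le\sum_{i\in\hat{S}_\theta}(\hat{v}_i-v_i)$ to the stronger target $\theta\le\sum_{i\in\hat{S}_\theta}\hat{v}_i(r_i-\theta)$ silently uses $\sum_{i\in\hat{S}_\theta}(\hat{v}_i-v_i)(r_i-\theta)\le\sum_{i\in\hat{S}_\theta}(\hat{v}_i-v_i)$, which needs $r_i-\theta\le 1$; this is justified by the standing assumption $r_i\in[0,1]$ together with $\theta\ge 0$ (true in every invocation), while for $\theta<0$ the lemma is trivial since its left-hand side is negative, so one added line closes this. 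As a side remark, your explicit two-step comparison through $S_\theta$ (after discarding items with $r_i\le\theta$) is in fact a more careful justification of $R(\hat{S}_\theta,\htv)\ge G(\theta)\ge\theta$ than the paper's bare citation of Lemma~\ref{lem:monotonicity}, whose statement concerns the maxima $F(\cdot)$ rather than these particular assortments.
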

\begin{proof}
Recall that $S_\theta=\arg\max_{S:S\subseteq [N], |S| \leq K}\left(\sum_{i\in S}v_i(r_i-\theta)\right)$. We then have that
\begin{align}
&\left(1+\sum_{i\in \hat{S}_\theta}v_i\right)\left(\theta-R(\hat{S}_\theta,\bm{v})\right)\nonumber\\
=\;&\left(1+\sum_{i\in \hat{S}_\theta}v_i\right)\left(\theta-\frac{\sum_{i\in \hat{S}_\theta}r_i\hat{v}_i}{1+\sum_{i\in \hat{S}_\theta}\hat{v}_i}+\frac{\sum_{i\in \hat{S}_\theta}r_i\hat{v}_i}{1+\sum_{i\in \hat{S}_\theta}\hat{v}_i}-R(\hat{S}_\theta,\bm{v})\right)\nonumber \\
=\;&\left(1+\sum_{i\in \hat{S}_\theta}v_i\right)\left(\theta-\frac{\sum_{i\in \hat{S}_\theta}r_i\hat{v}_i}{1+\sum_{i\in \hat{S}_\theta}\hat{v}_i}\right)+\sum_{i\in \hat{S}_\theta}r_i\left(\left(1+\sum_{i\in \hat{S}_\theta}v_i\right)\frac{\hat{v}_i}{1+\sum_{i\in \hat{S}_\theta}\hat{v}_i}-v_i\right) \label{equ:regret-by-ucb-key}.
\end{align}
Note that by assumption we have $\hat{v}_i\ge v_i$ for all $i \in [N]$. Therefore it holds that $1+\sum_{i\in \hat{S}_\theta}\hat{v}_i\ge 1+\sum_{i\in \hat{S}_\theta}v_i.$ As a result, \begin{equation}\label{equ:regret-by-ucb-key-1}
\sum_{i\in \hat{S}_\theta}r_i\left(\left(1+\sum_{i\in \hat{S}_\theta}v_i\right)\frac{\hat{v}_i}{1+\sum_{i\in \hat{S}_\theta}\hat{v}_i}-v_i\right)\le \sum_{i\in \hat{S}_\theta}r_i\left(\hat{v}_i-v_i\right)\le \sum_{i\in \hat{S}_\theta}\left(\hat{v}_i-v_i\right).
\end{equation}
On the other hand, 
\begin{equation}\label{equ:regret-by-ucb-key-2.1}
\left(1+\sum_{i\in \hat{S}_\theta}v_i\right)\left(\theta-\frac{\sum_{i\in \hat{S}_\theta}r_i\hat{v}_i}{1+\sum_{i\in \hat{S}_\theta}\hat{v}_i}\right)=\frac{1+\sum_{i\in \hat{S}_\theta}v_i}{1+\sum_{i\in \hat{S}_\theta}\hat{v}_i}\left(\theta-\sum_{i\in \hat{S}_\theta}\hat{v}_i(r_i-\theta)\right).
\end{equation}
Note that by monotonicity (see Lemma~\ref{lem:monotonicity}) and our assumption (namely, $G(\theta)>\theta)$, $$\frac{\sum_{i\in \hat{S}_\theta}r_i\hat{v}_i}{1+\sum_{i\in \hat{S}_\theta}\hat{v}_i}=R(\hat{S}_\theta,\htv)\ge R(S_\theta,\bm{v})= G(\theta)\ge \theta.$$ By algebraic manipulation, we get that
\begin{equation}\label{equ:regret-by-ucb-key-2.2}
\sum_{i\in \hat{S}_\theta}\hat{v}_i(r_i-\theta) \ge \theta.
\end{equation}
Combining \eqref{equ:regret-by-ucb-key-2.1} and \eqref{equ:regret-by-ucb-key-2.2}, we get that
\begin{equation}\label{equ:regret-by-ucb-key-2}
\left(1+\sum_{i\in \hat{S}_\theta}v_i\right)\left(\theta-\frac{\sum_{i\in \hat{S}_\theta}r_i\hat{v}_i}{1+\sum_{i\in \hat{S}_\theta}\hat{v}_i}\right)\le 0.
\end{equation}
Plug in \eqref{equ:regret-by-ucb-key-1} and \eqref{equ:regret-by-ucb-key-2} into \eqref{equ:regret-by-ucb-key}, we have that
$$\left(1+\sum_{i\in \hat{S}_\theta}v_i\right)\left(\theta-R(\hat{S}_\theta,\bm{v})\right)\le \sum_{i\in \hat{S}_\theta}(\hat{v}_i-v_i).$$
\end{proof}

We will also need the following Azuma-Hoeffding inequality for martingales.
\begin{theorem}\label{thm:azuma}
Suppose $\{X_k : k = 0, 1, 2, 3, \dots, \}$ is a martingale and $|X_k - X_{k-1}| \leq M$ almost surely for all $k$. Then for all positive integers $n$ and all positive reals $\epsilon$, it holds that
\[
\Pr[X_n - X_0 \geq \epsilon] \leq \exp\left(-\frac{\epsilon^2}{2nM^2}\right).
\]
\end{theorem}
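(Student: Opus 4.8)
The plan is to use the standard exponential-moment (Chernoff) method, which reduces the tail estimate to controlling the moment generating function of the martingale increments. First I would reduce to the case $X_0 = 0$ by passing to $Y_k = X_k - X_0$, which is again a martingale with respect to the underlying filtration $\{\mathcal{F}_k\}$ and still has increments bounded by $M$. Writing $D_k = Y_k - Y_{k-1}$ for the martingale differences, we have $\E[D_k \mid \mathcal{F}_{k-1}] = 0$ and $|D_k| \le M$ almost surely, and the goal becomes a tail bound for $Y_n = \sum_{k=1}^n D_k$.

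The core estimate is a conditional version of Hoeffding's lemma: for every $\lambda > 0$,
\[
\E\!\left[e^{\lambda D_k} \mid \mathcal{F}_{k-1}\right] \le e^{\lambda^2 M^2 / 2}.
\]
I would prove this by convexity of $x \mapsto e^{\lambda x}$ on $[-M, M]$: writing $D_k$ as the convex combination $D_k = \tfrac{M - D_k}{2M}(-M) + \tfrac{M + D_k}{2M}(M)$ gives, pointwise, $e^{\lambda D_k} \le \tfrac{M - D_k}{2M} e^{-\lambda M} + \tfrac{M + D_k}{2M} e^{\lambda M}$; taking $\E[\cdot \mid \mathcal{F}_{k-1}]$ and invoking $\E[D_k \mid \mathcal{F}_{k-1}] = 0$ eliminates the linear term, leaving $\cosh(\lambda M)$, which is at most $e^{\lambda^2 M^2 / 2}$ by a term-by-term comparison of Taylor series.

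Next I would iterate this bound via the tower property. Since $Y_{n-1}$ is $\mathcal{F}_{n-1}$-measurable,
\[
\E\!\left[e^{\lambda Y_n}\right] = \E\!\left[e^{\lambda Y_{n-1}} \, \E\!\left[e^{\lambda D_n} \mid \mathcal{F}_{n-1}\right]\right] \le e^{\lambda^2 M^2 / 2} \, \E\!\left[e^{\lambda Y_{n-1}}\right],
\]
and unrolling $n$ times with $Y_0 = 0$ yields $\E[e^{\lambda Y_n}] \le e^{n \lambda^2 M^2 / 2}$. Applying Markov's inequality to the nonnegative variable $e^{\lambda Y_n}$ then gives
\[
\Pr[Y_n \ge \epsilon] = \Pr\!\left[e^{\lambda Y_n} \ge e^{\lambda \epsilon}\right] \le e^{-\lambda \epsilon}\, \E\!\left[e^{\lambda Y_n}\right] \le e^{-\lambda \epsilon + n \lambda^2 M^2 / 2},
\]
and finally I would optimize over the free parameter, taking $\lambda = \epsilon/(n M^2)$, which produces the claimed bound $\exp(-\epsilon^2 / (2 n M^2))$.

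The only genuinely delicate step is the conditional Hoeffding lemma: one must apply the convexity inequality \emph{pointwise} (as an inequality between random variables) before taking conditional expectations, so that the mean-zero property of $D_k$ can be used to discard the linear term. Everything else — the reduction to $X_0 = 0$, the telescoping of the martingale through the tower property, and the one-dimensional optimization over $\lambda$ — is routine.
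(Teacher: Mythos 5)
Your proof is correct: the pointwise convexity bound followed by conditioning, the $\cosh(\lambda M)\leq e^{\lambda^2 M^2/2}$ comparison, the tower-property telescoping, and the optimized Chernoff bound with $\lambda=\epsilon/(nM^2)$ together give exactly the stated inequality. The paper itself offers no proof of this statement --- it is quoted as the classical Azuma--Hoeffding inequality and used as a black box --- and your argument is the standard derivation of it, so there is nothing to reconcile with the paper's approach.
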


Now we are ready to prove Lemma~\ref{lem:check-main}.
\begin{proof}[Proof of Lemma~\ref{lem:check-main}] We prove that each of the statements (a)--(c) holds with probability at least $1-\delta/(4T)$, given that the UCB estimation of value $\bm{v}$ is valid (i.e., event $\mathcal{H}$). Then Lemma~\ref{lem:check-main} holds by a union bound.
\paragraph{Proof of statement (a).} Note that we only need to prove that if $G(\theta_r)\ge \theta_r$, then with probability at least $1-\delta/(4T),$ $\chk(\theta_l,\theta_r,\tmax)$ returns $\false$.

For simplicity, we use the superscript ${(\ell)}$ to denote the value of a variable in Algorithm~\ref{alg:check} at the beginning of epoch $\ell$. For example, $t^{(\ell)}$ denotes the time steps taken at the beginning of epoch $\ell.$ Now we prove that for large enough constants $c_2$ and $c_3$, and any fixed $L$ it holds that
\newcommand{\tnow}{t^{(L)}}
\begin{align}
\Pr\Big[\sum_{\tau=1}^{t^{(L)}}\left(R(S_{\theta_r}^{(\tau)}, \bm{v})-\theta_r\right)+&(c_2-8)\sqrt{Nt^{(L)} \log^3(NT/\delta)}\nonumber \\
&+c_3N\log^3(NT/\delta)\ge 0\wedge t^{(L)}\le \tmax\Big]\le 1-\delta/(8T).\label{equ:lemma8-a-key}
\end{align}
Let $\mathcal{J}_\ell$ be the filtration of random variables upto epoch $\ell$. Let $S_\theta^{(\ell)}=\arg\max_{S:S\subseteq[N], |S| \leq K}\left(\sum_{i\in S}r_i(\hat{v}_i^{(\ell)}-\theta)\right).$ Then $S_{\theta_r}^{(\ell)}$ is $\mathcal{J}_{\ell-1}$ measurable. %Therefore we have, the sequence $r^{(\tau)}_{\chk}-R(S_{\theta_r^{(\tau)}},\bm{v})$ is a martingale sequence. By Theorem~\ref{thm:azuma} and union bound (over all possible value of $L$), with probability at least $1-\delta/(8T)$, $$\sum_{\tau=1}^{t^{(L)}}r_{\chk}^{(\tau)}\le \sum_{\tau=1}^{t^{(L)}}R(S_{\theta_r^{(\tau)}},\bm{v})+8\sqrt{t^{(L)}\log(T/\delta)}.$$
%Recall that $S_{\theta_r}^{(\tau)}$ is the same for $\tau\in [t^{(\ell)}, t^{(\ell+1)})$. 
For simplicity we define $S_\ell=S_{\theta_r}^{(\ell)}$. % for $\tau\in [t^{(\ell)}, t^{(\ell+1)})$.
As a result, $$\sum_{\tau=1}^{\tnow}\left(\theta_r^{(\ell)}-R(S_{\ell},\bm{v})\right)=\sum_{\ell=1}^{L}\left(t^{(\ell+1)}-t^{(\ell)}\right)\left(\theta_r^{(\ell)}-R(S_{\ell},\bm{v})\right).$$
Note that $\left(t^{(\ell+1)}-t^{(\ell)}\right)$ follows geometric distribution given $\mathcal{J}_{\ell-1}$ with mean $\left(1+\sum_{i\in S_\ell} v_i\right)$. Therefore with probability at least $1-\delta/(16T^3)$ we have $t^{(\ell+1)}-t^{(\ell)}\le 24\log(T/\delta)\left(1+\sum_{i\in S_\ell} v_i\right).$ Consequently, with probability at least $1-\delta/(16T^2),$
$$\sum_{\ell=1}^{L}\left(t^{(\ell+1)}-t^{(\ell)}\right)\left(\theta_r^{(\ell)}-R(S_{\ell},\bm{v})\right)\le \sum_{\ell=1}^{L}24\log(T/\delta)\left(1+\sum_{i\in S_\ell} v_i\right)\left(\theta_r^{(\ell)}-R(S_{\ell},\bm{v})\right)_+,$$
where the $(x)_+$ notation denotes $\max\left\{x,0\right\}.$
Under event $\mathcal{H}$, it follows from Lemma~\ref{lem:regret-by-ucb} that
\begin{align*}
&\sum_{\ell=1}^{L}24\log(T/\delta)\left(1+\sum_{i\in S_\ell} v_i\right)\left(\theta_r^{(\ell)}-R(S_{\ell},\bm{v})\right)_+\\
\le\; &24\log(T/\delta)\sum_{\ell=1}^{L}\sum_{i\in S_\ell} (\hat{v}_i^{(\ell)}-v_i)\\
\le\; &24\log(T/\delta)\sum_{\ell=1}^{L}\sum_{i\in S_\ell}\left(\sqrt{\frac{196v_i\log(NT/\delta)}{T_i^{(\ell)}}}+\frac{292\log(NT/\delta)}{T_i^{(\ell)}}\right)\\
\le\; &24\log(T/\delta)\left(\sum_{i\in [N]}\sqrt{392T_i^{(L)}v_i\log(NT/\delta)}+876N\log^2(NT/\delta)\right).
%\le\; &24\log(T/\delta)\left(\sum_{i\in [N]}\sqrt{392\E\left[T_i^{(L)}v_i\right]\log(NT/\delta)}+876N\log^2(NT/\delta)\right).
\end{align*}

Recall that in Algorithm~\ref{alg:check} we define $$\bar{v}_i^{(L)}=\sum_{\ell=1}^{L}\Delta_i^{(\ell)}/T_i^{(L)}.$$ Since $\Delta_i^{(\ell)}$ follows geometric distribution, by concentration inequality (namely, Theorem 5 of \cite{AAGZ17})
$$\Pr\left[\bar{v}_i^{(L)}<\frac{1}{2}v_i\right]\le \exp\left(-T_i^{(L)}v_i/48\right).$$
Therefore we get with probability at least $1-\delta/(16T^2)$, for any $i\in[N]$,
$$T_i^{(L)}v_i\le \max\left\{2\bar{n}_i^{(L)},144\log(NT/\delta)\right\}.$$
Since every time step at most one item can be chosen, we get $\sum_{i\in[N]}\bar{n}_i^{(L)}\le \tnow.$ Consequently, 
\begin{align*}
&\sum_{i\in [N]}\sqrt{T_i^{(L)}v_i\log (NT/\delta)}\\
\le\; &\sum_{i\in [N]}\sqrt{2\bar{n}_i^{(L)}\log (NT/\delta)}+\sqrt{144}N\log(NT/\delta)\\
\le\; &\sqrt{2Nt^{(L)}\log (NT/\delta)}+\sqrt{144}N\log(NT/\delta).
\end{align*}

Putting everything together, we prove Eq.~\eqref{equ:lemma8-a-key} with $c_2=688$ and $c_3=21036$.
Note that 
\[
r_{\chk}^{(\tau)} - R(S_{\theta_r}^{(\tau)},\bm{v})
%\E\left[r_{\chk}^{(\tau)}-\theta_r | r_{\chk}^{(1)}, r_{\chk}^{(2)}, \dots, r_{\chk}^{(\tau-1)}\right]
\]
is a martingale sequence for $\tau = 0, 1, 2, 3, \dots$. By Theorem~\ref{thm:azuma} (using $M = 2$), with probability $1-\delta/(8T^2)$, we have that 
\[
\sum_{\tau=1}^{\tnow}\left(r_{\chk}^{(\tau)}-\theta_r\right) \geq \sum_{\tau=1}^{\tnow}\left(R(S_{\theta_r}^{(\tau)})-\theta_r\right) - 8 \sqrt{\tmax \log(T/\delta)} .
\]
Combining with \eqref{equ:lemma8-a-key}, we get with probability at least $1 - \delta/(4T)$,   it holds that
$$\sum_{\tau=1}^{\tnow}\left(r_{\chk}^{(\tau)}-\theta_r\right)+c_2\sqrt{N\tmax \log^3(NT/\delta)}+c_3N\log^3(NT/\delta)\ge 0,$$ in any of the epoch $L$ such that $t^{(L)}\le \tmax$.
Consequently, with probability at most $1-\delta/(4T)$, the event that $\hat{\rho}^{(\ell)}<\theta$ never occur, which means that $\chk(\theta_l,\theta_r,\tmax)$ returns $\false$.

\paragraph{Proof of Statement (b).} Note that when the Algorithm returns $\false$, the \textbf{if}-condition in Line~\ref{line:if} is always $\false$. By the optimality, we have $\optt=G(\optt)\ge R(S_{\theta_r}^{(\tau)},\bm{v})$ for any $1\le \tau\le \tmax$. Note that $(r_{\chk}^{(\tau)}-R(S_{\theta_r}^{(\tau)},\bm{v}))$ is a martingale sequence. Again, invoking Theorem~\ref{thm:azuma}, we have that with probability at least $1-\delta/(8T)$, it holds that
\begin{align*}
\optt&\ge \frac{1}{t^{(L)}}\sum_{\tau=1}^{t^{(L)}}R(S_{\theta_r}^{(\tau)},\bm{v})\\
\ge\; &\frac{1}{t^{(L)}}\sum_{\tau=1}^{t^{(L)}}r_{\chk}^{(\tau)}-8 \sqrt{\log(T/\delta)/t^{(L)}}\tag{Martingale concentration}\\
\ge\; &\theta_r-\frac{1}{t^{(L)}}\left(c_2\sqrt{Nt^{(L)} \log^3(NT/\delta)}+c_3N\log^3(NT/\delta)+8\sqrt{t^{(L)}\log(T/\delta)}\right)\tag{By the \textbf{if} statement in Line~\ref{line:if}}.
\end{align*}
Note that the time steps taken by the last epoch is bounded by $24(N+1)\log(T/\delta)$ with probability $1-\delta/(8T)$. As a result, $(c_2+8)/t^{(L)}\le 2/\tmax$ and $c_3/t^{(L)}\le 2/\tmax.$ Consequently, 
\begin{align*}
&\theta_r-\frac{1}{t^{(L)}}\left(c_2\sqrt{Nt^{(L)} \log^3(NT/\delta)}+c_3N\log^3(NT/\delta)+8\sqrt{t^{(L)}\log(T/\delta)}\right)\\
\ge\; &\theta_r-\frac{2}{\tmax}\left(c_2\sqrt{N\tmax \log^3(NT/\delta)}+c_3N\log^3(NT/\delta)\right),
\end{align*}
which proves statement (b).
%where the last inequality is due to the fact that the time steps taken by the last epoch is bounded by $24\log(T/\delta)N$, which means that $t^{(L)}\ge 2\tmax/3.$ 

\paragraph{Proof of statement (c).} 
% $\tmax\theta_l-\E\left[\sum_{t=1}^{\tmax}r_{\chk}^{(t)}\right]\lesssim \sqrt{N\tmax\log (NT/\delta)}+N\log^2(NT/\delta).$
Let $\bar{t}$ be the time step when the \textbf{if} condition is first violated (and let $\bar{t}=\tmax$ if the condition holds throughout an execution). We first show that 
\begin{equation}\label{equ:lemma8-c-key1}
\E\left[\sum_{\tau=1}^{\bar{t}}\left(\theta_l-R(S_{\theta_r}^{(\tau)},\bm{v})\right)\right]\lesssim \sqrt{N\tmax\log^3 (NT/\delta)}+N\log^3(NT/\delta)
\end{equation}
holds with high probability. Note that the \textbf{if} condition is $\false$ for all $t\le \bar{t}$. Therefore, $ \bar{t}\theta_r \leq \sum_{\tau=1}^{\bar{t}}r_{\chk}^{(\tau)}+c_2\sqrt{N\tmax\log^3 (NT/\delta)}+c_3N\log^3(NT/\delta) $. Applying Theorem~\ref{thm:azuma}, we have that with probability at least $1 - \delta/(8T)$, it holds that $ \sum_{\tau=1}^{\bar{t}}r_{\chk}^{(\tau)}-\sum_{\tau=1}^{\bar{t}}R(S_{\theta_r}^{(\tau)},\bm{v})\lesssim \sqrt{\tmax\log (T/\delta)}$. Note that $\theta_l\le \theta_r$, we get \eqref{equ:lemma8-c-key1} with probability at least $1 - \delta/(8T)$.

Then we show that given $\bar{t},$
\begin{equation}\label{equ:lemma8-c-key2}
(\tmax-\bar{t})\theta_l-\E\left[\sum_{t=\bar{t}+1}^{\tmax}r_{\chk}^{(t)}\right]\lesssim \sqrt{N\tmax\log^3 (NT/\delta)}+N\log^3(NT/\delta),
\end{equation}
holds with high probability. Note that by assumption we have $\theta_l\le \theta^\star$. It follows from Lemma~\ref{lem:characterization-of-G} that $G(\theta_l)\ge \theta_l.$ By the same argument in the proof of statement (a), we have with probability $1-\delta/(8T)$, it holds that
$$\E\left[\sum_{\tau=\bar{t}+1}^{\tmax}\left(R(S_{\theta_l}^{(\tau)},\bm{v})-\theta_l\right)\right]+c_2 \sqrt{N\tmax \log^3(NT/\delta)}+c_3N\log^3(NT/\delta)\ge 0,$$ which implies \eqref{equ:lemma8-c-key2}.

Combining \eqref{equ:lemma8-c-key1} and \eqref{equ:lemma8-c-key2} with a union bound, we prove statement (c).
\end{proof}

\newcommand{\KL}{{\rm KL}}
\newcommand{\bT}{T_1}
\newcommand{\hT}{T_0^2}
\newcommand{\epsfunc}[1]{g(#1)}
\newcommand{\eventLB}{{\cal F}}
\newcommand{\hevent}{\hat{\eventLB}}
\newcommand{\instance}{{\cal I}}
\newcommand{\eventKnown}{{\cal G}}

\section{Lower bound proofs}

\subsection{Proof of \Cref{thm:lb-AS-logT}}
\label{app:lb-AS-logT}

%In this section, we show that without knowing the time horizon $T$ in advance, 
%no algorithm with regret $\tilde{O}(\sqrt{NT})$
%can achieve switch cost better than $O(\frac{N\log_2 T}{\log_2\log_2 NT})$.
To prove \Cref{thm:lb-AS-logT}, we first introduce the following more general theorem relating the expected regret with the number of assortment switches. 

\begin{theorem}\label{thm:lower}
For any $N\geq2$, $T_0 \geq 4$, fix a function $\epsfunc{T}$ such that $\epsfunc{T} \in \left[\frac{3}{\log_2 T}, \frac{1}{2}\right]$ and is non-increasing for $T \geq T_0$.
For any anytime algorithm, 
there exists an $N$-item assortment instance $\instance$ with time horizon $T\in [T_0, T^2_0]$ such that 
either the expected regret of the algorithm for instant $\instance$ is 
$$\expect{\regret_T} \geq  \frac{1}{7525} \cdot \sqrt{N}T^{\frac{1}{2} + \frac{\epsfunc{T}}{3}}$$
or the expected assortment switching cost before time $T$ is
$$\expect{\AS_T} = \expect{\sum_{t=1}^{T-1} \ind \left[S_t \neq S_{t+1} \right]} \geq \frac{N}{8\log_2 (1+\epsfunc{T})}.$$
\end{theorem}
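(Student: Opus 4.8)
The plan is to prove \Cref{thm:lower} by an adversarial multi-scale construction combined with a change-of-measure argument; \Cref{thm:lb-AS-logT} then follows by instantiating $\epsfunc{\cdot}$. Throughout I restrict to capacity $K=1$ and rewards $r_i=1$, so that an assortment is a singleton $\{i\}$ (or $\emptyset$), $R(\{i\},\bm v)=v_i/(1+v_i)$ is increasing in $v_i$, and by \Cref{lem:optimal-assortment} the optimal assortment is $\{\arg\max_i v_i\}$; this only strengthens the bound. Fix a baseline value $v^{(0)}$ equal to a suitable constant in $(0,1)$, and a ladder of horizons $T_0=\hat\tau_0<\hat\tau_1<\dots<\hat\tau_L\le\hT$ defined so that the regret constraint is just barely tight --- concretely $\hat\tau_{j}$ is, up to a small constant factor, $\hat\tau_{j-1}^{1/(1-2\epsfunc{\hat\tau_{j-1}}/3)}$; using that $\epsfunc{\cdot}$ is non-increasing and lies in $[3/\log_2\hat\tau,\tfrac12]$ one checks that $L\gtrsim 1/\log_2(1+\epsfunc{\hT})$. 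The hard family consists of a base instance $\instance_0$ with all $v_i=v^{(0)}$ together with, for each rung $j\in[L]$ and each item $X\in[N]$, a single-bump instance $\instance_{j,X}$ in which $v_X=v^{(0)}+\Delta_j$ and $v_i=v^{(0)}$ otherwise, where $\Delta_j\defeq\sqrt N\,\hat\tau_j^{\epsfunc{\hat\tau_j}/3-1/2}$.

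The scaling of $\Delta_j$ balances two effects. On the one hand, on $\instance_{j,X}$ every item other than $X$ has sub-optimality gap $\Theta(\Delta_j)$, so offering a wrong singleton for a constant fraction of $[1,\hat\tau_j]$ costs $\Omega(\hat\tau_j\Delta_j)=\Omega(\sqrt N\,\hat\tau_j^{1/2+\epsfunc{\hat\tau_j}/3})$, exactly the regret threshold of the theorem. On the other hand, since a single offering of $\{i\}$ is a geometric sample of $v_i$ with $\Theta(1)$ mean and variance (Observation~\ref{ob:unbiased-est}), the $\KL$ divergence between $\instance_0$ and $\instance_{j,X}$ accumulated over the first $\hat\tau_j$ steps is of order $(\text{number of times }\{X\}\text{ is offered})\cdot\Delta_j^2$, and the ladder is spaced so that $\Delta_j^2$ equals a small constant times $N/\hat\tau_{j-1}$; hence if $\{X\}$ has been offered only $O(\hat\tau_{j-1}/N)$ times by step $\hat\tau_j$, then the algorithm's laws on $\instance_0$ and on $\instance_{j,X}$ up to step $\hat\tau_j$ are close in total variation.

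Now fix any anytime algorithm (deterministic without loss of generality, by conditioning on its randomness), run it on $\instance_0$, and let $n_X^{(j)}$ be the random number of times $\{X\}$ is offered by step $\hat\tau_j$. Consider window $I_j=(\hat\tau_{j-1},\hat\tau_j]$. If fewer than $N/2$ distinct singletons are offered in $I_j$, then at least $N/2$ items are never offered in $I_j$, and since $\sum_X n_X^{(j-1)}\le\hat\tau_{j-1}$ at least $N/4$ of them satisfy $n_X^{(j)}=n_X^{(j-1)}\le 4\hat\tau_{j-1}/N$; for such an $X$ the change of measure above forces that on $\instance_{j,X}$ the algorithm offers the optimal singleton $\{X\}$ only $O(\hat\tau_{j-1}/N)=o(\hat\tau_j)$ times in expectation, hence pays expected regret $\Omega(\hat\tau_j\Delta_j)$ at horizon $\hat\tau_j$, which exceeds $\tfrac{1}{7525}\sqrt N\,\hat\tau_j^{1/2+\epsfunc{\hat\tau_j}/3}$ for the right constants, and we output $\instance=\instance_{j,X}$, $T=\hat\tau_j$. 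Thus for each $j$, either we are already done, or on $\instance_0$ the algorithm makes $\ge N/2-1$ assortment switches inside $I_j$. If the latter holds for every $j$, summing over the $L$ windows gives $\AS_{\hat\tau_L}\gtrsim LN\gtrsim N/\log_2(1+\epsfunc{\hat\tau_L})$ on $\instance_0$, and we output $\instance=\instance_0$, $T=\hat\tau_L\in[T_0,\hT]$; monotonicity of $\epsfunc{\cdot}$ turns $1/\log_2(1+\epsfunc{\hat\tau_L})$ into $1/\log_2(1+\epsfunc{T})$, and the explicit constants $\tfrac1{7525}$, $8$ are fixed by pinning down the constants hidden in the ladder and in $\Delta_j$ (a randomized algorithm is reduced to the deterministic case by also averaging over its internal randomness). \Cref{thm:lb-AS-logT} follows by choosing $\epsfunc{T}$ with $T^{\epsfunc{T}/3}$ of order $(\ln(NT))^{C}$, i.e.\ $\epsfunc{T}$ of order $C\ln\ln(NT)/\ln T$, which lies in the admissible window once $T$ exceeds a constant depending on $C$, together with $\log_2(1+\epsfunc{T})=\Theta(\epsfunc{T})$.

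The step I expect to be the main obstacle is calibrating the ladder and $\Delta_j$ simultaneously: one needs $\Delta_j^2$ small enough relative to $N/\hat\tau_{j-1}$ that the $\KL$ bound makes the total variation genuinely small even for items offered $\Theta(\hat\tau_{j-1}/N)$ times, yet $\hat\tau_j\Delta_j$ large enough to reach $\sqrt N\,\hat\tau_j^{1/2+\epsfunc{\hat\tau_j}/3}$, \emph{and} the resulting ladder must still fit at least $1/\log_2(1+\epsfunc{T})$ rungs inside $[T_0,\hT]$; it is the tension among these three requirements that dictates the admissible range $[3/\log_2\hat\tau,\tfrac12]$ for $\epsfunc{\cdot}$. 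A secondary technical point is carrying all the expectations (over the geometric observations and over the algorithm's randomness) and the possible over-run of the last epoch through the argument cleanly, which is routine but necessary to extract the stated constants.
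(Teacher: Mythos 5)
Your construction is essentially the paper's: capacity $K=1$, rewards $r_i=1$, a uniform base instance plus single‑item bumps of magnitude $\Theta(\sqrt{N/T_1})$ where $T_1$ is the left endpoint of a window, a geometric‑in‑exponent ladder of windows inside $[T_0,T_0^2]$, and a KL/Pinsker change of measure showing that an item offered only $O(T_1/N)$ times before a low‑switching window is still ignored on the bumped instance, so the regret at the window's right endpoint is $\Omega\bigl(\sqrt{N}\,T^{1/2+g(T)/3}\bigr)$. The organizational difference (your per‑window dichotomy versus the paper's ``negate the switching bound and pigeonhole one lazy window'') is immaterial, and the per‑path selection of a good item is repaired by exactly the averaging the paper does ($\sum_i \Pr[\text{item } i \text{ touched in the window}] \le \mathbb{E}[\Psi^{(\mathrm{asst})}]+1$ together with a Markov bound on prior offer counts), which you flag as routine.

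The one place your calibration is genuinely delicate is the switching branch. You space the rungs adaptively, $\hat\tau_j \approx \hat\tau_{j-1}^{1/(1-2g/3)}$ with $g$ evaluated locally, and then claim $L \gtrsim 1/\log_2(1+g(T_0^2))$ ``by monotonicity''; monotonicity only upper‑bounds the rung factors by $1/(1-2g(T_0)/3)$, which yields $L \gtrsim 1/\log_2(1+g(T_0))$ — potentially much smaller. Worse, even the bound you actually need at the output horizon $T=\hat\tau_L$, namely $L \gtrsim 1/\log_2(1+g(\hat\tau_L))$, can fail for admissible step‑like $g$: take $g$ equal to a constant close to $3/8$ up to a point just below $T_0^{2-1/\log_2 T_0}$ and equal to $3/\log_2 T$ afterwards; then two long rungs land the ladder at a top where $g$ is $\Theta(1/\log_2 T_0)$, the next rung no longer fits, so $L=O(1)$ while the required switch count is $\Theta(N\log_2 T_0)$. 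The paper avoids this by using the \emph{uniform} window factor $1+g(T_0^2)$ (the smallest admissible over the range), so that exactly $1/\log_2(1+g(T_0^2))$ disjoint windows fit in $[T_0,T_0^2]$ and the switching hypothesis can be applied directly at horizon $T_0^2$; if you adopt that spacing, your dichotomy goes through, with the residual $g(T)$‑versus‑$g(T_0^2)$ bookkeeping in the regret exponent handled as in the paper's final step (this is benign for slowly varying $g$ such as the $g(T)=\Theta(C\ln\ln(NT)/\ln T)$ used to derive the $\Omega(N\log T/\log\log(NT))$ bound, which is the intended application).
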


Before proving \Cref{thm:lower}, we first prove \Cref{thm:lb-AS-logT} using \Cref{thm:lower}. 
\begin{proof}[Proof of \Cref{thm:lb-AS-logT}]
We set $\epsfunc{T} = \frac{3 C\ln\ln (NT)}{\ln T}$. 
It is easy to verify that the derivative of $\frac{\ln\ln (NT)}{\ln T}$
is 
\begin{align*}
\frac{\ln T - \ln (NT) \cdot \ln\ln (NT) }{T \ln^2 T \ln(NT)} < 0
\end{align*}
for all $N \geq 2$ and $T \geq 2$.  Therefore $\epsfunc{T}$ is non-increasing for all $N\geq 2$ and $T \geq 2$. Also note that for  $T \geq N$ and $T$ greater than a sufficiently large constant that only depends on $C$, we have that $\epsfunc{T} \in \left[\frac{3}{\log_2 T}, \frac{1}{2}\right]$.

Now invoke \Cref{thm:lower}, and we have that there exists an $N$-item assortment instance $\instance$ with time horizon $T\in [T_0, T^2_0]$ such that either $\expect{\regret_T} \geq\frac{1}{7525} \cdot \sqrt{NT} (\ln (NT))^C$ or 
\[
\expect{\AS_T} \geq \Omega\left(\frac{N}{\epsfunc{T}}\right) = \Omega\left(\frac{N \log T}{C \log \log (NT)} \right) ,
\]
proving  \Cref{thm:lb-AS-logT}.
\end{proof}

\begin{proof}[Proof of \Cref{thm:lower}]
Suppose that the expected number of assortment switches by the given policy for any input instance is at most 
$\frac{N}{8\log_2 (1+\epsfunc{T})}$ for any time horizon $T$, 
we will prove the theorem by showing that there exists an instance with time horizon $T\in [T_0, T^2_0]$ such that the expected regret is at least $\frac{1}{7525} \cdot T^{\frac{1}{2} + \frac{\epsfunc{T}}{3}}$. 

Consider the assortment instance $\instance = (\bm{v}, \bm{r})$, 
where $v_i = \frac{1}{2}$ and $r_i = 1$ for any $i\in [N]$.
We will let the capacity constraint  be $K=1$ for all assortment instances considered in this proof. 
By the assumption of the algorithm, the expected number of assortment switches given input instance $\instance$ is at most $\frac{N}{8\log_2 (1+\epsfunc{\hT})}$. 
Thus, there exists $\bT$ such that $\bT^{1+\epsfunc{\hT}} \in [T_0, \hT]$ and the expected number of assortment switches in time interval $[\bT, \bT^{1+\epsfunc{\hT}}]$ is at most $\frac{N}{8}$. 
Otherwise, there are $\frac{1}{\log_2 (1+\epsfunc{\hT})}$
such disjoint intervals in range $[T_0, \hT]$
and the expected number of assortment switches is at least $\frac{N}{8\log_2 (1+\epsfunc{\hT})}$, 
violating the assumption.  
Let 
\begin{align*}
\eventLB_1^{(i)} = \{ \text{item } i \text{ is not offered in time interval } [\bT, \bT^{1+\epsfunc{\hT}}] \text{ given instance } \instance\}.
\end{align*} 
Note that 
$\sum_i \Pr_{\instance}[\neg\eventLB_1^{(i)}] \leq \frac{N}{8}+1 \leq \frac{5N}{8}$ for any $N\geq 2$, 
because the expected number of items get offered in time interval $[\bT, \bT^{1+\epsfunc{\hT}}]$ is at most the expected number of assortment switches plus~1. 
Therefore, there must exist a set of items $I \subseteq [N]$
such that $|I| \geq \frac{N}{4}$ 
and for any item $i\in I$, $\Pr_{\instance}[\neg\eventLB_1^{(i)}] \leq \frac{5}{6}$.
% Hence for any item $i\in I$, 
% \begin{align}\label{eq:prob i}
% \Pr[\eventLB_1^{(i)}] \geq \frac{1}{6}.
% \end{align} 
Let 
\begin{align*}
\eventLB_2^{(i)} = \{\text{the number of times that item } i \text{ is offered in } [1, \bT] \text{ given instance } \instance \text{ is at most } \frac{48\bT}{N}\}.
\end{align*} 
Note that $T_1$ is at least the expected number of times an item $i\in I$ is chosen between $[1, T_1]$, 
which implies $\bT \geq \frac{48\bT}{N}\cdot \sum_{i\in I} \Pr_{\instance}[\neg\eventLB_2^{(i)}]$. 
Thus there exists $k\in I$ such that $\Pr_{\instance}[\neg\eventLB_2^{(k)}] \leq \frac{1}{12}$ since $|I| \geq \frac{N}{4}$. 
Let $\eventLB^{(k)} = \eventLB_1^{(k)} \cap \eventLB_2^{(k)}$, 
we have 
\begin{align}\label{eq:prob of k}
\Pr_{\instance}[\eventLB^{(k)}] \geq 1 - \Pr_{\instance}[\neg\eventLB_1^{(k)}] - \Pr_{\instance}[\neg\eventLB_2^{(k)}] \geq \frac{1}{12}.
\end{align}

Now we consider the assortment instance $\instance^{(k)} = (\bm{v}^{(k)}, \bm{r})$ 
where $v^{(k)}_{k} = \frac{1}{2} + \frac{1}{16}\sqrt{\frac{N}{24\bT}}$ and $v^{(k)}_j = \frac{1}{2}$ for $j\neq k$. 
We will be interested in the regret of the algorithm at time horizon $ \bT^{1+\epsfunc{\hT}}$. First, we show that with high probability, 
no algorithm can distinguish instance $\instance$ and $\instance^{(k)}$ at time $\bT$
with high probability. 
Formally, we have the following lemma, the proof of which is provided at the end of this section.
\begin{lemma}\label{lem:prob difference}
We have that 
\[
\left| \Pr_{\instance}[\eventLB^{(k)}]  - \Pr_{\instance^{(k)}}[\eventLB^{(k)}] \right|
\leq \frac{1}{24},
\]
where $\Pr_{\instance}[\cdot]$ uses the probability distribution when running the policy using input instance $\instance$.
\end{lemma}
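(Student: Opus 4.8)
The plan is to bound the total variation distance between the distributions of the algorithm's observations under $\instance$ and $\instance^{(k)}$, restricted to the first $\bT$ time steps, via Pinsker's inequality and a KL-divergence computation. Since $\eventLB^{(k)} = \eventLB_1^{(k)} \cap \eventLB_2^{(k)}$ is an event that is entirely determined by the algorithm's behavior up to time $\bT$ (more precisely, $\eventLB_2^{(k)}$ depends only on the first $\bT$ steps, and whether item $k$ is offered during $[\bT, \bT^{1+\epsfunc{\hT}}]$ can also be decided at time $\bT$ under the conditioning that matters), we can couple the two runs up through time $\bT$ and then argue that the probability of $\eventLB^{(k)}$ only differs by the TV distance of the observation sequences up to time $\bT$. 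Actually, to be careful: $\eventLB_1^{(k)}$ extends past $\bT$, so I would instead argue as follows --- conditioned on the entire history up to and including time $\bT$ (the offered assortments and observed choices), the two instances $\instance$ and $\instance^{(k)}$ induce the \emph{same} conditional distribution over the event $\eventLB_1^{(k)}$ provided item $k$ is never offered after $\bT$; but this is not quite automatic. The cleaner route: redefine things so that we only need the TV distance of the length-$\bT$ observation histories. Indeed $\Pr_\instance[\eventLB^{(k)}]$ and $\Pr_{\instance^{(k)}}[\eventLB^{(k)}]$ are both expectations of the \emph{same} measurable function of the full trajectory, but on the event $\eventLB_1^{(k)}$ item $k$ is never offered, so after time $\bT$ the observations on both instances have identical conditional law given the time-$\bT$ history (item $k$'s parameter is the only difference, and it is irrelevant when $k$ is not offered). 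Hence $\left|\Pr_\instance[\eventLB^{(k)}] - \Pr_{\instance^{(k)}}[\eventLB^{(k)}]\right| \le \mathrm{TV}(P_\instance^{\le \bT}, P_{\instance^{(k)}}^{\le \bT})$, where $P^{\le \bT}$ denotes the law of the first-$\bT$-step trajectory.

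Next I would apply Pinsker's inequality, $\mathrm{TV} \le \sqrt{\tfrac{1}{2}\KL(P_\instance^{\le \bT} \,\|\, P_{\instance^{(k)}}^{\le \bT})}$, and then use the chain rule for KL divergence to bound the latter by the expected number of times item $k$ is offered in $[1,\bT]$ times the per-step KL divergence between the choice distributions. When item $k$ is offered in an assortment $S$ (with $K=1$, so $S = \{k\}$), the per-step choice distribution is Bernoulli-type: purchase $k$ with probability $v_k/(1+v_k)$ or no-purchase otherwise. The KL divergence between the two Bernoulli distributions with parameters $\tfrac{v_k}{1+v_k}$ for $v_k = \tfrac12$ versus $v_k = \tfrac12 + \tfrac{1}{16}\sqrt{N/(24\bT)}$ is $O\big((\Delta v)^2\big) = O\big(\tfrac{N}{\bT}\big)$ by a standard second-order Taylor estimate (the gap $\Delta v = \tfrac{1}{16}\sqrt{N/(24\bT)}$ is small since $\bT \ge T_0 \ge N$). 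Then $\KL(P_\instance^{\le\bT}\,\|\,P_{\instance^{(k)}}^{\le\bT}) \le \expect{\#\{\text{offers of }k\text{ in }[1,\bT]\}} \cdot O(N/\bT)$. Now I would split on the event $\eventLB_2^{(k)}$: we cannot directly say the expected number of offers is small (the bound $\Pr_\instance[\neg\eventLB_2^{(k)}] \le \tfrac{1}{12}$ only controls the probability, not the expectation), so instead I would use a data-processing / truncation argument --- replace the algorithm by one that stops offering $k$ once it has been offered $\tfrac{48\bT}{N}$ times, which changes $\Pr[\eventLB^{(k)}]$ under each instance by at most $\Pr[\neg\eventLB_2^{(k)}] \le \tfrac{1}{12}$ (on $\eventLB_2^{(k)}$ the truncation is invisible), and for the truncated algorithm the number of offers of $k$ is at most $\tfrac{48\bT}{N}$ deterministically. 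For the truncated algorithm, $\KL \le \tfrac{48\bT}{N} \cdot O(N/\bT) = O(1)$, tunable to any small constant by the choice of the $\tfrac{1}{16}$ constant factor in $\Delta v$; a concrete computation gives $\KL \le$ a constant small enough that Pinsker yields TV $\le \tfrac{1}{72}$ (say). Combining the $\le \tfrac{1}{72}$ from Pinsker with the two $\le \tfrac{1}{12}$ approximation errors from the truncations on the two instances... that overshoots $\tfrac{1}{24}$, so I would instead tighten: choose the constant so Pinsker gives TV at most, e.g., $\tfrac{1}{24} - \tfrac{2}{12}$ is negative --- this means I must make the truncation argument tighter. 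The correct handling: do the truncation argument on \emph{observations only}, i.e. couple so that on $\eventLB_2^{(k)}$ the trajectories are identical; then $|\Pr_\instance[\eventLB^{(k)}] - \Pr_{\instance^{(k)}}[\eventLB^{(k)}]|$ is bounded by the TV distance of the trajectories \emph{conditioned on} the coupling succeeding plus the failure probability; the cleanest is to note that $\eventLB^{(k)} \subseteq \eventLB_2^{(k)}$ and on $\eventLB_2^{(k)}$ at most $\tfrac{48\bT}{N}$ offers occur, so the relevant KL is bounded \emph{on that event} by $\tfrac{48\bT}{N} \cdot O(N/\bT)$ --- one restricts the chain rule to the stopped process.

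\textbf{Main obstacle.} The crux is the clean reduction from "difference of event probabilities under two instances" to "bounded KL of a stopped observation process" while correctly handling (i) the fact that $\eventLB_1^{(k)}$ reaches beyond the horizon $\bT$ over which we have the switching-cost control, and (ii) the fact that we only have a \emph{probability} bound $\Pr_\instance[\neg\eventLB_2^{(k)}] \le \tfrac{1}{12}$ rather than an \emph{expectation} bound on the number of offers of $k$. Both are handled by introducing an appropriate stopping time (stop the algorithm's interaction with item $k$ at its $\lceil 48\bT/N\rceil$-th offer) and applying the chain rule of KL divergence to the stopped process; the per-step KL computation for the MNL choice model with $K=1$ is then a routine Taylor expansion, and the numeric constants can be chosen so the final bound is $\tfrac{1}{24}$.
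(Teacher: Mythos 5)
Your proposal is correct and takes essentially the same route as the paper: Pinsker's inequality on the law of the observations, a per-step Bernoulli KL bound of order $\Delta^2 \lesssim N/\bT$ for the choice probability of item $k$, and the bound of $\tfrac{48\bT}{N}$ on the number of offers of item $k$. Your stopping-time/stopped-process formalization is just a more careful rendering of the step the paper states informally (``in event $\eventLB^{(k)}$ the number of times item $k$ is offered is at most $\tfrac{48\bT}{N}$, and the only information distinguishing the instances comes from offers of item $k$''), not a genuinely different argument.
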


% \begin{lemma}\label{lem:conditional prob difference}
% $\left| \Pr_{\instance}[\eventLB_1^{(k)} \given \eventLB_1^{(k)}]  - \Pr_{\instance^{(k)}}[\eventLB^{(k)} \given \eventLB_1^{(k)}] \right|
% \leq \frac{1}{24}.$ 
% \end{lemma}

Combining Lemma \ref{lem:prob difference} with inequality \eqref{eq:prob of k}, we have 
$$\Pr_{\instance^{(k)}}[\eventLB^{(k)}] \geq \frac{1}{24}.$$ 
Now, we lower bound the expected regret of the algorithm for instance $\instance^{(k)}$ at time horizon $ \bT^{1+\epsfunc{\hT}}$ as
\begin{align*}
\E_{\instance^{(k)}} \left[\regret_{\bT^{1 + \epsfunc{\hT}}}\right]
&\geq\E_{\instance^{(k)}} \left[\regret_{\bT^{1 + \epsfunc{\hT}}} \given \eventLB^{(k)}\right] \cdot \Pr_{\instance^{(k)}}[\eventLB^{(k)}] \\
&\geq (\bT^{1+\epsfunc{\hT}} - \bT) \cdot 
\frac{\frac{1}{16}\sqrt{\frac{N}{24\bT}}}
{\frac{3}{2}+\frac{1}{16}\sqrt{\frac{N}{24\bT}}}
\cdot \frac{1}{24} \\
&\geq \frac{1}{7525} \cdot \sqrt{N} \bT^{\frac{1}{2} + \epsfunc{\hT}}
\geq \frac{1}{7525} \cdot \sqrt{N}\bT^{(1 + \epsfunc{\hT})(\frac{1}{2} + \frac{\epsfunc{\hT}}{3})}, 
\end{align*}
for any $\epsfunc{\hT} \in \left[\frac{3}{\log_2 \hT}, \frac{1}{2}\right]$.
The third inequality holds because 
 $\frac{3}{2}+\frac{1}{16}\sqrt{\frac{N}{24\bT}} \leq 2$ and $\bT^{1+\epsfunc{\hT}} \geq T_0$,
and hence for $\epsfunc{\hT} \geq \frac{3}{\log_2 \hT}$, 
we have $\bT^{1+\epsfunc{\hT}} \geq \bT \cdot T_0^{\frac{\epsfunc{\hT}}{1+\epsfunc{\hT}}}
\geq 2 \bT$. 
Let $T = \bT^{1 + \epsfunc{\hT}} \in [T_0, \hT]$.
Since by assumption $\epsfunc{\cdot}$ is a non-increasing function when $T \geq T_0$, 
we have that
$\epsfunc{T} \geq \epsfunc{\hT}$, therefore
\begin{equation*}
\expect{\regret_T} 
\geq \frac{1}{7525} \cdot T^{\frac{1}{2} + \frac{\epsfunc{T}}{3}}. \qedhere
\end{equation*}
\end{proof}

Finally we need to prove Lemma \ref{lem:prob difference}. 
First we introduce the following theorem on bounding the difference of the probability for a certain event. 

\begin{theorem}[\cite{pinsker1960information}]
\label{thm:pinsker}
For any probability distribution $P, Q$ on measurable space $(X, \Sigma)$, for any event $\eventLB \in \Sigma$, 
we have 
\begin{equation*}
|P(\eventLB) - Q(\eventLB)| \leq \sqrt{\frac{1}{2}\KL(P || Q)},
\end{equation*}
where $\KL(P || Q)$ is the KL-divergence between distribution $P$ and $Q$. 
\end{theorem}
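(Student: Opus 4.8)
The plan is to prove Pinsker's inequality in its standard form by reducing the comparison on the arbitrary measurable space $(X,\Sigma)$ to a single two-point (Bernoulli) comparison, and then settling the resulting scalar inequality by elementary calculus. Throughout, write $p = P(\eventLB)$ and $q = Q(\eventLB)$, so that the quantity to be bounded is simply $|p-q|$. Squaring both sides, the goal becomes the equivalent statement $2(p-q)^2 \le \KL(P \| Q)$, and the claimed bound follows by taking square roots.

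First I would reduce to the Bernoulli case using the data-processing property of KL-divergence for the coarse partition $\{\eventLB, \eventLB^c\}$. Assuming $P$ and $Q$ admit densities with respect to a common dominating measure, the log-sum inequality $\sum_i a_i \ln(a_i/b_i) \ge \left(\sum_i a_i\right)\ln\!\left(\left(\sum_i a_i\right)/\left(\sum_i b_i\right)\right)$ (itself a consequence of the convexity of $t \mapsto t\ln t$ via Jensen), applied separately over $\eventLB$ and over $\eventLB^c$ and then summed, yields
\[
\KL(P \| Q) \;\ge\; p \ln\frac{p}{q} + (1-p)\ln\frac{1-p}{1-q} \;=:\; d(p \| q),
\]
the KL-divergence between the two Bernoulli laws with means $p$ and $q$. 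It therefore suffices to prove the scalar inequality $d(p \| q) \ge 2(p-q)^2$ for all $p,q \in [0,1]$.

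For the scalar step I would fix $p$ and analyze $f(q) = d(p \| q) - 2(p-q)^2$ on $q \in (0,1)$. A direct differentiation and algebraic simplification give the factorization
\[
f'(q) = -\frac{p}{q} + \frac{1-p}{1-q} + 4(p-q) = (q-p)\left(\frac{1}{q(1-q)} - 4\right).
\]
Since $q(1-q) \le 1/4$ on $[0,1]$, the bracketed factor is nonnegative, so $f'(q)$ has the same sign as $(q-p)$. Hence $f$ is nonincreasing on $(0,p]$ and nondecreasing on $[p,1)$, attaining its minimum at $q=p$; as $f(p)=0$, we conclude $f(q) \ge 0$ throughout, i.e. $d(p\|q) \ge 2(p-q)^2$. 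Chaining this with the reduction step gives $2(p-q)^2 \le \KL(P\|Q)$, completing the proof.

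Neither step presents a genuine difficulty, but the reduction is where the only real care is needed: one must handle the degenerate cases $q \in \{0,1\}$ separately (there $\KL(P\|Q)$ is $+\infty$ whenever $p \neq q$, so the inequality holds trivially) and justify the log-sum inequality cleanly in the general measure-theoretic setting. The scalar step is routine once the factorization of $f'(q)$ and the bound $q(1-q)\le 1/4$ are observed, after which monotonicity alone finishes it.
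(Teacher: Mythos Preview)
Your proof is correct and is the standard textbook argument for Pinsker's inequality: reduce to the two-point partition via the data-processing (log-sum) inequality, then verify the scalar bound $d(p\|q)\ge 2(p-q)^2$ by the one-variable calculus computation you carry out. The factorization of $f'(q)$ and the use of $q(1-q)\le 1/4$ are exactly right, and your handling of the boundary cases $q\in\{0,1\}$ is adequate.

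However, there is nothing to compare against: the paper does not supply its own proof of this statement. Theorem~\ref{thm:pinsker} is stated with a citation to \cite{pinsker1960information} and invoked as a black box in the proof of Lemma~\ref{lem:prob difference}. So your write-up is not an alternative to the paper's argument but rather a self-contained justification of a result the authors take as known.
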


\begin{lemma}\label{lem:KL-bernoulli}
The KL divergence between two Bernoulli distributions with $p_1 = \frac{1}{3} + \Delta$ and $p_2 = \frac{1}{3}$ is 
\begin{align*}
\KL(p_1, p_2) \leq \frac{9\Delta^2}{2}
\end{align*}
\end{lemma}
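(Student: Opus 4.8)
The plan is to use the elementary inequality $\ln x \le x-1$, valid for all $x>0$, to upper bound the Kullback--Leibler divergence by a $\chi^2$-type quantity that has a closed form, and then substitute the specific value $p_2=\tfrac13$.

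First I would write the KL divergence between the two Bernoulli distributions explicitly as
\[
\KL(p_1,p_2)=p_1\ln\frac{p_1}{p_2}+(1-p_1)\ln\frac{1-p_1}{1-p_2}.
\]
Applying $\ln x\le x-1$ to each of the two logarithms, with $x=p_1/p_2$ in the first term and $x=(1-p_1)/(1-p_2)$ in the second (which is legitimate as upper bounds since $p_1,\,1-p_1\ge 0$), gives
\[
\KL(p_1,p_2)\le p_1\Big(\frac{p_1}{p_2}-1\Big)+(1-p_1)\Big(\frac{1-p_1}{1-p_2}-1\Big)=\frac{p_1^2}{p_2}+\frac{(1-p_1)^2}{1-p_2}-1.
\]

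Next I would simplify the right-hand side by writing $\delta=p_1-p_2$ and expanding: the linear terms in $\delta$ cancel, the constant terms contribute $p_2+(1-p_2)-1=0$, and the quadratic terms combine to $\delta^2\big(\tfrac{1}{p_2}+\tfrac{1}{1-p_2}\big)$, so that
\[
\KL(p_1,p_2)\le \frac{(p_1-p_2)^2}{p_2(1-p_2)}.
\]
Finally, substituting $p_2=\tfrac13$ (so $p_2(1-p_2)=\tfrac13\cdot\tfrac23=\tfrac29$) and $p_1-p_2=\Delta$ yields $\KL(p_1,p_2)\le \tfrac{\Delta^2}{2/9}=\tfrac{9\Delta^2}{2}$, as claimed.

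There is essentially no obstacle here: the computation is routine, and no restriction on the magnitude of $\Delta$ beyond $p_1\in(0,1)$ is needed because $\ln x\le x-1$ is a global inequality. The only point requiring a moment of care is keeping track of the direction of the inequality when it is applied to each of the two (nonnegative-weighted) logarithmic terms.
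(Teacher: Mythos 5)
Your proof is correct and is essentially the same argument as the paper's: the paper substitutes $p_2=\tfrac13$ first and applies $\ln(1+x)\le x$ to each logarithm, which is exactly your use of $\ln x\le x-1$ before specializing, so both reduce to the same $\chi^2$-type bound $\frac{(p_1-p_2)^2}{p_2(1-p_2)}=\frac{9\Delta^2}{2}$. The only (cosmetic) difference is that you keep $p_2$ general until the last step, which slightly cleanly isolates where the constant $\tfrac92$ comes from.
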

\begin{proof}
The KL-divergence between two Bernoulli distributions with parameters $p_1, p_2$ is 
\begin{equation*}
\KL(p_1, p_2) = p_1 \ln \frac{p_1}{p_2} + (1-p_1)\ln \frac{1-p_1}{1-p_2}
\end{equation*}
Substituting $p_1 = \frac{1}{3} + \Delta$
and $p_2$ = $\frac{1}{3}$, 
we have 
\begin{align*}
\KL(p_1, p_2) = 
\left(\frac{1}{3} + \Delta\right) 
\ln\left(1 + 3\Delta \right)  
+ \left(\frac{2}{3} - \Delta \right) 
\ln \left(1 - \frac{3\Delta}{2} \right)
\leq \frac{9\Delta^2}{2}
\end{align*}
where the last inequality holds by $\ln(1+x) \leq x$. 
\end{proof}

\begin{proof}[Proof of Lemma \ref{lem:prob difference}]
Note that in our construction, the choice distribution at each time $t$ is a Bernoulli distribution. 
More specifically, under instance $\instance$, when item $k$ is offered to the customer, 
the probability she chooses to purchase item $k$ is $p_2 = \frac{\frac{1}{2}}{1+\frac{1}{2}} = \frac{1}{3}$,
while under instance $\instance^{(k)}$, when item $k$ is offered to the customer, 
the probability she chooses to purchase item $k$ is 
\begin{align}\label{eq:bound delta}
p_1 = \frac{\frac{1}{2}+\frac{1}{16}\sqrt{\frac{N}{24\bT}}}
{\frac{3}{2}+\frac{1}{16}\sqrt{\frac{N}{24\bT}}}
= \frac{1}{3} + \frac{\frac{1}{16}\sqrt{\frac{N}{24\bT}}}
{\frac{3}{2}+\frac{1}{16}\sqrt{\frac{N}{24\bT}}}
\leq \frac{1}{3} + \frac{1}{24}\sqrt{\frac{N}{24\bT}}.
\end{align}

In event $\eventLB^{(k)}$, the number of times item $k$ is offered is at most $\frac{48\bT}{N}$. 
The total information available to the algorithm is the set of choice distributions observed for item $k$ 
since the choice distributions for other items are the same. 
Therefore, combining \Cref{thm:pinsker}, Lemma \ref{lem:KL-bernoulli} 
and inequality \eqref{eq:bound delta}, 
we have
% Thus, the KL-divergence for choice distribution from time interval $[1, \bT]$ is at most $\bT \cdot \KL(p_1, p_2)$. 
% By  \Cref{thm:pinsker}, 
\begin{equation*}
\left| \Pr_{\instance}[\eventLB^{(k)}]  - \Pr_{\instance^{(k)}}[\eventLB^{(k)}] \right|
\leq \sqrt{\frac{1}{2} \cdot \frac{48\bT}{N} \cdot \KL(p_1, p_2)} \leq \frac{1}{24}. \qedhere
\end{equation*} 
\end{proof}

\subsection{Proof of \Cref{thm:lb-AS-loglogT}}
\label{app:lb-AS-loglogT}
The proof of \Cref{thm:lb-AS-loglogT} is similar to that of 
\Cref{thm:lb-AS-logT} except for that we divide the time periods with a different scheme.  It suffices to prove the following theorem in order to establish \Cref{thm:lb-AS-loglogT}.

\begin{theorem}\label{thm:lower known horizon}
For any $N\geq2$, $T \geq 4$, and  $M \leq \log_2\log_2 T$,  
we have that for any algorithm such that the expected number assortment switches before time horizon $T$ is
$\expect{\AS_T} \leq \frac{NM}{8}$, 
there exists an $N$-item assortment instance $\instance$  such that 
 the expected regret of the algorithm for instance $\instance$  at time horizon $T$ is 
$$\expect{\regret_T} \geq \frac{1}{7525} \cdot 
\sqrt{N}T^{\frac{1}{2(1-2^{-M})}}.$$
\end{theorem}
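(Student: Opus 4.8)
The plan is to follow the proof of \Cref{thm:lower} closely, replacing the single ``slow'' multiplicative window used there by a hierarchy of $M$ geometrically spaced time scales, chosen so that just one bad scale already forces the claimed regret. Fix the base instance $\instance=(\bm v,\bm r)$ with $v_i=\tfrac12$ and $r_i=1$ for all $i\in[N]$, and with capacity $K=1$. Set $\gamma\defeq\frac{1}{1-2^{-M}}$ (so $\gamma/2$ is the target exponent of $T$), and define the grid $u_j\defeq\lceil T^{\gamma(1-2^{-j})}\rceil$ for $j=0,1,\dots,M$, so that $u_0=1$ and $u_M=T$. The one structural fact I would record first is that $u_j/u_{j-1}\ge T^{\gamma 2^{-j}}\ge T^{2^{-M}/(1-2^{-M})}\ge T^{1/\log_2 T}=2$ for every $1\le j\le M$, where the last step uses $2^M\le\log_2 T$; this is the only place the hypothesis $M\le\log_2\log_2 T$ enters, and it guarantees $u_j-u_{j-1}\ge u_j/2$. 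Write $J_j\defeq(u_{j-1},u_j]$.

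Next I would run the counting argument. Suppose, toward the contrapositive, that $\expect{\AS_T}\le\frac{NM}{8}$. Since the $M$ intervals $J_1,\dots,J_M$ partition $[1,T]$, some scale $j^\star$ has expected number of switches within $J_{j^\star}$ at most $\frac N8$. Because $K=1$, the number of distinct items offered inside $J_{j^\star}$ is at most one plus the number of switches there, so $\sum_i\Pr_{\instance}[i\text{ offered in }J_{j^\star}]\le\frac N8+1\le\frac{5N}8$, whence a set $I$ of at least $N/4$ items has $\Pr_{\instance}[i\text{ offered in }J_{j^\star}]\le\frac56$. Also, $K=1$ gives $\sum_i\E_{\instance}[\text{offerings of }i\text{ in }[1,u_{j^\star-1}]]\le u_{j^\star-1}$, so by Markov and averaging over $I$ there is $k\in I$ with $\Pr_{\instance}[k\text{ offered more than }48u_{j^\star-1}/N\text{ times in }[1,u_{j^\star-1}]]\le\frac1{12}$. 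Let $\eventLB^{(k)}$ be the event (under $\instance$) that $k$ is never offered during $J_{j^\star}$ and is offered at most $48u_{j^\star-1}/N$ times during $[1,u_{j^\star-1}]$; then $\Pr_{\instance}[\eventLB^{(k)}]\ge1-\tfrac56-\tfrac1{12}=\tfrac1{12}$, and on this event $k$ is offered at most $48u_{j^\star-1}/N$ times in all of $[1,u_{j^\star}]$.

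Then comes the perturbation-and-indistinguishability step, identical in spirit to \Cref{lem:prob difference}. Let $\instance^{(k)}$ agree with $\instance$ except that $v^{(k)}_k=\tfrac12+\epsilon$ with $\epsilon$ a small constant times $\sqrt{N/u_{j^\star-1}}$ (capped at $\tfrac12$). Only offerings of item $k$ carry information distinguishing the two instances; restricting to the transcript up to time $u_{j^\star}$, using the standard reduction to an algorithm that offers $k$ at most $48u_{j^\star-1}/N$ times, and invoking Pinsker's inequality with \Cref{lem:KL-bernoulli}, I would get $\bigl|\Pr_{\instance}[\eventLB^{(k)}]-\Pr_{\instance^{(k)}}[\eventLB^{(k)}]\bigr|\le\tfrac1{24}$, hence $\Pr_{\instance^{(k)}}[\eventLB^{(k)}]\ge\tfrac1{24}$. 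Under $\instance^{(k)}$ the optimal assortment is $\{k\}$, and offering any $|S|\le1$ with $S\ne\{k\}$ costs per-step regret at least $\tfrac{2\epsilon}{9/2+3\epsilon}\ge\tfrac\epsilon3$. Since regret is monotone in the horizon, on $\eventLB^{(k)}$ item $k$ contributes nothing during $J_{j^\star}$ and
\begin{align*}
\E_{\instance^{(k)}}[\regret_T]
\ \ge\ \E_{\instance^{(k)}}[\regret_{u_{j^\star}}]
\ \gtrsim\ \Pr_{\instance^{(k)}}[\eventLB^{(k)}]\cdot(u_{j^\star}-u_{j^\star-1})\cdot\epsilon
\ \gtrsim\ \sqrt N\cdot\frac{u_{j^\star}}{\sqrt{u_{j^\star-1}}}.
\end{align*}
Finally I would substitute $u_j=\lceil T^{\gamma(1-2^{-j})}\rceil$: the exponent of $T$ in $u_{j^\star}/\sqrt{u_{j^\star-1}}$ is $\gamma\bigl[(1-2^{-j^\star})-\tfrac12(1-2^{-j^\star+1})\bigr]=\gamma/2$ for every $j^\star$, giving $\E_{\instance^{(k)}}[\regret_T]\gtrsim\sqrt N\,T^{\gamma/2}=\sqrt N\,T^{\frac1{2(1-2^{-M})}}$; carrying the explicit constants through (the $\tfrac16$, $\tfrac1{12}$, $\tfrac1{24}$ above, plus the Pinsker constant) recovers the factor $\tfrac1{7525}$. \Cref{thm:lb-AS-loglogT} then follows by choosing $M=\Theta(\log_2\log_2 T)$ as in the derivation of \Cref{thm:lb-AS-logT} from \Cref{thm:lower}.

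The hard part is not any individual estimate but the design of the grid and the verification that it telescopes \emph{uniformly in $j^\star$}: in $\log_T$-coordinates the balanced recursion is $w_j=\tfrac12 w_{j-1}+c$, and the boundary conditions $w_0=0$, $w_M=1$ pin down $c=\gamma/2$, while the requirement $u_j\ge 2u_{j-1}$ (needed to pass from $u_{j^\star}-u_{j^\star-1}$ to $\Theta(u_{j^\star})$) is exactly what forces $M\le\log_2\log_2 T$. A secondary subtlety is the regime where $u_{j^\star-1}$ is only $O(N)$: there $\epsilon$ saturates at a constant, the indistinguishability becomes easier, but the resulting bound matches the claimed $\sqrt N\,T^{1/(2(1-2^{-M}))}$ only once $T$ is large enough relative to $N$ (the analogue of the hypothesis $T_0\ge N$ in \Cref{thm:lb-AS-logT}), so this boundary case should be recorded with that proviso.
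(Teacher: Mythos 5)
Your proposal is correct and follows essentially the same route as the paper's own proof: the same geometric grid $T^{(1-2^{-j})/(1-2^{-M})}$, the same pigeonhole over the $M$ intervals to find a scale with at most $N/8$ expected switches, the same two events (item $k$ unoffered in the chosen interval and offered at most $48\,u_{j^\star-1}/N$ times before it), the same perturbation of $v_k$ by $\Theta(\sqrt{N/u_{j^\star-1}})$, and the same Pinsker/KL indistinguishability step, with the exponent telescoping to $\frac{1}{2(1-2^{-M})}$ exactly as in the paper. Your added remarks (the ratio condition $u_j\ge 2u_{j-1}$ forced by $M\le\log_2\log_2 T$, and the boundary regime where the perturbation saturates for small $u_{j^\star-1}$ relative to $N$) are refinements of bookkeeping the paper treats loosely, not a different argument.
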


Before proving \Cref{thm:lower known horizon}, we first prove \Cref{thm:lb-AS-loglogT} using \Cref{thm:lower known horizon}. 
\begin{proof}[Proof of \Cref{thm:lb-AS-loglogT}]
We set $M = \lfloor \log_2(\frac{\log_2 T}{2C\log_2 \ln(NT)}) \rfloor$. 
It is easy to verify that $M$ is at most $\log_2\log_2 T$ 
for $T$ larger than a universal constant that depends on $C$. 
Now invoke \Cref{thm:lower known horizon}, and we have that for any algorithm, there exists an $N$-item assortment instance $\instance$  such that either $\expect{\regret_T} \geq\frac{1}{7525} \cdot \sqrt{NT} (\ln (NT))^C$ or 
\[
\expect{\AS_T} = \Omega\left(\frac{NM}{8}\right) = \Omega\left(N \log\log T \right) ,
\]
proving  \Cref{thm:lb-AS-loglogT}.
\end{proof}

\newcommand{\Tstage}[1]{T_{(#1)}}
\begin{proof}[Proof of \Cref{thm:lower known horizon}]
Suppose that the expected number of assortment switches by the given policy for any input instance is at most 
$\frac{NM}{8}$ before time horizon $T$, 
we will prove the theorem by showing that there exists an instance such that the expected regret incurred by the algorithm is at least $ \frac{1}{7525} \cdot 
\sqrt{N}T^{\frac{1}{2(1-2^{-M})}}$.

Consider the assortment instance $\instance = (\bm{v}, \bm{r})$, 
where $v_i = \frac{1}{2}$ and $r_i = 1$ for any $i\in [N]$.
We will let the capacity constraint  be $K=1$ for all assortment instances considered in this proof. 
By the assumption of the algorithm, the expected number of assortment switches given input instance $\instance$ is at most $\frac{M}{8}$. 
For any $j \leq M$, 
we define 
\begin{align*}
\Tstage{j} = T^{\frac{1-2^{-j}}{1-2^{-M}}}. 
\end{align*}
By definition, we have that $\Tstage{M} = T$. 
Therefore, there exists  $j$ such that $0\leq j\leq M-1$ and the expected number of assortment switches in time interval $[\Tstage{j}, \Tstage{j+1}]$ is at most $\frac{N}{8}$
since there are $M$
such disjoint intervals in range $[1, T]$.  
Let 
\begin{align*}
\eventKnown_1^{(i)} = \{ \text{item } i \text{ is not offered in time interval } [\Tstage{j}, \Tstage{j+1}] \text{ given instance } \instance\}.
\end{align*} 
Note that 
$\sum_i \Pr_{\instance}[\neg\eventKnown_1^{(i)}] \leq \frac{N}{8}+1 \leq \frac{5N}{8}$ for any $N\geq 2$, 
because the expected number of items get offered during time interval $[\Tstage{j}, \Tstage{j+1}]$ is at most the expected number of assortment switches plus~1. 
Therefore, by an averaging argument, we have that there exists a set of items $I \subseteq [N]$
such that $|I| \geq \frac{N}{4}$ 
and for any item $i\in I$, $\Pr_{\instance}[\neg\eventKnown_1^{(i)}] \leq \frac{5}{6}$.
% Hence for any item $i\in I$, 
% \begin{align}\label{eq:prob i}
% \Pr[\eventKnown_1^{(i)}] \geq \frac{1}{6}.
% \end{align} 
Define the following event
\begin{align*}
\eventKnown_2^{(i)} = \{\text{the number of times that item } i \text{ is offered in } [1, \Tstage{j}] \text{ given instance } \instance \text{ is at most } \frac{48\Tstage{j}}{N}\}.
\end{align*} 
Note that $T_1$ is at least the expected number of times an item $i\in I$ is chosen between $[1, T_1]$, 
which implies $\Tstage{j} \geq \frac{48\Tstage{j}}{N}\cdot \sum_{i\in I} \Pr_{\instance}[\neg\eventKnown_2^{(i)}]$. 
Thus there exists $k\in I$ such that $\Pr_{\instance}[\neg\eventKnown_2^{(k)}] \leq \frac{1}{12}$ since $|I| \geq \frac{N}{4}$. 
Let $\eventKnown^{(k)} = \eventKnown_1^{(k)} \cap \eventKnown_2^{(k)}$, 
we have that
\begin{align}\label{eq:prob of k known horizon}
\Pr_{\instance}[\eventKnown^{(k)}] \geq 1 - \Pr_{\instance}[\neg\eventKnown_1^{(k)}] - \Pr_{\instance}[\neg\eventKnown_2^{(k)}] \geq \frac{1}{12}.
\end{align}

Now we consider the assortment instance $\instance^{(k)} = (\bm{v}^{(k)}, \bm{r})$ 
where $v^{(k)}_{k} = \frac{1}{2} 
+ \frac{1}{16}\sqrt{\frac{N}{24\Tstage{j}}}$ 
and $v^{(k)}_j = \frac{1}{2}$ for $j\neq k$. 
Using the same proof of \Cref{lem:prob difference}, we have that
\[
\left| \Pr_{\instance}[\eventKnown^{(k)}]  - \Pr_{\instance^{(k)}}[\eventKnown^{(k)}] \right|
\leq \frac{1}{24},
\]
and combining it with inequality \eqref{eq:prob of k known horizon}, we have that
$$\Pr_{\instance^{(k)}}[\eventKnown^{(k)}] \geq \frac{1}{24}.$$ 
Now, we lower bound the expected regret of the algorithm for instance $\instance^{(k)}$ as
\begin{align*}
\E_{\instance^{(k)}} \left[\regret_{T}\right]
&\geq\E_{\instance^{(k)}} \left[\regret_{T} \given \eventKnown^{(k)}\right] \cdot \Pr_{\instance^{(k)}}[\eventKnown^{(k)}] \\
&\geq (\Tstage{j+1} - \Tstage{j}) \cdot 
\frac{\frac{1}{16}\sqrt{\frac{N}{24\Tstage{j}}}}
{\frac{3}{2}+\frac{1}{16}\sqrt{\frac{N}{24\Tstage{j}}}}
\cdot \frac{1}{24} \\
&\geq \frac{1}{7525} \cdot \Tstage{j+1} \cdot \sqrt{\frac{N}{\Tstage{j}}}
\geq \frac{1}{7525} \cdot \sqrt{N}T^{\frac{1}{2(1-2^{-M})}}, 
\end{align*}
The third inequality holds because 
$\frac{3}{2}+\frac{1}{16}\sqrt{\frac{N}{24\Tstage{j}}} \leq 2$ 
and for $j\leq M-1$, $M\leq \log_2\log_2 T$, 
we have that
\begin{equation*}
\Tstage{j+1}
= T^{\frac{1-2^{-j-1}}{1-2^{-M}}} 
\geq T^{\frac{1-2^{-j}}{1-2^{-M}}} \cdot T^{\frac{2^{-j-1}}{1-2^{-M}}}
\geq T^{\frac{1-2^{-j}}{1-2^{-M}}} \cdot T^{\frac{2^{-M}}{1-2^{-M}}}
\geq 2 T^{\frac{1-2^{-j}}{1-2^{-M}}}
= 2\Tstage{j}. \qedhere
\end{equation*}
\end{proof}

\section{$N\log T$ item switch bound for ESUCB}\label{app:ESUCB-further-improve}
In this section we show that a modification of ESUCB algorithm achieves $O(N \log T)$ item switches.

The modification is to use variables $T_i$ and $n_i$ without initializing in each $\chk(\theta_l,\theta_r,\tmax)$ sub-routine. That is, move the $T_i\gets 0, n_i\gets 0$ statement to the initialize phase of Algorithm~\ref{alg:trisection}. Note that $n_i/T_i$ is still an unbiased estimation of $v_i$, and only concentrates better. As a result, the regret analysis applies directly.

Regarding the number of item switches, since the value of $T_i$ and $n_i$ are not initialized in $\chk$ procedure, number of updates in value $\hat{v}_i$ is bounded by $\log T$ during the execution of ESUCB algorithm, instead of $\log^2 T$ when initialization is executed in $\chk$. Therefore we can give a better upper bound on the item switch of ESUCB algorithm. The following theorem shows the item switch bound of modified ESUCB algorithm.
\begin{theorem}
The number of item switches incurred by ESUCB algorithm is bounded by $O(N\log T).$
\end{theorem}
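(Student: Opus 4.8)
The plan is to revisit the argument behind Lemma~\ref{lem:item-switch-check} and observe that, with this modification, the savings from the deferred update rule now accumulate over the \emph{entire} run of ESUCB rather than separately inside each call to $\chk$. Concretely, I would first isolate three sources of item switches. Inside a single invocation $\chk(\theta_l,\theta_r,\tmax)$, the flag $b$ flips from $\false$ to $\true$ at most once, so apart from a single transition (costing at most $2K$ item switches) the offered assortment is always $S_\ell=\arg\max_{|S|\le K}\sum_{i\in S}\hv_i(r_i-\theta)$ for a \emph{fixed} $\theta\in\{\theta_l,\theta_r\}$. Likewise, each of the boundaries between two consecutive $\chk$ calls — including the re-initialization $\hv_i\gets 1$ performed at the start of a call — contributes at most $2K$ item switches. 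The remaining switches are those between consecutive epochs $\ell,\ell+1$ lying in the same fixed-$\theta$ phase, and for these I would re-use verbatim the key claim from the proof of Lemma~\ref{lem:item-switch-check}: because $S_\ell$ consists of the items with the $K$ largest positive values $\hv_i(r_i-\theta)$ under the deterministic tie-breaking rule, a single update to some $\hv_i$ changes $S_\ell$ by $O(1)$ items, so $|S_\ell\oplus S_{\ell+1}|=O(n_\ell)$ where $n_\ell$ is the number of UCB values updated right after epoch $\ell$.

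Next I would bound $\sum_\ell n_\ell$ over all epochs of the whole execution. This is exactly where the modification pays off: since $T_i$ and $n_i$ are now global counters that are never reset, for each fixed item $i$ the counter $T_i$ increases monotonically from $0$ to at most $T$, so the test ``$T_i=2^k$ for some $k\in\Z$'' can succeed at most $\lceil\log_2 T\rceil+1$ times over the entire algorithm. Summing over $i\in[N]$ gives $\sum_\ell n_\ell\le N(\lceil\log_2 T\rceil+1)=O(N\log T)$, in contrast with the $O(N\log^2 T)$ bound one obtains by multiplying the per-call bound of Lemma~\ref{lem:item-switch-check} by the number of calls. Then I would note that $\epsilon_\tau$ decays geometrically and the $\tau$-loop stops once $\tmax=c_1N\ln^3(NT/\delta)/\epsilon_\tau^2$ exceeds $T$, so there are only $\tau_{\max}=O(\log T)$ invocations of $\chk$; together with $K\le N$, the two ``$2K$ per transition'' contributions sum to $O(N\log T)$. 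Adding the three parts gives $\IS_T=O(N\log T)$. I would also briefly remark that the regret guarantee is untouched, since $n_i/T_i$ with the global counters is still an unbiased (and only more concentrated) estimate of $v_i$, so Lemma~\ref{lem:ucb-value-check} through Lemma~\ref{lem:trisection-regret} carry over unchanged.

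The step I expect to require the most care is the bookkeeping for the transitions: one must verify that the $\hv_i\gets 1$ re-initialization at the start of each $\chk$ call and the single $b$-flip inside a call are each absorbed into an $O(\log T)$-sized collection of ``bad epochs'' and do not corrupt the per-epoch count $n_\ell$ used for the fixed-$\theta$ phases — but once the three sources of item switches are cleanly separated this is routine. (If one additionally drops the $\hv_i\gets 1$ re-initialization from $\chk$ as well, the boundary term becomes even cleaner, but this is not needed for the stated $O(N\log T)$ bound.)
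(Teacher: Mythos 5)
Your proposal is correct and follows essentially the same route as the paper's proof: charge $O(K)\le O(N)$ item switches to each of the $O(\log T)$ changes of the fixed parameter $\theta$ (the single $b$-flip per call and the call boundaries), and bound the remaining within-phase switches by the total number of deferred UCB updates, which is $O(N\log T)$ globally because the counters $T_i$ are never reset. Your extra bookkeeping for the $\hv_i\gets 1$ re-initialization is a minor refinement the paper leaves implicit in the call-boundary charge.
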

\begin{proof}
Recall that $S_\ell$ is calculated by $S_\ell=\arg\max_{S\in [N],|S|\le K}\left(\sum_{i\in S}\hat{v}_i(r_i-\theta)\right)$ for some $\theta$ (Line~\ref{line:erm-with-fixed-theta-0} and Line~\ref{line:erm-with-fixed-theta} of Algorithm~\ref{alg:check}). Observe that the value of $b$ in Algorithm~\ref{alg:check} can only be switched once in an invocation. Therefore the number of switches in value $\theta$ is upper bounded by $O(\log T).$ The item number of item switch introduced by the change of $\theta$ is then bounded by $O(N\log T).$ Now, consider an consecutive time steps where $\theta$ is unchanged. We only need to show that for fixed any $\theta$, and $S_{\ell}' = \mathop{\arg\max}_{S \subseteq [N], |S| \leq K}\left(\sum_{i\in S}\hv_i(r_i-\theta)\right)$, it holds that (assuming that there are $L$ epochs)
\begin{align}\label{eq:lem-item-switch-check-goal-app}
%\textstyle
\sum_{\ell = 1}^{L - 1} |S_{\ell}' \oplus S_{\ell + 1}'| \lesssim N \log T .
\end{align}
Suppose that there are $n_{\ell}$ items whose UCB values are updated after the $\ell$-th epoch. We claim that $|S_{\ell} \oplus S_{\ell + 1}| \leq n_{\ell}$. This is simply because $S_{\ell}$ corresponds to the items $i \in [N]$ such that $\htv_i (r_i - \theta)$ is positive and among the $K$ largest ones (and thanks to the tie breaking rule). Therefore, any update to a single $\htv_i$ will incur at most one item switch to $S_{\ell}$, and $n_{\ell}$ updates will incur at most $n_{\ell}$ item switches. Now, \eqref{eq:lem-item-switch-check-goal-app} is established because
$\sum_{\ell = 1}^{L - 1} |S_{\ell}' \oplus S_{\ell + 1}'|  \leq \sum_{\ell = 1}^{L-1} n_\ell \lesssim N \log T$,
where the second inequality is due to the deferred update rule for the UCB values.
\end{proof}

\end{document}

% This document was modified from the file originally made available by
% Pat Langley and Andrea Danyluk for ICML-2K. This version was created
% by Iain Murray in 2018, and modified by Alexandre Bouchard in
% 2019. Previous contributors include Dan Roy, Lise Getoor and Tobias
% Scheffer, which was slightly modified from the 2010 version by
% Thorsten Joachims & Johannes Fuernkranz, slightly modified from the
% 2009 version by Kiri Wagstaff and Sam Roweis's 2008 version, which is
% slightly modified from Prasad Tadepalli's 2007 version which is a
% lightly changed version of the previous year's version by Andrew
% Moore, which was in turn edited from those of Kristian Kersting and
% Codrina Lauth. Alex Smola contributed to the algorithmic style files.